\newcommand{\bb}{\mathbb}
\newcommand{\mb}{\mathbf}
\newcommand{\mc}{\mathcal}
\newcommand{\mf}{\mathfrak}
\newtheorem{theorem}{Theorem}[section]
\newtheorem{lemma}[theorem]{Lemma}
\newtheorem{definition}[theorem]{Definition}
\renewcommand{\Re}{\mathbb R}
\newcommand{\conv}{\mathrm{conv}}
\newcommand{\eps}{\varepsilon}
\newcommand{\supp}[1]{\mathrm{supp}\left(#1\right)}
\newcommand{\<}{\langle}
\renewcommand{\>}{\rangle}
\newcommand{\aff}{\mathrm{aff}}
\newcommand{\relint}[1]{\mathrm{rel int}\left(#1\right)}
\renewcommand{\mathbf}{\boldsymbol}
\renewcommand{\dim}[1]{\mathrm{dim}(#1)}
\newcommand{\norm}[2]{\left\| #1 \right\|_{#2}}
\newcommand{\innerprod}[2]{\left\< #1, #2 \right\>}
\newcommand{\magnitude}[1]{\left|#1\right|}
\newcommand{\set}[1]{\left\{ #1 \right\}}
\newcommand{\objbdy}{\partial \mc O}
\newcommand{\obj}{\mc O}
\newcommand{\sphere}{\mathbb{S}}
\newcommand{\reals}{\mathbb{R}}
\newcommand{\relbdy}[1]{\mathrm{relbdy}(#1)}
\newcommand{\cone}[1]{\mathrm{cone}\left( #1 \right)}
\renewcommand{\magnitude}[1]{\left| #1 \right|}
\newcommand{\length}[1]{\mathrm{length}\left( #1 \right) }
\renewcommand{\set}[1]{\left\{ #1 \right\}}
\newcommand{\area}[1]{\mathrm{area}(#1)}
\newcommand{\cl}[1]{\mathrm{cl}\left( #1 \right)}
\newcommand{\volume}[1]{\mathrm{vol}\left( #1 \right) }
\newcommand{\ball}[2]{\mf B(#1,#2)}
\renewcommand{\(}{\begin{equation}}
\renewcommand{\)}{\end{equation}}
\newcommand{\diameter}[1]{\mathrm{diam}\left(#1\right)}
\newcommand{\indicator}[1]{\mathbbm{1}_{#1}}
\newcommand{\direct}[1]{\mc D\left[ #1 \right] }
\newcommand{\relativeinterior}[1]{\mathrm{relint}\left( #1 \right)}
\newcommand{\diam}[1]{\mathrm{diam}\left(#1\right)}
\newcommand{\bmb}[1]{\breve{\mb{#1}}}
\newcommand{\innerproduct}{\innerprod}
\newcommand{\shadowboundary}[1]{\partial S\left[ #1 \right] }
\newcommand{\barmb}[1]{{\bar{\mb#1}}}
\renewcommand{\aff}[1]{\mathrm{aff}\left( #1 \right)}
\numberwithin{equation}{section}
\begin{document}

\title{Toward Guaranteed Illumination Models \\ for Non-Convex Objects}
\author{Yuqian Zhang$^1$, Cun Mu$^2$, Han-wen Kuo$^1$, John Wright$^1$ \vspace{.1in} \\ $^1$Department of Electrical Engineering, Columbia University \\ $^2$Department of Industrial Engineering and Operations Research, Columbia University}
\maketitle

\begin{abstract}
Illumination variation remains a central challenge in object detection and recognition. Existing analyses of illumination variation typically pertain to convex, Lambertian objects, and guarantee quality of approximation in an average case sense. We show that it is possible to build $\mc V$(vertex)-description convex cone models with worst-case performance guarantees, for nonconvex Lambertian objects. Namely, a natural verification test based on the angle to the constructed cone guarantees to accept any image which is sufficiently well-approximated by an image of the object under some admissible lighting condition, and guarantees to reject any image that does not have a sufficiently good approximation. The cone models are generated by sampling point illuminations with sufficient density, which follows from a new perturbation bound for point images in the Lambertian model. As the number of point images required for guaranteed verification may be large, we introduce a new formulation for {\em cone preserving dimensionality reduction}, which leverages tools from sparse and low-rank decomposition to reduce the complexity, while controlling the approximation error with respect to the original cone.
\end{abstract}

\section{Introduction}

Illumination variation remains a central challenge in object detection and recognition. Changes in lighting can dramatically change the appearance of the object, rendering simple pattern recognition techniques such as nearest neighbor ineffective. Various approaches have been proposed to mitigate this problem, for example, using nonlinear features based on gradient orientation \cite{Lowe2004-IJCV}, using quotient images \cite{Shashua2001-PAMI} or total variation regularization \cite{Chen2005}. These approaches are often effective in practice, but can break down under extreme illumination. Moreover, because of the nonlinearity of the feature extraction step, clearly characterizing their domain of applicability is challenging.

An alternative approach is to attempt to explicitly characterize the set of images of the object that can be generated under varying lighting. The seminal work \cite{Belhumeur1998-IJCV} argues that images of a given object with fixed pose and varying illumination should lie near a convex cone in the high-dimensional image space. This conic structure arises as a consequence of nonnegativity of light and linearity of light transport. Many subsequent works have attempted to capture the gross structure of this cone using low-dimensional convex cone or linear subspace models. Motivated by empirical evidence of low-dimensional linear structure in image sets taken under varying illumination (e.g., \cite{Epstein1995}), \cite{Basri2003-PAMI} and \cite{Ramamoorthi2002-PAMI} used an elegant interpretation of the Lambertian reflectance as spherical convolution to argue that for a {\em convex, Lambertian object}, a linear subspace of nine dimensions may suffice to capture most of the variance due to lighting. These models have been used for recognition in many subsequent works \cite{Georghiades2001-PAMI,Wang2009-PAMI,Wright2009-PAMI,Wagner2012-PAMI}, and have been extended in a number of directions \cite{Frolova04-ECCV,Ramamoorthi2005-PAMI}. The promise of subspace or cone models, compared to feature-based approaches described above, is that, by reasoning carefully about the image formation process, it might be possible to guarantee to well-approximate all images of the object under clearly delineated conditions.

It is worth asking, then, {\em what approximation guarantees do current results afford us?} For convex, Lambertian objects, it can be shown that for one or more uniformly random point sources, a nine dimensional spherical harmonic approximation captures {\em on average} about 98\% of the energy \cite{Basri2003-PAMI, Frolova04-ECCV}.
However, per discussion in \cite{Basri2003-PAMI}, low-dimensional linear models do not guarantee quality of approximation for arbitrary extreme illumination conditions. Moreover, for more general nonconvex objects, cast shadows bring in discontinuous changes in radiosity, which render spherical harmonic approximations ineffective \cite{Ramamoorthi2005-PAMI}. Strictly speaking, no rigorous guarantees on quality of approximation are currently known for general nonconvex objects.

In this work, we ask whether it is possible to build models for illumination variation with the following desirable characteristics:
\begin{enumerate}
\item[(i)] UNIFORM GUARANTEES: Guaranteed robustness to worst case lighting, over some clearly specified class of admissible lighting conditions.
\item[(ii)] NONCONVEXITY: Work even for nonconvex objects, with a representation complexity that is adaptive to the complexity of the object of interest.
\item[(iii)] EFFICIENCY: Low storage and computational complexity.
\end{enumerate}
We study these questions in the context of a model problem in object instance verification, in which one is given an object $\obj$ at a fixed pose, and ask whether the input image is an image of this object under some valid illumination condition. We develop rigorous guarantees for this problem, for general (including nonconvex) Lambertian objects. Our results show how to build a model that guarantees to accept every image that can be interpreted as an image of the object under some lighting condition, and to reject every image that is sufficiently dissimilar to all images of the object under valid lighting conditions.

Similar to \cite{Lee2005-PAMI, Mei2009-ICCV, Wagner2012-PAMI}, and many other previous works, we construct a $\mc V$-approximation to the illumination cone, which approximates this cone with the conic hull of a finite collection of images taken under point illuminations. Previous empirical work has suggested that the number of images required for an accurate representation can be large \cite{Mei2009-ICCV}. However, again, for this representation no quantitative results on quality of approximation are currently known. We start from the goal of building a provably correct algorithm for instance verification, and show that in this setting, this reduces to approximating the illumination cone in Hausdorff sense. We derive, in terms of the properties of the object and the scene, sufficient sampling densities for this goal to be met. Our bounds depend on properties of the scene and the object -- in particular, they depend on the level of ambient illumination, and a notion of {\em convexity defect}. They make precise the intuitions that (i) it is more difficult to operate in low-light scenarios, and (ii) nonconvex objects are more challenging than convex objects.

The number of images required to guarantee performance can be large. To address this problem, we introduce a new approach to {\em cone preserving complexity reduction}. This approach uses tools from convex programming -- in particular, sparse and low-rank decomposition \cite{Candes2011-JACM,Chandrasekharan2011-SJO} -- but introduces a new constrained formulation which guarantees that the conic hull of the output will well-approximate the conic hull of the input. The low-rank and sparse decomposition leverages our {\em qualitative} understanding of the physical properties of images (low-dimensionality, sparsity of cast shadows) \cite{Basri2003-PAMI,Ramamoorthi2002-PAMI,Wright2009-PAMI,Candes2011-JACM,Wu2010-ACCV}, while the constraint ensures that the output of this algorithm is {\em always} a good approximation to the target cone. Empirically, we find that the output is often of much lower complexity than the input. This suggests a methodology for building instance verifiers that are both robust to worst case illumination, and computationally efficient.

\section{Problem Formulation and Methodology} \label{sec:cone-app}

\paragraph{Cone Models for Illumination.}

We consider images of size $w \times h$, and let $m = wh$. Each image can be treated as a vector $\mb y \in \Re^m$. We are interested in the set of images of an object $\mc O$ that can be generated under distant illumination. These images form a subset $C_0 \subseteq \Re^m$. Each distant illumination can be identified with a nonnegative function $f : \bb S^2 \to \Re_+$, whose value $f(\mb u)$ is the intensity of light from direction $\mb u$. We use the notation $\mc F$ for the set of nonnegative, Riemann integrable functions on $\sphere^2$.\footnote{To be clear, we call $f$ Riemann integrable iff it is integrable in spherical coordinates: writing $W = [0,2 \pi] \times [0, \pi]$ and $\eta :  W \to \sphere^2$ via $\eta(\theta,\phi) = (\cos \theta \sin \phi, \sin \theta \sin \phi, \cos \phi)$, $f$ is Riemann integrable iff $f \circ \eta \sin \phi$ is Riemann integrable as a function on $W \subseteq \reals^2$. We let $\int_{\mb u} f(\mb u)  d \mb u = \int_W f \circ \eta \sin \phi d(\theta,\phi)$, where the right hand side is a Riemann integral. We reserve the related notation $\int f(\mb u) d \sigma(\mb u)$ for the (Lebesgue) integral with respect to the spherical measure. When $f \in \mc F$, these two integrals coincide.} Mathematically, $\mc F$ is a convex cone: sums of nonnegative, integrable functions are again nonnegative and integrable.

We assume a linear sensor response: the image is a linear function of the incident irradiance.\footnote{This model neglects saturation and quantization.} By linearity of light transport and linearity of the sensor response, the observed image $\mb y \in \Re^m$ is a linear function $\mb y[f]$ of the illumination $f$: if the object is subjected to the superposition $f = f_1 + f_2$ of two illuminations $f_1$ and $f_2$, we observe $\mb y[f_1+f_2] = \mb y[f_1] + \mb y[f_2]$. Since $f$ resides in the convex cone, the set $C_0 \doteq \mb y[ \mc F ] \subset \reals^m$ of possible images is also a convex cone. Note, however, that the fact that $C_0$ is a convex cone holds under very mild assumptions.

The detailed properties of $C_0$ were first studied in \cite{Belhumeur1998-IJCV}, and a great deal of subsequent work has been devoted to understanding its properties \cite{Ramamoorthi2002-PAMI,Basri2003-PAMI,Frolova04-ECCV}. Most of this body of work has been devoted to simple, analytically tractable models such as convex, Lambertian objects. As discussed above, for such simple models, interesting qualitative statements can be made about the gross shape of $C_0$.

The cone $C_0$ can be interpreted as the set of all images of the object under different distant lighting conditions. Intuitively speaking, we expect the problem of representing images $\mb y$ under different illuminations to be more challenging when the light has a stronger directional component. To capture the relative contribution of directional and ambient components of light, we introduce a family of function classes $\mc F_\alpha$, indexed by parameter $\alpha \in [0,\infty)$. Illuminations in $\mc F_\alpha$ consist of an ambient component $\alpha \omega$, where $\omega(\mb u) = 1 / \area{\sphere^2}$ is the constant function on the sphere, and an arbitrary (possibly directional) component $f_d$:
\(
\mc F_\alpha = \set{ f_d + \alpha \omega \mid f_d \in \mc F, \; \norm{f_d}{L_1}  \le 1   },
\)
For each ambient level $\alpha$, we have a cone
\(
C_\alpha \;\doteq\; \reals_+ \cdot \mb y[ \mc F_\alpha ].
\)
For any $\alpha \le \alpha'$, $C_{\alpha'} \subseteq C_{\alpha}$. In this sense, the choice of $\alpha$ induces a tradeoff: as $\alpha$ becomes smaller, $C_\alpha$ becomes more complicated to compute with, but can represent broader illumination conditions. Our complexity bounds in Section  \ref{sec:perturbation} will make this intuition precise. Figure \ref{fig:alpha} shows rendered images of a face under various ambient levels $\alpha \ge 0$. Our methodology is compatible with any choice of $\alpha > 0$.

\begin{figure}[t]
\hspace{10mm}
\begin{minipage}{3in}
\centerline{\includegraphics[width=3in]{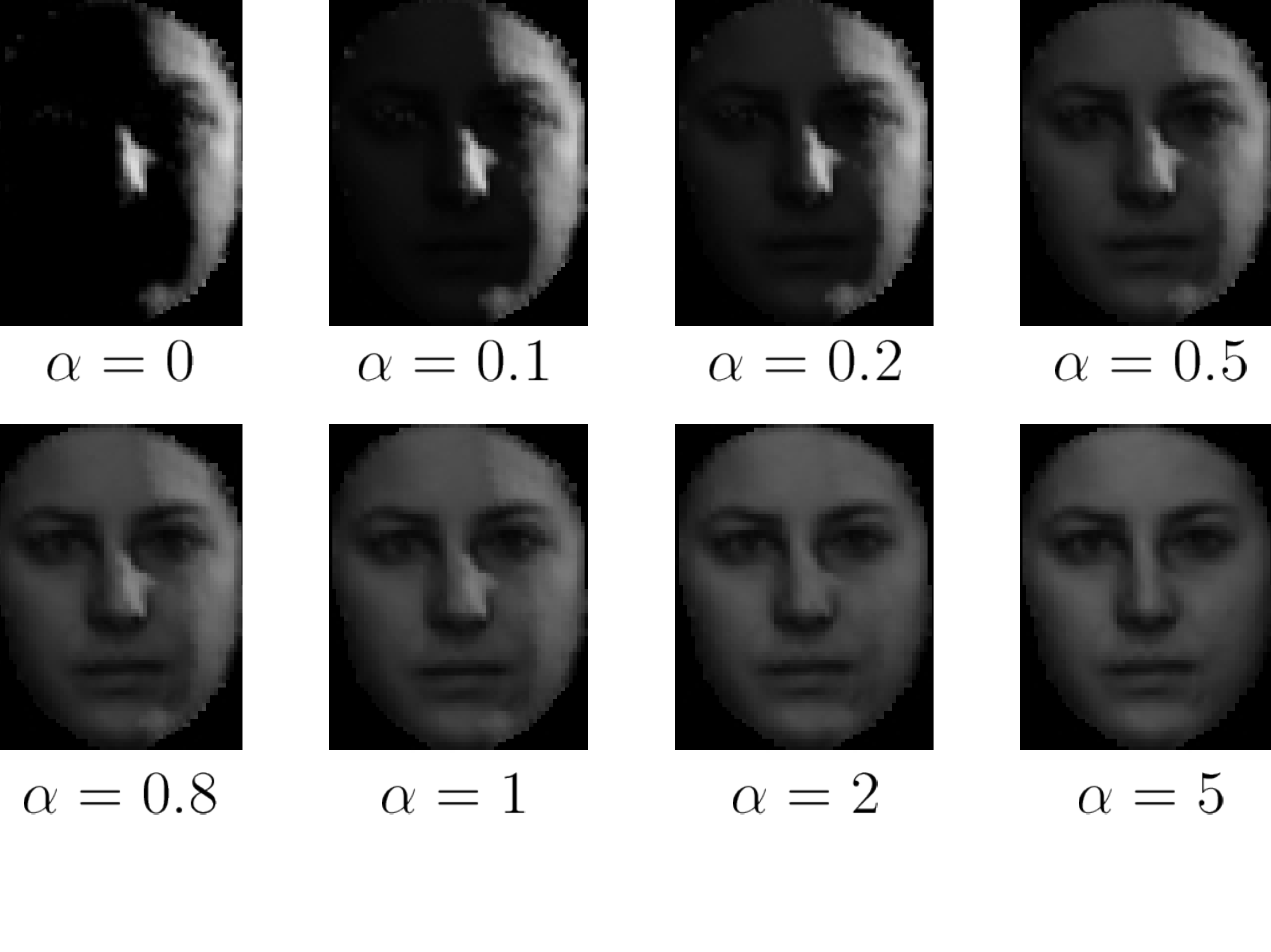}}
\vspace{-10mm}
\end{minipage}
\hspace{20mm}
\begin{minipage}{2in}
\centerline{\includegraphics[width=2.5in]{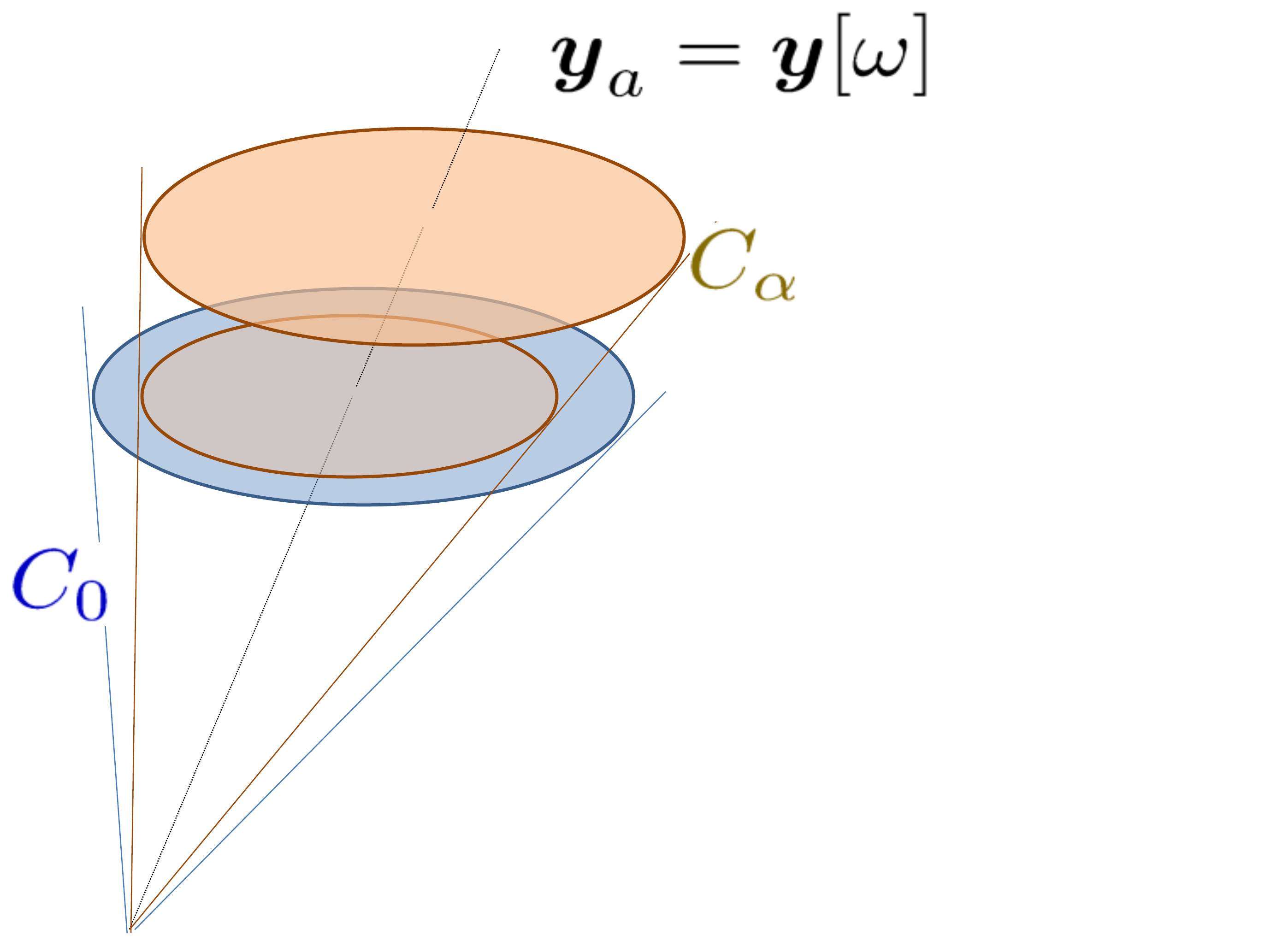}}
\end{minipage}
\caption{{\bf Ambient level $\alpha$.} Left: typical images from the cone $C_\alpha$, for ambient levels $\alpha = 0$ up to $\alpha = 5$. In each example $f_d$ is an extreme directional illumination. Images rendered from \cite{Bosphorus}. Right: illumination cones $C_\alpha$  with varying ambient level $\alpha$.} \label{fig:alpha}
\end{figure}

\paragraph{Verification using Convex Cones.} Our methodology asks the system designer to select a target level of ambient illumination $\alpha$, and hence choose a target cone $C = C_\alpha$. At test time, we are given a new input image $\mathbf{y}\in\Re^{m}$.
The verification problem asks us to decide if $\mathbf{y}$ could be
an image of object $\mathcal{O}$: {\em Is $\boldsymbol{y}$ an element
of $C$?} Real images contain noise and other imperfections. Hence,
in practice, it is more appropriate to ask whether $\mathbf{y}$ is
{\em sufficiently close} to $C$. The distance from $\boldsymbol{y}$ to
$C$ in $\ell^{2}$-norm is
\[
d \left(\boldsymbol{y},C \right) \doteq \inf \left\{ \|\boldsymbol{y}-\boldsymbol{y}'\|_{2} \mid \boldsymbol{y}'\in C \right\}.
\]
Any cone $C$ is nonnegatively homogeneous: if $\mathbf{z}\in C$,
$t\mathbf{z}\in C$ for all $t\ge0$. To obtain a criterion for verification which is scale invariant, rather than working directly with the distance $d(\mb y, C)$, we work with the angle
\[
\angle\left(\mathbf{y},C\right)\doteq\mathrm{asin}\left(\frac{d\left(\mathbf{y},C\right)}{\|\mathbf{y}\|_{2}}\right).
\]
This leads to a simple, natural criterion for verification:
\begin{definition}
The {\em \bf angular detector (AD)} $\mathfrak{D}_{\tau}^{C}:\,\Re_{+}^{m}\to\left\{ \rm{ACCEPT,\,\ REJECT}\right\} $
with threshold $\tau$ is the decision rule
\begin{equation}
\mathfrak{D}_{\tau}^{C}\left(\mathbf{y}\right)=\begin{cases}
\rm{ACCEPT} & \angle\left(\mathbf{y},\, C\right)\le\tau,\\
\rm{REJECT} & \angle\left(\mathbf{y},\, C\right)>\tau.
\end{cases}
\label{eq:angular}
\end{equation}

\end{definition}
\noindent This rule has a simple interpretation: we accept $\mb y$ if and only if it can be interpreted as an image of $\obj$ plus a noise perturbation, and the signal-to-noise ratio is sufficiently large.

If $C$ is a polyhedral cone, the decision rule (\ref{eq:angular})
can be implemented via nonnegative least squares. This is efficient if the number $n$ of extreme rays of $C$ is small. If
$\mathcal{O}$ is a convex polyhedron with only a few faces, this is the case. However, in general, the number of extreme rays in a
$\mathcal{V}$(vertex)-description can be large or even unbounded.\footnote{For convex, Lambertian objects, in a point sampling model of image formation, the best known bound on the number of extreme rays in a $\mathcal{V}$-representation of $C$ is quadratic in the number of image pixels: $n=O(m^{2})$ \cite{Belhumeur1998-IJCV}. For nonconvex objects or more realistic sampling models, $C$ may not even be polyhedral.}
One remedy is to relax the definition slightly:

\begin{definition} $f : \,\Re_{+}^{m}\to\left\{ \rm{ACCEPT,\,\ REJECT}\right\}$ is
an $\eta$-{\em \bf approximate angular detector ($\eta-$AAD)} if
\begin{equation}
f(\mb{y}) =\begin{cases}
\rm{ACCEPT} & \angle\left(\mathbf{y},\, C\right)\le\tau,\\
\rm{REJECT} & \angle\left(\mathbf{y},\, C\right)>\left(1+\eta\right)\tau.
\end{cases}
\end{equation}
We let $\widehat{\mathbb{D}}_{\tau,\eta}^{C}$ denote the set of all such $f$.
\end{definition}

\begin{figure}
\centerline{
\begin{minipage}{3in}
\centerline{\scalebox{.75}{\input{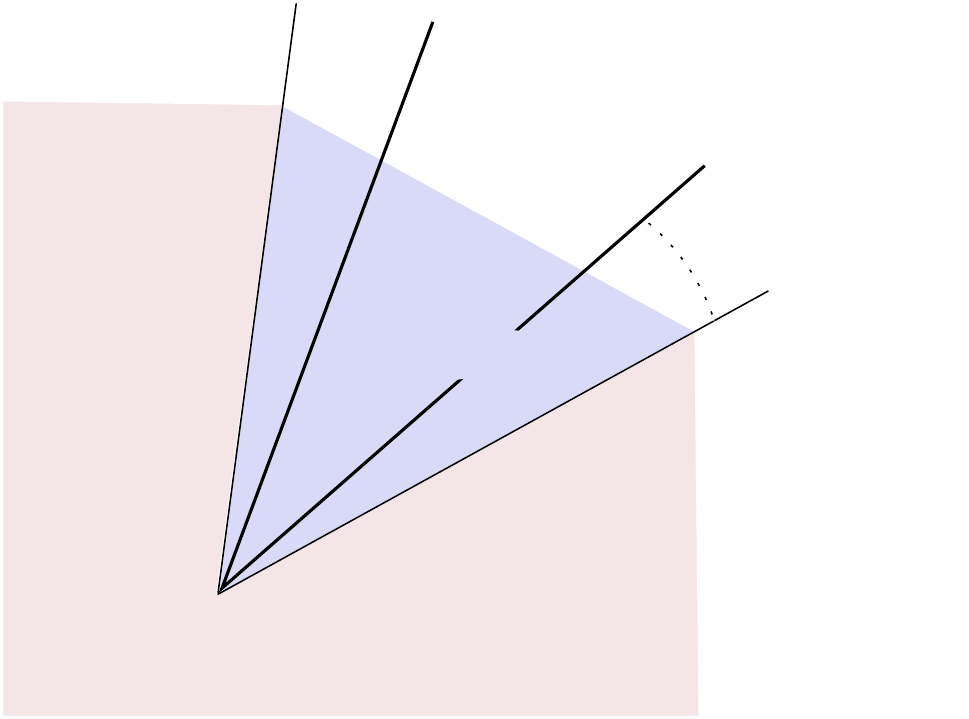tex_t}}}
\centerline{{\bf \footnotesize Angular Detector}}
\end{minipage}
\begin{minipage}{2.5in}
\centerline{\scalebox{.75}{\input{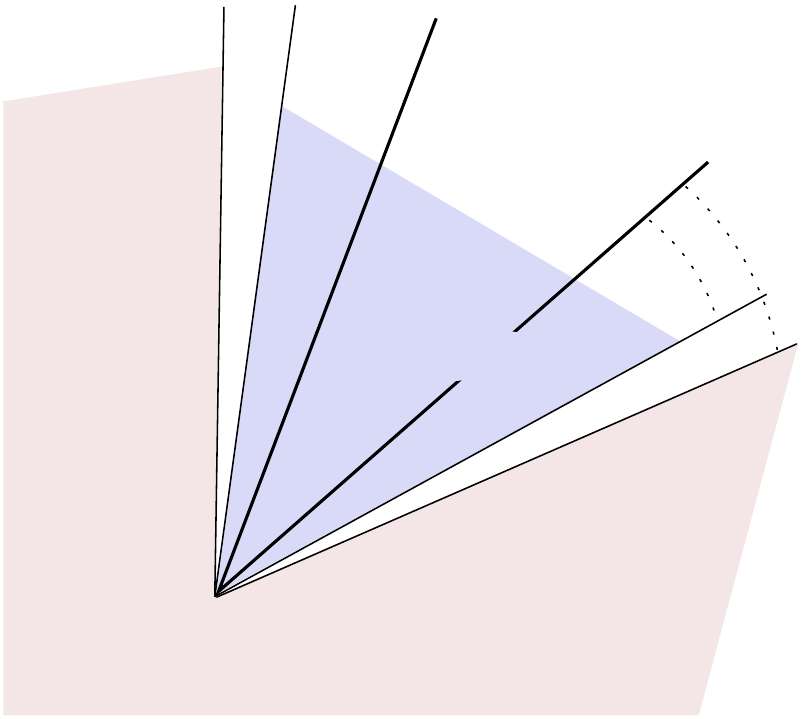tex_t}} }
\centerline{{\bf \footnotesize $\eta$-Approximate Angular Detector}}
\end{minipage}
}
\caption{{\bf Two detection rules.} The angular detector accepts points based on their angle with the cone $C$. An approximate angular detector guarantees to accept any point within angle $\tau$ of $C$, and to reject any point with angle greater than $(1+\eta) \tau$. In the intermediate region (white) there are no restrictions on its behavior.}
\label{fig:tradeoff}
\end{figure}

\noindent Figure \ref{fig:tradeoff} displays
the AD and its $\eta$-relaxation. We can regard $\eta$-AAD as a relaxed version of AD in the sense
that when $\angle\left(\mathbf{y},\, C\right)\in(\tau,\left(1+\eta\right)\tau]$,
no demands are placed on the output of the algorithm.  This buffer zone allows us to work
with a surrogate cone $\widehat{C}$ with much simpler structure, enabling computationally tractable (even efficient!) verification.
For example, if we form a polyhedral approximation $\widehat{C}=\mbox{cone}(\widehat{\mathbf{A}})$, the distance to $\widehat{C}$ is just the optimal value of the nonnegative least squares problem
\begin{equation}
d(\mathbf{y},\widehat{C})=\min_{\mathbf{x}\ge\mathbf{0}}\|\mathbf{y}-\widehat{\mathbf{A}}\mathbf{x}\|_{2}^{2}.
\label{eq:NNLS}
\end{equation}
To implement the angular detector $\mathfrak{D}_{\xi}^{\widehat{C}}$ for $\widehat{C}$, we just need to solve \eqref{eq:NNLS} and compare the optimal value to a threshold.

It should come as no surprise that whenever $\widehat{C}$ approximates $C$ sufficiently well, we have detector $\mathfrak{D}_{\xi}^{\widehat{C}}\in\widehat{\mathbb{D}}_{\tau,\eta}^{C}$,
with $\xi$ chosen appropriately. In words, applying the angular test with $\widehat{C}$ gives an approximate angular detector for the original cone $C$. To make this precise, we need a notion of approximation. We will work with the following discrepancy $\delta$:
\(
\delta\left(C,\widehat{C}\right) \;=\;\max \left\{ \sup_{\mathbf{y}\in C, \| \mb y \|  = 1 } d(\mathbf{y},\widehat{C}), \; \sup_{\mathbf{y}\in\widehat{C},\|\mathbf{y}\|=1} d(\mathbf{y},C)\right\}.
\label{eq:cone_distance}
\)
This is just the Hausdorff distance between $C \cap \ball{\mb 0}{1}$ and $\widehat{C} \cap \ball{\mb 0}{1}$. It therefore satisfies the triangle inequality: $\forall \, \bar{C}$,
\( \label{eqn:triangle-ineq}
\delta(C,\widehat{C})\;\le\;\delta(C,\bar{C})\,+\,\delta(\bar{C},\widehat{C}).
\)
If $\delta(C,\widehat{C})$ is small, we indeed lose little in working with $\widehat{C}$:

\begin{lemma} \label{lem:cone-appx-gives-AAD-long}
Given cone $C$, $\tau>0$ and $\eta\ge0$ with $\left(1+\eta\right)\tau\in\left(0,\,\frac{\pi}{2}\right)$, and another cone $\widehat{C}$, we have $\mathfrak{D}_{\xi}^{\widehat{C}}\in\widehat{\mathbb{D}}_{\tau,\eta}^{C}$
whenever
\begin{equation} \label{eqn:aad-condition}
\delta\left(C,\widehat{C}\right)\;\le\;\tfrac{1}{2}\left(\sin\left(\tau+\eta\tau\right)-\sin\tau\right)
\end{equation}
and
\begin{equation} \label{eqn:xi-int}
\xi\in\left[{\rm asin}\left(\sin\tau+\delta\left(C,\widehat{C}\right)\right),{\rm asin}\left(\sin\left(\tau+\eta\tau\right)-\delta\left(C,\widehat{C}\right)\right)\right].
\end{equation}
\end{lemma}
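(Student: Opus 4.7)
The plan is to reduce the lemma to a single Lipschitz-type estimate---that for every unit vector $\mb y$,
\[ \left| \sin \angle(\mb y, C) - \sin \angle(\mb y, \widehat{C}) \right| \;\le\; \delta(C, \widehat{C}), \]
and then use the identity $\sin \angle(\mb y, K) = d(\mb y, K)/\|\mb y\|$ (which follows from the paper's definition of $\angle$) to read off what range of $\xi$ forces $\mathfrak{D}_\xi^{\widehat{C}}$ to both accept every $\mb y$ with $\angle(\mb y, C) \le \tau$ and reject every $\mb y$ with $\angle(\mb y, C) > (1+\eta)\tau$.

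For the Lipschitz estimate, fix a unit vector $\mb y$ and set $\theta = \angle(\mb y, C) \in [0, \pi/2]$. When $\theta < \pi/2$, the projection $\mb y'$ of $\mb y$ onto the closed convex cone $C$ is nonzero; the first-order optimality conditions for projection onto a cone force $\langle \mb y - \mb y', \mb y' \rangle = 0$, so $\|\mb y'\| = \cos\theta$ and $\mb z = \mb y' / \cos\theta$ is a unit vector in $C$. By definition of $\delta$, there exists $\hat{\mb z} \in \widehat{C}$ with $\|\mb z - \hat{\mb z}\| \le \delta(C, \widehat{C})$, and $\cos\theta \cdot \hat{\mb z}$ remains in $\widehat{C}$ by positive homogeneity, yielding
\[ d(\mb y, \widehat{C}) \;\le\; \left\| \mb y - \cos\theta\, \hat{\mb z} \right\| \;\le\; \left\| \mb y - \mb y' \right\| + \cos\theta \cdot \left\| \mb z - \hat{\mb z} \right\| \;\le\; \sin\theta + \delta(C, \widehat{C}). \]
The same bound holds trivially when $\theta = \pi/2$, and swapping the roles of $C$ and $\widehat{C}$ (using the symmetric half of the $\max$ defining $\delta$) gives the reverse inequality.

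With the estimate in hand, the rest is algebraic. If $\angle(\mb y, C) \le \tau$, the estimate yields $\sin \angle(\mb y, \widehat{C}) \le \sin \tau + \delta(C, \widehat{C})$, so $\mathfrak{D}_\xi^{\widehat{C}}$ accepts whenever $\sin \xi \ge \sin \tau + \delta(C, \widehat{C})$---exactly the lower endpoint of the interval in \eqref{eqn:xi-int}. If $\angle(\mb y, C) > (1+\eta)\tau$, the estimate yields $\sin \angle(\mb y, \widehat{C}) > \sin((1+\eta)\tau) - \delta(C, \widehat{C})$, so $\mathfrak{D}_\xi^{\widehat{C}}$ rejects whenever $\sin \xi \le \sin((1+\eta)\tau) - \delta(C, \widehat{C})$---the upper endpoint. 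Both endpoints lie in $(0, 1)$ since $(1+\eta)\tau < \pi/2$, and the interval is nonempty precisely under hypothesis \eqref{eqn:aad-condition}, namely $2\delta(C, \widehat{C}) \le \sin((1+\eta)\tau) - \sin\tau$. The only real obstacle is keeping the Lipschitz constant at $\delta$ rather than something larger: this works because the $\delta$-witness $\hat{\mb z}$ may be rescaled by the factor $\cos\theta \le 1$, which shrinks the second triangle-inequality term just enough that the overall increment in $\sin\angle$ is at most $\delta$, not $2\delta$ or worse.
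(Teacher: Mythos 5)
Your proof is correct and follows essentially the same path as the paper's: scale the nearest point of $\mathrm{cl}(C)$ down to the unit sphere, invoke the definition of $\delta(C,\widehat C)$, rescale, and conclude by the triangle inequality, then repeat with the roles of $C$ and $\widehat C$ exchanged. Packaging the shared step as a symmetric Lipschitz bound $\bigl|\sin\angle(\mathbf{y},C)-\sin\angle(\mathbf{y},\widehat C)\bigr|\le\delta(C,\widehat C)$ is a tidy reorganization that makes explicit the rescaling by $\cos\theta$ (equivalently $\|\bar{\mathbf{y}}\|_2$), which the paper's proof uses implicitly when it passes from a point $\bar{\mathbf{y}}\in\mathrm{cl}(C)$ with $\|\bar{\mathbf{y}}\|_2\le 1$ to the unit-norm supremum appearing in the definition of $\delta$.
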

\begin{proof} Please see Appendix \ref{sec:s2}.
\end{proof}
%
%\begin{lemma} \label{lem:cone-appx-gives-AAD} Let $\tau>0$ and $\eta\ge0$, with $\left(1+\eta\right)\tau<\tfrac{\pi}{2}$. Then if
%$\delta(C,\widehat{C})\le\frac{1}{2}\left(\sin\left(\tau+\eta\tau\right)-\sin\tau\right)$, the angular detector $\mathfrak{D}_{\xi}^{\widehat{C}}$ for $\widehat{C}$ with threshold
%$\xi= {\rm asin}\left(\tfrac{1}{2}\sin\tau+ \tfrac{1}{2}\sin( (1+\eta)\tau)\right)$ is an $\eta$ approximate angular detector for $C$: $\mathfrak{D}_{\xi}^{\widehat{C}}\in\widehat{\mathbb{D}}_{\tau,\eta}^{C}$.
%\end{lemma}
\noindent So, if $\delta(C,\widehat{C})$ is small, we can simply apply an angular test with cone $\widehat{C}$, and this will implement an approximate angular detector for $C$. Notice that whenever \eqref{eqn:aad-condition} is satisfied, we may satisfy \eqref{eqn:xi-int} by setting $\xi = \mathrm{asin}\left( \tfrac{1}{2}\sin \tau \,+\, \tfrac{1}{2} \sin ( \tau + \eta \tau) \right)$.

\paragraph{Goals and Methodology.} From the above discussion, if we want to provide a detector that guarantees to accept any image that has a valid interpretation as an image of the object under some lighting $f \in \mc F_\alpha$, and reject any image that cannot be plausibly interpreted as an image under $f \in \mc F_\alpha$, it is enough to build an approximation $\widehat{C}$ to the cone $C_\alpha$, and the correct notion of approximation is the Hausdorff distance. The question, then, is how to build such an approximation: how complicated does $\widehat{C}$ have to be to guarantee $\delta( \widehat{C}, C_\alpha ) \le \gamma$? This leads to a way of formalizing several fundamental questions in illumination-robust detection and recognition: {\em What information do we need to guarantee robust verification performance? How does this sample complexity depend on the complexity of the class of illuminations the system must handle? How does it depend on the properties of the object?}

In the sequel, we will show how to build a $\mc V$-approximation $\bar{C} = \mathrm{cone}(\bar{\mb A})$ to $C_\alpha$, where $\bar{\mb A} \in \reals^{m \times n}$ is a matrix whose columns are images under point illumination. The underlying question is how large $n$ needs to be, in terms of ambient illumination level $\alpha$ and the desired quality of approximation $\eps$. We will show that for Lambertian objects,
\(n(\alpha,\eps)\;=\; \frac{\mathrm{const}( \mathtt{sensor}, \mathtt{object} ) }{\alpha^4\eps^{4}}  \label{eqn:n-alpha-eps}\)
examples suffice.  The numerator depends only on physical properties of the object and of the imaging system, which we will make precise below. It is worth remarking that the fact that a polynomial dependence on $\eps^{-1}$ is possible at all can be considered remarkable here -- this is certainly not the case for general high-dimensional convex cones. The reason that such a result is possible at all is that the extreme rays of our cone of interest will turn out to have much lower dimensional structure: they are generated by point illuminations, which are indexed by the sphere. Turning this intuition into a rigorous result will require a detailed analysis of the extreme rays of $C$, which we carry out below.

Section \ref{sec:extreme-rays} characterizes the extreme rays of the cone $C_\alpha$. Section \ref{sec:physics} describes our imaging model in detail. Section \ref{sec:perturbation} describes several new perturbation bounds which lead to the estimate of sample complexity \eqref{eqn:n-alpha-eps}. In Section \ref{sec:Complexity-Reduction}, by solving a convex optimization problem, we form cone $\widehat{C}$, a $\gamma$-approximation to $\bar{C}$, but with much lower complexity. From \eqref{eqn:triangle-ineq}, our resulting cone $\widehat{C}$ $(\eps+\gamma)$-approximates $C_\alpha$: $\delta(C_\alpha,\widehat{C})\le\varepsilon+\gamma$. Finally, Section \ref{sec:Numerical-Experiment} presents several numerical experiments.

\newcommand{\directb}[1]{\bar{\mc D}[ #1 ]}

%\section{Sampling Scheme} \label{sec:Sampling-Scheme}

\section{Extreme Rays of $C_\alpha$} \label{sec:extreme-rays}

In the previous section, we saw that for guaranteed verification with a cone $C$, it was enough to approximate that cone in Hausdorff sense. For computational purposes,  perhaps the most natural approximation is a $\mc V$ (vertex) approximation -- we would like to write $\bar{C} = \cone{ \bar{\mb A} }$ for some matrix $\bar{\mb A}$. To this end, we need to characterize the extreme rays of $C_\alpha$, for $\alpha \ge 0$. We will see below\footnote{For a rigorous argument, please see section \ref{sec:perturbation}.} that for our models of interest, the linear function $\mb y[f]$ can be written as
\(
\label{eqn:int-eq}
\mb y[f] \;=\; \int_{\mb u \in \sphere^2} \, \bar{\mb y}[\mb u] \, f(\mb u) \, d\mb u,
\)
where $\bar{\mb y} : \sphere^2 \to \reals^m$ is a continuous function. In this expression, we have used the natural extension of the Riemann integral to vector-valued functions on the sphere, which simply integrates each of the $m$ coordinate functions.

We begin by characterizing the extreme rays of $C_0 = \mb y[ \mc F ]$. These turn out to simply be the vectors $\barmb{y}[\mb u]$:
\begin{lemma} \label{lem:C0-ext} Suppose that the imaging map $\mb y$ satisfies \eqref{eqn:int-eq}, with $\bar{\mb y}[\cdot] : \sphere^2 \to \reals^m$ continuous. Then if $C_0 = \mb y[ \mc F ]$,
\(
\delta\left( \;C_0\;, \;\, \cone{ \set{\bar{\mb y}[\mb u] \mid \mb u \in \sphere^2 } } \;\, \right) = 0.
\)
\end{lemma}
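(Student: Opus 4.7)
The plan is to establish that every unit vector in $C_0$ can be approximated arbitrarily well in $\ell^2$-norm by elements of $K := \cone{\set{\bar{\mb y}[\mb u] \mid \mb u \in \sphere^2}}$, and symmetrically for unit vectors in $K$; by the definition \eqref{eq:cone_distance} of $\delta$, this forces $\delta(C_0, K) = 0$. Since both sets are cones, each direction amounts to proving containment in the other's closure. The argument splits into (a) $K \subseteq \closure{C_0}$ and (b) $C_0 \subseteq \closure{K}$.

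For (b), fix $f \in \mc F$. By the footnote in Section \ref{sec:cone-app}, Riemann integrability on the sphere is defined via pullback by $\eta$ to $W = [0,2\pi] \times [0,\pi]$, and the $j$-th scalar integrand $(\bar y_j \circ \eta)(\theta,\phi)\,(f \circ \eta)(\theta,\phi)\sin\phi$ is the product of a continuous (hence bounded) function with a Riemann-integrable function, which is itself Riemann integrable. Therefore, for any sequence of tagged partitions of $W$ whose mesh tends to zero, the Riemann sums
\[
\sum_k f(\mb u_k)\,\sin(\phi_k)\,\Delta\theta_k \,\Delta\phi_k \; \bar{\mb y}[\mb u_k]
\]
converge coordinatewise to $\mb y[f]$. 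Because $f \ge 0$ and $\sin\phi \ge 0$ on $W$, each such Riemann sum is a finite nonnegative combination of the $\bar{\mb y}[\mb u_k]$ and therefore belongs to $K$. Thus $\mb y[f] \in \closure{K}$.

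For (a), take $\mb z = \sum_{i=1}^N c_i \bar{\mb y}[\mb u_i] \in K$ with $c_i \ge 0$. For each $i$, pick a family of continuous nonnegative bump functions $\psi_{i,\eps} : \sphere^2 \to \reals_+$ supported in the spherical cap of radius $\eps$ about $\mb u_i$ and normalized so that $\int \psi_{i,\eps}(\mb u)\, d\mb u = 1$, and set $f_\eps := \sum_i c_i \psi_{i,\eps}$. Each $f_\eps$ is continuous and nonnegative, hence lies in $\mc F$, so $\mb y[f_\eps] \in C_0$. Continuity of $\bar{\mb y}$ on the compact sphere, together with the concentration of $\psi_{i,\eps}$ at $\mb u_i$ as $\eps \to 0$, yields $\int \bar{\mb y}[\mb u]\,\psi_{i,\eps}(\mb u)\, d\mb u \to \bar{\mb y}[\mb u_i]$ coordinatewise, so $\mb y[f_\eps] \to \mb z$ and $\mb z \in \closure{C_0}$.

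The main technical point is the coordinatewise Riemann-sum convergence in (b), which rests on the standard fact that the product of a continuous function with a Riemann-integrable function on a bounded rectangle is again Riemann integrable; aside from this, every step is immediate from continuity of $\bar{\mb y}$ and the integral representation \eqref{eqn:int-eq}, which match the hypotheses of the lemma exactly.
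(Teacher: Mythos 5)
Your proof is correct and follows essentially the same strategy as the paper's: split $\delta(C_0,K)=0$ into the two closure containments, establish $C_0 \subseteq \mathrm{cl}(K)$ by observing that Riemann sums of the integrand are nonnegative combinations of the $\bar{\mb y}[\mb u_k]$, and establish $K \subseteq \mathrm{cl}(C_0)$ by concentrating normalized nonnegative illuminations near each $\mb u_i$. The only cosmetic differences are that you invoke the mesh characterization of the Riemann integral directly (the paper instead isolates this as an auxiliary lemma using common refinements of coordinatewise partitions) and that you use continuous bump functions where the paper uses normalized indicators of spherical caps; both are legitimate elements of $\mc F$ and lead to the same limit.
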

\begin{proof}
Please see Appendix \ref{app:integrals-cones}.
\end{proof}
\noindent In the physical imaging models we consider, the $\bar{\mb y}[\mb u]$ can be considered images of $\obj$ under {\em point illumination} from direction $\mb u$. With this interpretation, the previous lemma simply asserts that any image $\mb y[f]$ under distant, Riemann integrable illumination $f$ can be arbitrarily well approximated using a conic combination of images under point illumination.\footnote{Informal variants of Lemma \ref{lem:C0-ext} are stated in many previous works in this area; see, e.g., \cite{Belhumeur1998-IJCV}.} The conic hull of these extreme images is equal to the cone $C_0$ of images of $\obj$ under arbitrary Riemann integrable illumination, up to a set of measure zero.

We would like a similar expression that works when the ambient level is larger than zero -- we would like to also approximate the extreme rays of $C_\alpha$. The following lemma says that that we can use images of the form $\bmb{y}[\mb u] = \bar{\mb y}[\mb u] + \alpha \mb y_a$, where $\mb y_a$ is the image of $\obj$ under ambient illumination:

\begin{lemma} \label{lem:ext-amb} Suppose that $\mb y[f]$ satisfies \eqref{eqn:int-eq} with $\bar{\mb y}[\cdot]$ continuous. Set $\bmb{y}[\mb u] = \bar{\mb y}[\mb u] + \alpha \mb y_a$, with
\(
\mb y_a = \frac{1}{\area{\sphere^2}} \int_{\mb u} \bar{\mb y}[\mb u] \, d\mb u,
\)
and $\breve{C} = \cone{ \set{ \bmb{y}[\mb u] \mid \mb u \in \sphere^2 } }$. Then, we have $\delta( C_\alpha, \breve{C} ) = 0$.
\end{lemma}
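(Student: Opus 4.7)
The statement $\delta(C_\alpha, \breve{C}) = 0$ is equivalent, for cones, to the two inclusions $\breve{C} \subseteq \overline{C_\alpha}$ and $C_\alpha \subseteq \overline{\breve{C}}$. The first inclusion is easy: for each $\mb u \in \sphere^2$, let $f_n \in \mc F$ be unit-$L_1$ bump functions concentrating at $\mb u$; by continuity of $\bar{\mb y}[\cdot]$, $\mb y[f_n + \alpha\omega] = \mb y[f_n] + \alpha \mb y_a \to \bar{\mb y}[\mb u] + \alpha \mb y_a = \bmb{y}[\mb u]$, and each $f_n + \alpha \omega$ lies in $\mc F_\alpha$, so every generator $\bmb{y}[\mb u]$ lies in $\overline{C_\alpha}$. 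Since $\overline{C_\alpha}$ is a closed convex cone, $\breve{C} \subseteq \overline{C_\alpha}$.

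The substantive direction $C_\alpha \subseteq \overline{\breve{C}}$ follows from the same Riemann-approximation argument as in Lemma \ref{lem:C0-ext}, provided I can realize an arbitrary $\mb y = t(\mb y[f_d] + \alpha \mb y_a) \in C_\alpha$ as a Riemann integral against the cone generators $\bmb{y}[\cdot]$. Writing $\beta = \|f_d\|_{L_1} \in [0,1]$, my proposed choice is
\[
g \;=\; t f_d \,+\, \tfrac{\alpha t(1-\beta)}{1+\alpha}\, \omega \;\in\; \mc F,
\]
which is nonnegative because $\beta \le 1$ and Riemann integrable as the sum of an element of $\mc F$ and a constant. Using $\bmb{y}[\mb u] = \bar{\mb y}[\mb u] + \alpha \mb y_a$ and $\mb y_a = \mb y[\omega]$ gives
\[
\int_{\mb u} g(\mb u)\, \bmb{y}[\mb u]\, d\mb u \;=\; \mb y[g] + \alpha \|g\|_{L_1}\, \mb y_a,
\]
and plugging in $\mb y[g] = t\mb y[f_d] + \tfrac{\alpha t(1-\beta)}{1+\alpha}\mb y_a$ together with $\|g\|_{L_1} = t\beta + \tfrac{\alpha t(1-\beta)}{1+\alpha}$ collapses the right-hand side to $t\mb y[f_d] + \alpha t\, \mb y_a = \mb y$; the coefficient on $\omega$ is engineered precisely so that the ambient contributions match. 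Because $\bmb{y}[\cdot]$ is continuous on the compact sphere, the vector-valued Riemann sums $\sum_i g(\mb u_i)\, \area{W_i}\, \bmb{y}[\mb u_i] \in \breve{C}$ converge to this integral, placing $\mb y$ in $\overline{\breve{C}}$.

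The only nontrivial step is discovering the compensating term $\tfrac{\alpha t (1-\beta)}{1+\alpha}\omega$; once $g$ is in hand, everything else is either a direct calculation or a recycling of the Riemann-sum closure argument from Lemma \ref{lem:C0-ext}, so I do not anticipate any obstacle beyond the algebraic bookkeeping of the displayed identity.
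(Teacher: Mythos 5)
Your proof is correct, and it takes a mildly different route from the paper in the substantive direction $C_\alpha \subseteq \cl{\breve{C}}$. The paper writes $\mb y[f] = \alpha\bigl(1-\|f_d\|_{L^1}\bigr)\mb y_a + \int_{\mb u}\breve{\mb y}[\mb u]\,f_d(\mb u)\,d\mb u$, argues that each summand lies in $\cl{\breve{C}}$ (the integral via the same Riemann-sum lemma you invoke, and $\mb y_a$ because it is a positive multiple of $\int_{\mb u}\breve{\mb y}[\mb u]\,\omega(\mb u)\,d\mb u$), and then uses that $\cl{\breve{C}}$ is a convex cone, hence closed under addition. You instead absorb the residual ambient contribution into the directional part by adding the compensating term $\tfrac{\alpha t(1-\beta)}{1+\alpha}\,\omega$ to $tf_d$, producing a single $g \in \mc F$ with $\mb y = \int_{\mb u} g(\mb u)\,\breve{\mb y}[\mb u]\,d\mb u$, so that one application of the Riemann-approximation lemma suffices. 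Your algebra checks out (nonnegativity of $g$ uses $\beta\le 1$; the $\mb y_a$-coefficient telescopes to $\alpha t$), and the easy inclusion $\breve{C}\subseteq\cl{C_\alpha}$ via concentrating bumps matches the paper's. The upshot: your version trades the paper's two-term decomposition plus closedness-under-addition for a slightly cleverer choice of approximating density; both hinge on the same ingredients (continuity of $\breve{\mb y}[\cdot]$ and the Riemann-sum approximation of vector-valued integrals by conic combinations of point images), so they are close in spirit, but the single-integral reduction is a legitimate and somewhat cleaner packaging.
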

\begin{proof}
Please see Appendix \ref{app:integrals-cones}.
\end{proof}

\noindent This lemma says that to work with $C_\alpha$, we can simply work with a modified set of extreme images $\bmb{y}[\mb u]$, which are sums of images under point illumination and the ambient image $\mb y_a$. We still need to build a computationally tractable representation for $C_\alpha$. A natural approach is to discretize the set of illumination directions, by choosing a finite set $\mb u_1, \dots, \mb u_N$.  The following lemma asserts that as long as the $\bar{\mb y}[\mb u_i]$ can approximate any point illumination $\bar{\mb y}[\mb u]$ in an absolute sense, the cone generated by the finite set and the cone $C_\alpha$ will not differ too much:

\begin{lemma} \label{lem:ext-appx} Let $\bar{C} = \cone{ \bmb{y}[\mb u_1] , \dots, \bmb{y}[\mb u_N], \mb y_a }$,
and
\(
\label{eqn:delta-point}
 \delta( C_{\alpha}, \bar{C} ) \;=\; \delta( \breve{C}, \bar{C} ) \;\le\; \frac{2 \sup_{\mb u \in \sphere^2} \min_i \norm{ \bar{\mb y}[\mb u] - \bar{\mb y}[\mb u_i] }{2}}{\eta_\star \alpha \norm{\mb y_a}{2}}.
\)
here $\eta_\star \;=\; \sup_{\norm{\mb w}{2} \le 1} \inf_{\mb u} \innerprod{ \mb w }{\tfrac{\bmb{y}[\mb u]}{\norm{\bmb{y}[\mb u]}{2}}} \;\ge\; m^{-1/2}$ measures the angular spread of $C_\alpha$.
\end{lemma}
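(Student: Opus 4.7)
My plan starts from the triangle inequality \eqref{eqn:triangle-ineq} combined with Lemma \ref{lem:ext-amb}, which gives $\delta(C_\alpha,\breve{C}) = 0$. Thus it suffices to bound $\delta(\breve{C},\bar{C})$, and this Hausdorff-style discrepancy splits into two directional suprema. The direction $\sup_{\mb y \in \bar{C},\,\norm{\mb y}{2}=1} d(\mb y,\breve{C})$ I would dispatch by observing that every generator of $\bar{C}$ lies in $\closure{\breve{C}}$: the $\bmb{y}[\mb u_i]$ do so by construction, while $\mb y_a = \tfrac{1}{(1+\alpha)\area{\sphere^2}} \int_{\sphere^2} \bmb{y}[\mb u]\,d\mb u$ arises as a Riemann-integral limit of finite conic combinations of $\bmb{y}[\mb u]$.

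The main work lies in the other direction, $\sup_{\mb y \in \breve{C},\,\norm{\mb y}{2}\le 1} d(\mb y,\bar{C})$. Writing a generic such $\mb y$ as a (possibly limiting) nonnegative conic combination $\mb y = \sum_j c_j\,\bmb{y}[\mb v_j]$ with $\mb v_j \in \sphere^2$, I would propose the approximant
\[
\mb y_{\mt{approx}} \;=\; \sum_j c_j\,\bmb{y}[\mb u_{i_j^\star}]\;\in\;\bar{C},\qquad i_j^\star \;\in\; \arg\min_i \norm{\bar{\mb y}[\mb v_j] - \bar{\mb y}[\mb u_i]}{2}.
\]
The key cancellation is that $\bmb{y}[\mb v_j] - \bmb{y}[\mb u_{i_j^\star}] = \bar{\mb y}[\mb v_j] - \bar{\mb y}[\mb u_{i_j^\star}]$, since the ambient parts drop out; the triangle inequality then yields
\[
\norm{\mb y - \mb y_{\mt{approx}}}{2} \;\le\; \eps\,\sum_j c_j,\qquad \eps \;\doteq\; \sup_{\mb u \in \sphere^2}\min_i \norm{\bar{\mb y}[\mb u] - \bar{\mb y}[\mb u_i]}{2},
\]
which matches the numerator in the target bound.

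The remaining and most delicate step is to upper bound $\sum_j c_j$ by $(\eta_\star\,\alpha\,\norm{\mb y_a}{2})^{-1}$, which I would obtain from two ingredients. First, pointwise nonnegativity of images forces $\bmb{y}[\mb u] \ge \alpha\mb y_a \ge 0$, hence $\norm{\bmb{y}[\mb v_j]}{2} \ge \alpha\norm{\mb y_a}{2}$. Second, if $\mb w^\star$ attains the supremum defining $\eta_\star$, then
\[
1 \;\ge\; \norm{\mb y}{2} \;\ge\; \innerprod{\mb w^\star}{\mb y} \;\ge\; \eta_\star \sum_j c_j\,\norm{\bmb{y}[\mb v_j]}{2} \;\ge\; \eta_\star\,\alpha\,\norm{\mb y_a}{2}\,\sum_j c_j,
\]
and chaining these estimates gives the claim. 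The inequality $\eta_\star \ge m^{-1/2}$ then follows by testing $\mb w = \mb 1/\sqrt{m}$ and using $\norm{\bmb{y}[\mb u]}{1} \ge \norm{\bmb{y}[\mb u]}{2}$ for pointwise nonnegative vectors. The only real obstacle I anticipate is the rigorous passage from finite conic combinations to general $\mb y \in \breve{C}$ arising as Riemann integrals $\int \bar{\mb y}[\mb u]\,f_d(\mb u)\,d\mb u + \alpha'\mb y_a$; I would handle this by partitioning $\sphere^2$ into Borel cells around the $\mb u_i$, integrating $f_d$ over each cell, and invoking continuity of $\bar{\mb y}[\cdot]$ to pass to the limit. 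The factor of $2$ in the stated bound comfortably absorbs any slack from this limiting argument.
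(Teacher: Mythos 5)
Your proposal is correct and, interestingly, gives a slightly tighter constant than the paper. The paper's proof works with \emph{normalized} generators: it applies Carath\'eodory to write $\mb y = \sum_j \lambda_j \breve{\mb y}[\mb v_j]/\|\breve{\mb y}[\mb v_j]\|_2 + \zeta\mb y_a$, bounds $d(\mb y,\bar C)$ by $\eps_{\mathrm{norm}}\sum_j \lambda_j$ where $\eps_{\mathrm{norm}} = \sup_{\mb u}\min_i \|\breve{\mb y}[\mb u]/\|\breve{\mb y}[\mb u]\|_2 - \breve{\mb y}[\mb u_i]/\|\breve{\mb y}[\mb u_i]\|_2\|_2$, controls $\sum_j\lambda_j$ by $\|\mb y\|_2/\eta_\star$ via the same $\mb w_\star$ trick, and then pays an extra factor of two when converting the unit-vector difference $\eps_{\mathrm{norm}}$ back into the unnormalized difference $\|\bar{\mb y}[\mb u]-\bar{\mb y}[\mb u_i]\|_2$ (using $\|\tfrac{\mb a}{\|\mb a\|}-\tfrac{\mb b}{\|\mb b\|}\|\le \tfrac{2\|\mb a-\mb b\|}{\max(\|\mb a\|,\|\mb b\|)}$). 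You instead work with the unnormalized coefficients $c_j$ throughout, exploit directly that $\bmb{y}[\mb v_j]-\bmb{y}[\mb u_{i_j}]=\bar{\mb y}[\mb v_j]-\bar{\mb y}[\mb u_{i_j}]$ so the ambient term drops out in the error, and control $\sum_j c_j$ by pairing the $\mb w_\star$ argument with the pointwise lower bound $\|\bmb{y}[\mb u]\|_2\ge\alpha\|\mb y_a\|_2$. This bypasses the normalization step entirely and yields $\delta(\breve C,\bar C)\le \eps/(\eta_\star\alpha\|\mb y_a\|_2)$ without the factor of $2$, which comfortably implies the stated bound. Your remaining concerns are harmless: since $\breve C$ is \emph{defined} as the cone (finite conic combinations) and the supremum defining $\delta$ over $\breve C$ equals that over $\mathrm{cl}(\breve C)$ by continuity of $d(\cdot,\bar C)$, Carath\'eodory applies directly and no Riemann-sum discretization argument is actually needed; and although attainment of $\mb w_\star$ is not strictly guaranteed, replacing it by a near-maximizer and taking a limit fixes this at no cost, matching the paper's level of rigor.
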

\begin{proof}
Please see Appendix \ref{app:integrals-cones}.
\end{proof}

This substantially simplifies the problem of approximating $C_\alpha$: to control the error over {\em all} possible images, it is enough to control the error over images under {\em point illumination}. Below, we will see that this is possible, even for nonconvex objects, provided the object's reflectance is Lambertian.

\begin{figure}[h]
\centering
\includegraphics[width=0.5\textwidth]{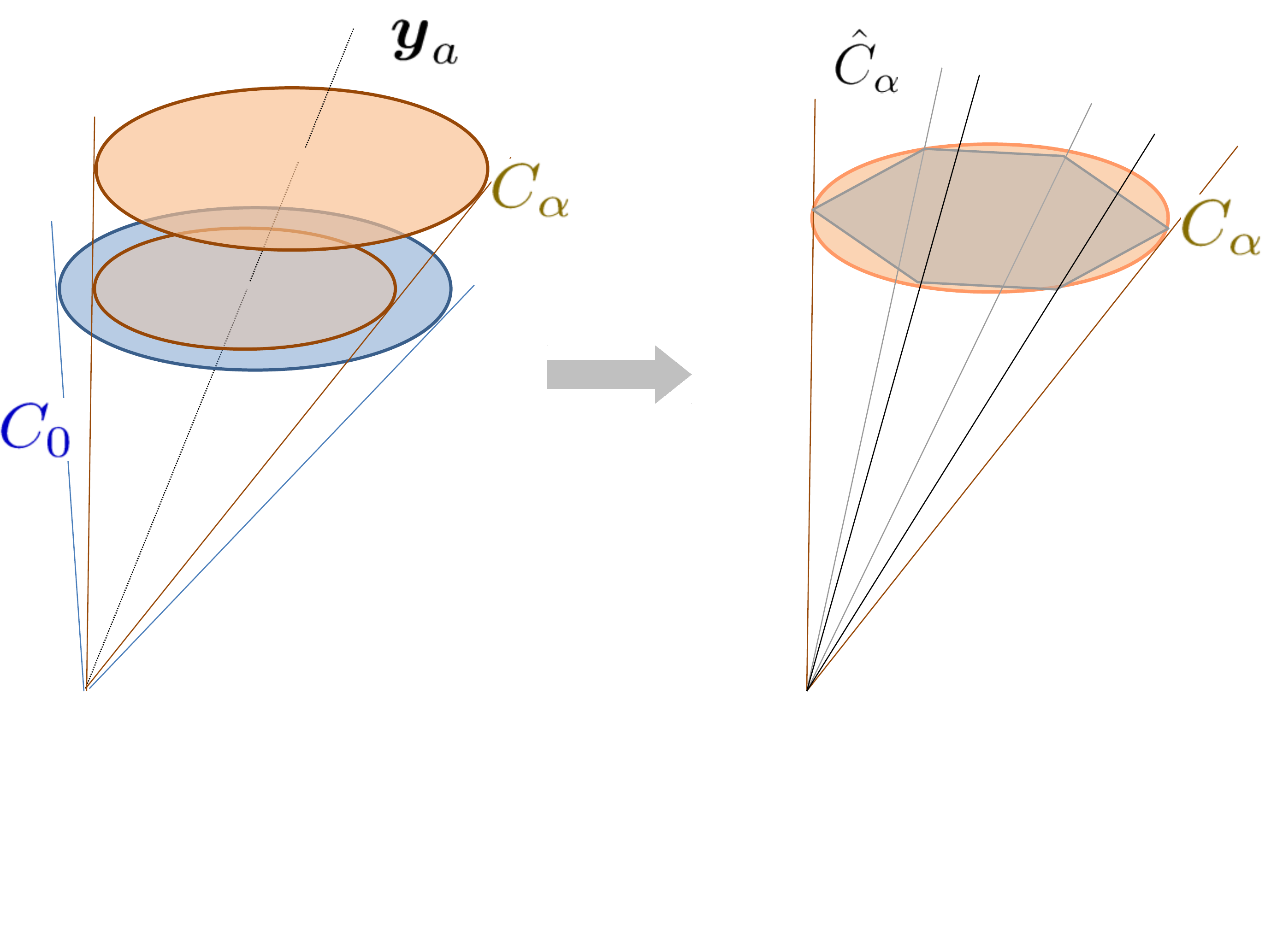}
\vspace{-15mm}\caption{{\bf Cone Approximation:} $C_\alpha$ (left) and its $\mc V$-approximation (right).}
\label{fig:cone_appr}
\end{figure}

\section{Physical Assumptions: Lambertian Objects} \label{sec:physics}

We will introduce a set of hypotheses on the object and the image formation process. Under these hypotheses, we obtain rigorous bounds for the error $\norm{\bar{\mb y}[\mb u] - \bar{\mb y}[\mb u']}{2}$ incurred by approximating an image $\bar{\mb y}[\mb u]$ under point illumination $\mb u$ with another image $\bar{\mb y}[\mb u']$ under point illumination $\mb u'$.  From the results in the previous sections, a good approximation of images under distant point illumination will be sufficient to ensure a good approximation to the cone of all images of the object under distant illumination in Hausdorff sense.

\paragraph{Object Geometry.} Our bounds pertain to {\em triangulated} objects, whose boundary is a union of finitely many oriented triangles:
\begin{definition}[Triangulated object] We say that $\obj \subset \reals^3$ is {\em triangulated} if for some integer $N$,
\begin{eqnarray*}
&& \objbdy = \cup_{i=1}^N \Delta_i, \; \\
&& \forall \, i, \;  \Delta_i = \conv\set{ \mb v_i^{(1)}, \mb v_i^{(2)}, \mb v_i^{(3)} }, \;  \dim{\Delta_i} = 2, \\
&&\forall \, i \ne j, \; \Delta_i \cap \Delta_j \in \set{ \emptyset } \cup \mc V \cup \mc E,
\end{eqnarray*}
where where $\mc V$ and $\mc E$ are sets of vertices and edges:
\begin{eqnarray*}
\mc V = \set{ \set{ \mb v_i^{(k)} } \mid i \in [N], \, k \in [3] }, \quad \mc E = \set{ \conv\set{ \mb v_i^{(k_1)}, \mb v_i^{(k_2)} } \mid i \in [N], \, k_1 \ne k_2 },
\end{eqnarray*}
and each face $\Delta_i$ has a unique outward normal $\mb n_i \in \sphere^2$.
\end{definition}

\noindent This geometric assumption captures most of the object models that are interesting for computer graphics and vision. Notice that $N$ above can be arbitrarily large -- and hence this model can approximate smooth objects.

The normal vectors $\mb n_i$ play an important role in describing how light interacts with the object. If we let
\(
\Phi \;=\; \bigcup_{\Delta_i} \, \relint{ \Delta_i }
\)
be the union of the relative interiors of faces of the object, for $\mb x \in \Phi$, the outward normal $\mb n$ is uniquely defined, and we can write it as $\mb n(\mb x) \in \sphere^2$. We write
\(
E \;\doteq\; \objbdy \setminus \Phi \;=\; \bigcup_{e \in \mc E} e
\)
 for the remaining points. This is the set of all points contained in some edge $e$.

We will introduce two indicator functions that describe how object obstructs the ``view'' from a given point $\mb x \in \objbdy$:
\begin{itemize}
\item The {\em point-direction visibility indicator} $\nu : \objbdy \times \sphere^2 \to \set{0,1}$ indicates those directions $\mb u$, which when viewed from point $\mb x$, are not obstructed by other points of the object:
\(\label{eqn:point-dire-vis}
\nu(\mb x, \mb u) \;=\;
\begin{cases}
1 & \left(\set{\mb x} + \reals_+ \mb u \right) \cap \obj = \set{\mb x}, \\
0 & \text{else}.
\end{cases}
\)

\item The {\em point-point visibility indicator} $V : \objbdy \times \objbdy \to \set{0,1}$ indicates those point pairs $(\mb x,\mb x') \in \objbdy \times \objbdy$ that are mutually visible:
\(\label{eqn:point-point-vis}
V(\mb x, \mb x') \;=\;
\begin{cases}
1 & [\mb x,\mb x'] \cap \obj = \set{\mb x, \mb x'}, \\
0 & \text{else}.
\end{cases}
\)
\end{itemize}

\paragraph{Integrating on $\objbdy$.} To clearly describe how light interacts with the object $\mc O$ to produce an image, we need to be able to integrate on $\objbdy$. This is conceptually straightforward. In this section, we simply introduce notation for this integral; a detailed construction is given in Appendix \ref{app:int-obj-bdy}. There, we formally construct a measure space $(\Phi, \Sigma_{\objbdy}, \mu_{\objbdy})$. For $g : \objbdy \to \reals$, the Lebesgue integral with respect to this measure will be written as
\(
\int g(\mb x) \, d \mu_{\objbdy}(\mb x).
\)
We can define a vector space
\(
L^2[\objbdy] = \set{ g : \objbdy \to \reals \mid g^2 \; \text{is integrable } }.
\)
For $g \in L^2[ \objbdy ]$, we define
\(
\norm{g}{L^2} \;\doteq\; \left( \int g(\mb x)^2 \, d \mu_{\objbdy}(\mb x) \right)^{1/2}.
\)

\paragraph{Object Reflectance.}

We will consider a Lambertian reflectance model. In this model, the object is fully described by its geometry and its {\em albedo}
\(
\rho : \objbdy \to (0,1],
\)
which is the fraction of incoming light that is reflected at each point $\mb x \in \objbdy$. {We assume that the albedo is positive everywhere, and that it is $\Sigma_{\objbdy}$-measurable.} In the Lambertian model, the key quantity linking the illumination $f$ and the image $\mb y$ is the outgoing irradiance (radiosity) at each point $\mb x \in \objbdy$:
\(
g : \objbdy \to \reals_+.
\)
Informally speaking, the irradiance $g(\cdot)$ is generated as follows: light from the source impinges on the surface of the object; some is absorbed, while some is reflected. This reflected light can itself illuminate the object, as can further reflections of the reflected light. Then the map from distant illumination $f$ to outgoing irradiance $g$ can be described in terms of two operators.

The {\bf direct illumination} operator $\mc D : L^2[ \sphere^2 ] \to L^2[ \objbdy ]$ describes the object's reflectance after the first bounce of light from illumination function $f(\mb u)$:
\(
     \direct{f}(\mb x) \;=\; \int \directb{\mb u}(\mb x) \, f(\mb u) \, d\sigma(\mb u), \qquad \mb x \in \Phi.
\)
Here, $\sigma(\cdot)$ is the spherical measure. For Lambertian objects, direct reflectance under point illumination $\bar{\mc D}$ can be expressed as: %\objbdy \to \reals_+
\(
\label{eqn:D-def}
\directb{\mb u}(\mb x) \;=\;
\begin{cases}
\rho(\mb x) \< \mb n(\mb x), \mb u \>_+  \nu (\mb x,\mb u) & \mb x \in \Phi, \\
0 & \text{else}.
\end{cases}
\)

The {\bf interreflection} operator $\mc T$: $L^2[\partial \mc O] \to L^2[\partial \mc O]$ describes how light reflected off the object illuminates the object itself again:
\(
\label{eqn:T-def}
\mc T[g](\mb x) = \begin{cases} \int \kappa( \mb x, \mb x' ) g(\mb x') \, d \mu_{\objbdy}(\mb x') & \mb x \in \Phi \\ 0 & \mb x \in E = \objbdy \setminus \Phi, \end{cases}
\)
where the kernel $\kappa$ is given by
\(
\label{eqn:kappa-def}
\kappa(\mb x,\mb x') \;=\; \frac{\rho(\mb x)}{\pi} \frac{\<\mb n(\mb x'),\mb x-\mb x'\>\<\mb n(\mb x),\mb x'-\mb x\>}{\|\mb x-\mb x'\|^4}\, V(\mb x,\mb x').
\)
For all of the models that we consider, the operator norm of $\mc T$ will be strictly smaller than one, and so the operator $\mc I - \mc T$ will be invertible. Under this assumption the outgoing irradiance on the surface of the object can be written as a convergent series
\begin{eqnarray}
g[f] &=& \direct{f} + \mc T \direct{f}  +\mc T^2 \direct{f} + \dots \nonumber \\
               &=& (\mc I - \mc T)^{-1} \direct{f}.
\end{eqnarray}

\paragraph{Sensor Model.}
We consider a perspective camera, with a thin lens model commonly adopted in computer vision with focal length $f_c$ and lens diameter $d_c$ \cite{Horn}.\footnote{The main idealization in the model \eqref{eqn:image-irradiance} is that it neglects defocus due to depth. In fact, our methodology is compatible with more sophisticated imaging models, as well as simpler idealizations such as orthographic models. However, the bounds claimed in Lemma \ref{lemma:P} will change. } We assume the imaging sensor is composed of $m$ non-overlapping squares $I_i$ with side length $s_c$, then the value of the $i$-th pixel is generated by integrating the irradiance  over region $I_i$:
\(
\label{eqn:image-irradiance}
y_i \;=\; \mc P_i[ g ] \;\doteq\; \frac{\gamma_c}{4}\left( \frac{d_c}{f_c} \right)^2 \int_{\mb z \in I_i } g( \mf p^{-1}(\mb z) ) \innerprod{\frac{\mb z}{\norm{\mb z}{2}}}{\mb e_3}^4 \, d \mu(\mb z).
\)
Here, $\mf p$ represents perspective projection; its inverse maps an image point to the corresponding point on $\objbdy$ and $\gamma_c$ is the camera gain. Combining the expressions for pixels $1 \dots m$, we can describe the image vector as a whole as a linear function of $g$ via
\(
\label{eqn:P-def}
\mb y \;=\; \mc P[g] \;=\; \left[ \begin{array}{c} \mc P_1[g] \\ \vdots \\ \mc P_m[g] \end{array} \right] \;\in\; \reals^m.
\)

\paragraph{The Imaging Operator.}

%Under our assumptions, if $g : \objbdy \to \reals_+$ is the irradiance on the surface of the object, the $i$-th pixel value can be expressed as
%\(
%\label{eqn:P-def}
%(\mc P[g])_i \;=\; \int_{\mb x \in \objbdy} \varphi_i(\mb x) g(\mb x) d\mb x,
%\)
%with $\varphi_i \in L^2[\objbdy]$.

Combining the definitions and descriptions in the previous paragraphs, we can give a description of the imaging operator $\mb y[f]$ as whole. When $\norm{\mc T}{}< 1$ (i.e., the object is not perfectly reflective), we have
\(
\label{eqn:imaging}
\mb y[f]\;=\;\mc P\sum_{i=0}^{\infty}\mc T^i\mc D(f)\;=\;\mc P(I-\mc T)^{-1}\mc D[f].
\)
Using the definition of $\direct{\cdot}$ and $\directb{\cdot}$, we have

\begin{lemma}\label{lem:imaging} Under the imaging model \eqref{eqn:imaging}, with $\mc P$ as in \eqref{eqn:P-def}, $\mc T$ as in \eqref{eqn:T-def} and $\mc D$ as in \eqref{eqn:D-def}, if $\norm{\mc T}{L^2 \to L^2} < 1$, then for any Riemann integrable $f$ we have
\(
\label{eqn:imaging-lem-1}
\mb y[ f ] \;=\; \int \bar{\mb y}[\mb u] \, f(\mb u) \, d\mb u,
\)
with
\(
\bar{\mb y}[\mb u] \;=\; \mc P ( \mc I - \mc T )^{-1} \directb{\mb u}.
\)
\end{lemma}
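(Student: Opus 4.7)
The plan is to substitute the integral definition of $\mc D[f]$ into the closed-form expression \eqref{eqn:imaging} for the imaging operator, and commute the bounded linear operator $\mc P(\mc I - \mc T)^{-1}$ past the integral. Concretely, the definition of $\mc D$ gives
\[
\mc D[f] \;=\; \int \directb{\mb u}\, f(\mb u)\, d\sigma(\mb u),
\]
which can be interpreted as a vector-valued Riemann integral with values in $L^2[\objbdy]$. Once I can pull $\mc P(\mc I - \mc T)^{-1}$ inside the integral, the claim $\mb y[f] = \int \bar{\mb y}[\mb u]\, f(\mb u)\, d\mb u$ with $\bar{\mb y}[\mb u] = \mc P(\mc I - \mc T)^{-1} \directb{\mb u}$ follows directly, and the $d\sigma$ integral coincides with $d \mb u$ for Riemann integrable $f$ by the footnote convention.

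To justify the interchange, I would first verify that $\mc P(\mc I - \mc T)^{-1}$ is a bounded linear operator from $L^2[\objbdy]$ to $\reals^m$. The hypothesis $\norm{\mc T}{L^2 \to L^2} < 1$ makes the Neumann series $(\mc I - \mc T)^{-1} = \sum_{i\ge 0} \mc T^i$ converge in operator norm with bound $(1 - \norm{\mc T}{})^{-1}$, so in particular $\mc P \sum_{i=0}^\infty \mc T^i$ converges to $\mc P(\mc I - \mc T)^{-1}$ in the operator-norm sense, which is exactly the series appearing in \eqref{eqn:imaging}. Boundedness of $\mc P : L^2[\objbdy] \to \reals^m$ follows directly from \eqref{eqn:image-irradiance} via Cauchy--Schwarz, using that each pixel integrates the irradiance against a bounded kernel over a bounded region.

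Next I would note that $\mb u \mapsto \directb{\mb u}$ is a bounded map $\sphere^2 \to L^2[\objbdy]$: the integrand $\rho(\mb x)\langle \mb n(\mb x), \mb u\rangle_+ \nu(\mb x, \mb u)$ is uniformly bounded by $1$ on a finite-area surface, and the discontinuities of $\nu(\cdot,\mb u)$ in $\mb u$ occur only on the measure-zero set of grazing configurations, so $\directb{\cdot}$ is in fact continuous in $L^2$-norm on $\sphere^2$. For Riemann integrable $f$, the product $\directb{\mb u} f(\mb u)$ is then Riemann integrable as an $L^2$-valued function, and bounded linear operators commute with Riemann integrals (this is immediate from the definition of Riemann sums and continuity of the operator). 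Therefore
\[
\mb y[f] \;=\; \mc P(\mc I-\mc T)^{-1}\!\int \directb{\mb u}\, f(\mb u)\, d\sigma(\mb u) \;=\; \int \mc P(\mc I-\mc T)^{-1}\directb{\mb u}\, f(\mb u)\, d\sigma(\mb u) \;=\; \int \bar{\mb y}[\mb u]\, f(\mb u)\, d\mb u.
\]

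The main obstacle is the vector-valued integration/operator commutation step, which requires care because $\directb{\mb u}$ is not pointwise continuous in $\mb u$ (the visibility indicator $\nu$ jumps). The resolution is to work at the level of $L^2[\objbdy]$-norm continuity, where those pointwise jumps have no effect, and invoke the standard fact that bounded operators pass through Riemann integrals of continuous Banach-space-valued functions. Everything else is substitution of definitions and application of the Neumann series.
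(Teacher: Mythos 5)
Your route is genuinely different from the paper's: you reinterpret $\mc D[f]$ as an $L^2[\objbdy]$-valued Riemann integral of the continuous map $\mb u\mapsto\directb{\mb u}$, and commute the bounded operator $\mc P(\mc I-\mc T)^{-1}$ through it, whereas the paper works at the pointwise/Lebesgue level, expanding $(\mc I-\mc T)^{-1}$ as a Neumann series and applying Tonelli's theorem and monotone convergence term-by-term before converting to a Riemann integral at the end. Your approach is conceptually cleaner, but it front-loads two facts that need justification.

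The genuine gap is the claimed $L^2$-continuity of $\mb u\mapsto\directb{\mb u}$. Your heuristic --- that the integrand is uniformly bounded and that ``the discontinuities of $\nu(\cdot,\mb u)$ in $\mb u$ occur only on the measure-zero set of grazing configurations'' --- does not deliver $\|\directb{\mb u}-\directb{\mb u'}\|_{L^2}\to0$. What is actually needed is control on the \emph{$\mu_{\objbdy}$-measure} (in $\mb x$) of the symmetric difference $S[\mb u]\,\triangle\,S[\mb u']$ as $\mb u'\to\mb u$; the set where $\nu(\mb x,\mb u)\ne\nu(\mb x,\mb u')$ has positive surface measure for generic nearby $\mb u,\mb u'$ (the shadow boundary sweeps across faces), and bounding it is exactly the cast-shadow analysis in Theorem \ref{thm:D}, which yields the term $32\sqrt2\,\rho_\star^2\,\diam{\obj}\chi_\star\|\mb u-\mb u'\|_2$ and makes continuity a corollary rather than an observation. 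The paper's own proof of Lemma \ref{lem:imaging} explicitly invokes Theorem \ref{thm:main-perturbation} for this continuity, and that invocation is load-bearing; your argument replaces it with a reason that doesn't work. A secondary, smaller gap: you assume silently that the pointwise formula $\mc D[f](\mb x)=\int\directb{\mb u}(\mb x)f(\mb u)\,d\sigma(\mb u)$ agrees with the $L^2$-valued Riemann integral $\int\directb{\mb u}f(\mb u)\,d\mb u$. That identification is precisely where a Fubini/Tonelli argument hides (test against $\phi\in L^2$ and exchange), and it should be stated. If you replace the heuristic continuity claim with a citation of Theorem \ref{thm:D} and add the one-line Fubini step, your vector-valued-integral route goes through.
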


\noindent The quantity $\bar{\mb y}[\mb u] \in \reals^m$ in Lemma \ref{lem:imaging} can be interpreted as the image of $\obj$ under point illumination from direction $\mb u$. We will see below that under reasonable hypotheses, $\bar{\mb y}[\mb u]$ is continuous in $\mb u$. From Lemma \ref{lem:ext-appx}, if we can approximate these $\bar{\mb y}[\mb u]$ well, we will well-approximate the cone as a whole.

This proof of the lemma uses Fubini's theorem and monotone convergence to change the order of intergration, and then uses the fact that $\barmb{y}[\mb u]$ is continuous in $\mb u$ to conclude that the integrand in \eqref{eqn:imaging-lem-1} is Riemann integrable. The continuity of $\barmb{y}[\cdot]$ will follow from perturbation bounds in the next section. In Appendix \ref{app:imaging-lem-pf}, we use these results to give a formal proof of Lemma \ref{lem:imaging}.

\section{Perturbation Bounds and Sufficient Sample Densities} \label{sec:perturbation}

Based on the assumptions laid out above, we will discuss the properties of the linear operators $\mc P$, $\mc T$, and $\mc D$, and show how to control $\norm{\bar{\mb y}[\mb u]-\bar{\mb y}[\mb u']}{2}$ in terms of $\norm{\mb u-\mb u'}{2}$. The relationship between $\bar{\mb y}[\mb u]$ and $\mb u$ obviously depends on detailed properties of the object $\obj$.  In particular, it depends on two complementary quantities measuring the {\em nonconvexity} of $\obj$:
\vspace{.1in}

\noindent The {\bf pointwise visibility} is fraction of directions that are visible at point $\mb x$, weighted by $\< \mb n, \mb u\>$:
\(
\label{eqn:nu-til-def}
\tilde{\nu}(\mb x)\doteq{\frac{1}{\pi} \int_{\<\mb u, \mb n(\mb x) \> \ge 0} \hspace{-8mm}\innerprod{\mb n(\mb x)}{\mb u} \nu(\mb x, \mb u) \, d\sigma( \mb u ) }\in [0,1].
\)
here $\nu$ is the point-direction visibility indicator function in equation (\ref{eqn:point-dire-vis}).
\begin{figure}[H]
\centering
\includegraphics[width=0.6\textwidth]{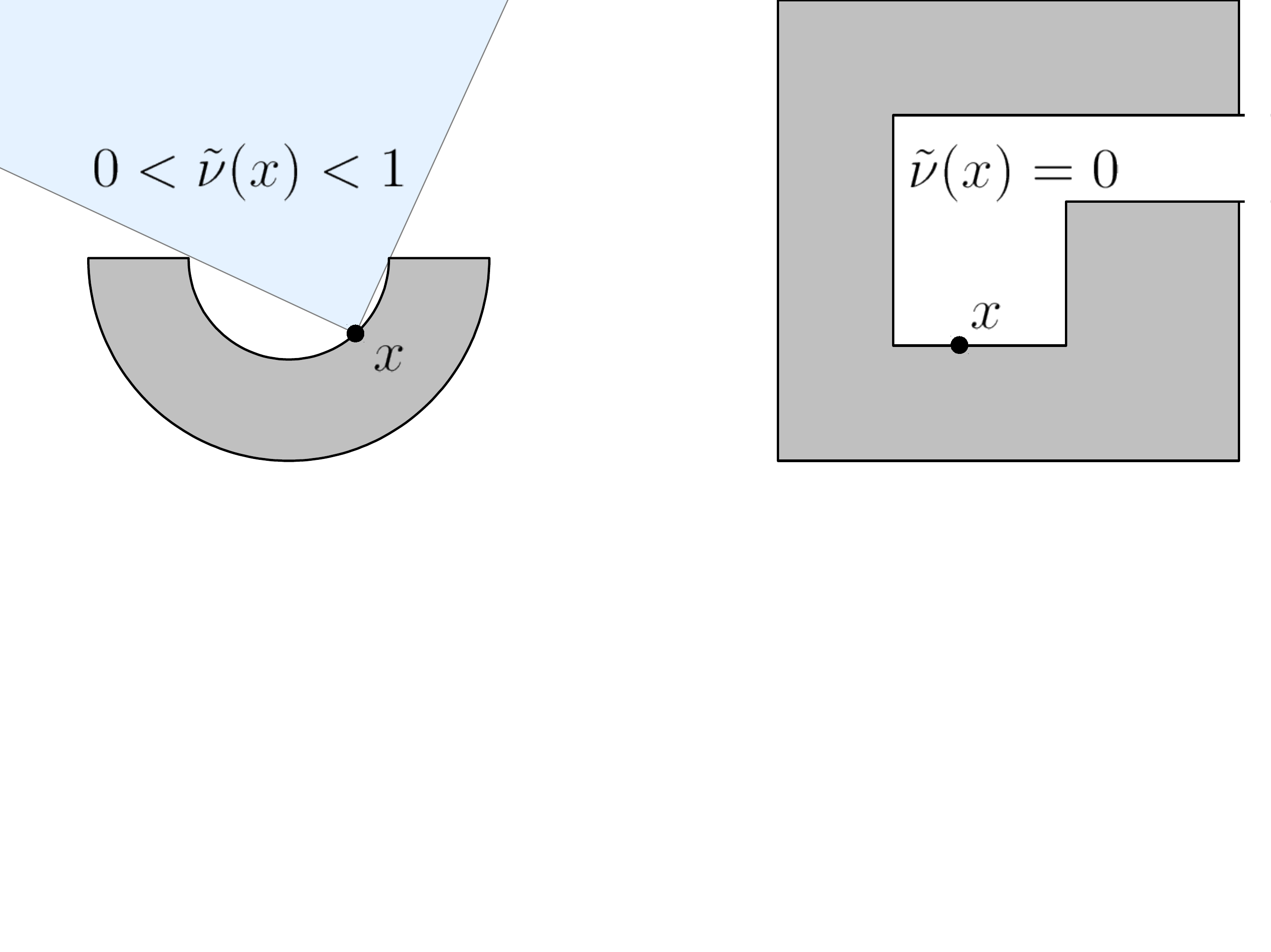}\vspace{-35mm}
\caption{{\bf Pointwise Visibility $\tilde\nu(\mb x)$}}
\label{fig:nu_demo}
\end{figure}
The pointwise visibility $\nu(\mb x)$ is a localized nonconvexity measurement, depending on properties of the object perceived from a point $\mb x$: smaller value of this quantity suggests more complex geometry around $\mb x$. For convex objects, $\nu(\mb x)=1$ for any point $\mb x\in \objbdy$.
\vspace{.1in}

\noindent The other crucial quantity is the total length of the edges that cast shadows on $\obj$ itself, when $\obj$ is illuminated from direction $\mb u$. We call this the {\bf gnomon length} associated with direction $\mb u$.\footnote{The ``gnomon'' is the part of a sundial that casts the shadow.} We reserve the notation $\chi[\mb u]$ for the collection of edges that cast shadows, when the object is illuminated from direction $\mb u$. We will define this quantity formally in the next section, after we have introduced some necessary technical machinery.  For now, Figure \ref{fig:chi_demo}(left) gives a visual example of $\chi[\mb u]$: the edges in this set are highlighted in yellow.
\begin{figure}[H]
\centering
\includegraphics[width=0.6\textwidth]{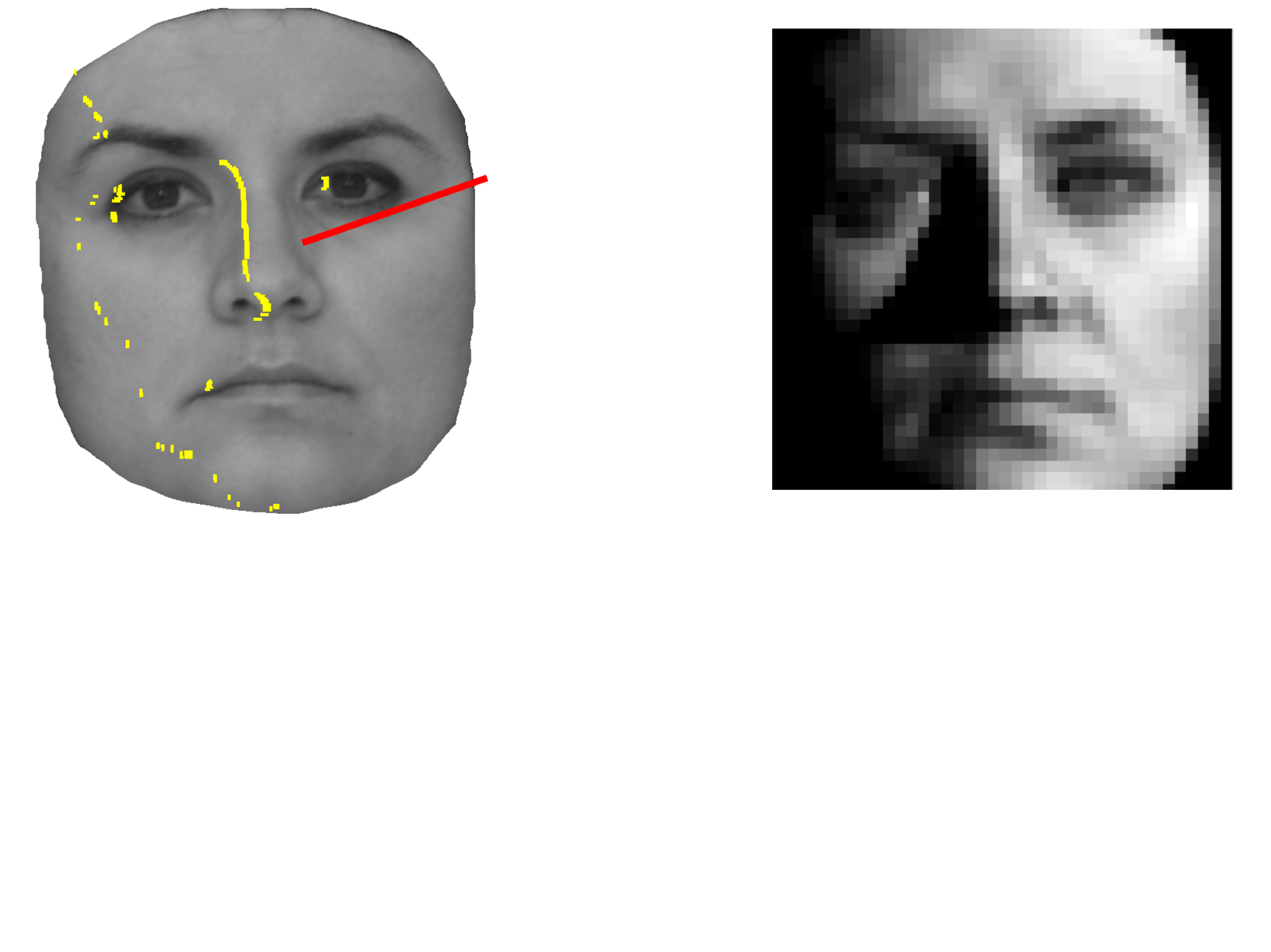}\vspace{-30mm}
%\centerline{\hspace{-8mm}{\bf \footnotesize Illuminated Image $y[\mb u]$}}
\caption{{\bf Shadowing Edges $\chi[\mb u]$} (yellow) under point illumination $\mb u$ (red), with corresponding image on the right.}
\label{fig:chi_demo}
\end{figure}
Compared to $\nu(\mb x)$, $\chi[\mb u]$ is a more global measurement of nonconvexity, depending on the overall geometry of the object: longer {\em shadowing edges length} implies more apparent cast shadows. For convex objects, $\chi[\mb u]=0$ always holds for any illumination direction $\mb u\in\bb S^2$. For nonconvex objects, this quantity helps to bound the change in the image induced by cast shadows, which is a source of considerable difficulty. To state our results more precisely, we begin by introducing some notations and technical machinery for reasoning about the boundary of the shadow region.

\subsection{Shadow Boundaries} \label{subsec:shadow-boundary}
Under lighting direction $\mb u$, the region that is shadowed (not illuminated) is\footnote{Here, the {\em support} $\supp{f} = \set{\mb x \mid f(\mb x) \ne 0 }$ of a function is its set of nonzeros.}
\[
S[\mb u] \;\doteq\; \supp{\directb{\mb u}}^c.
\]
We would like to talk about the boundary of the shadowed region. The follow lemma, which says that the shadowed region $S[\mb u]$ is closed in the relative topology on $\objbdy$, allows us to do so:
\begin{lemma}\label{lem:rc} For all $\mb u \in \sphere^2$, $S[\mb u] \subseteq \objbdy$ is a relatively closed set.
\end{lemma}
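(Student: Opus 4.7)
The plan is to prove the claim by showing that the relative complement
\[
A \;\doteq\; \objbdy \setminus S[\mb u] \;=\; \{\mb x \in \objbdy \mid \directb{\mb u}(\mb x) > 0\}
\]
is relatively open in $\objbdy$. By definition \eqref{eqn:D-def} and the standing assumption $\rho > 0$, this complement equals
\[
A \;=\; \{\mb x \in \Phi \mid \<\mb n(\mb x),\mb u\> > 0 \text{ and } \nu(\mb x,\mb u) = 1\}.
\]
Fix $\mb x^\star \in A$. I must exhibit a relative neighborhood of $\mb x^\star$ in $\objbdy$ contained in $A$, i.e., propagate all three defining conditions to nearby boundary points.

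First, $\Phi = \objbdy \setminus E$ is relatively open because $E = \bigcup_{e \in \mc E} e$ is a finite union of compact line segments, hence closed. Thus some relative neighborhood $U_0$ of $\mb x^\star$ lies in $\relint{\Delta_{i^\star}}$, where $\Delta_{i^\star}$ is the unique face containing $\mb x^\star$. On $U_0$ the outward normal is constantly $\mb n_{i^\star}$, so the sign condition $\<\mb n(\mb x),\mb u\> > 0$ holds throughout $U_0$. What remains, and what is the main obstacle, is to shrink $U_0$ further so that $\nu(\cdot,\mb u) \equiv 1$ persists.

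Assume for contradiction that no such shrinking works: there exist $\mb x_k \in U_0$ with $\mb x_k \to \mb x^\star$ and $\nu(\mb x_k,\mb u) = 0$, so one can choose $t_k > 0$ with $\mb z_k \doteq \mb x_k + t_k\mb u \in \obj$. I would bound $t_k$ on both sides. From above: $\obj$ is a bounded subset of $\reals^3$ (its boundary is a finite union of triangles), so $t_k = \norm{\mb z_k - \mb x_k}{2}$ is bounded. From below: because $\Delta_{i^\star}$ is flat with outward normal $\mb n_{i^\star}$ and $\<\mb n_{i^\star},\mb u\> > 0$, $\mb u$ points strictly out of the half-space supporting $\obj$ near $\mb x^\star$, so there exist a Euclidean neighborhood $N$ of $\mb x^\star$ and an $\eps > 0$ with $\mb x + t\mb u \notin \obj$ for all $\mb x \in N \cap \Delta_{i^\star}$ and $t \in (0,\eps)$; for $k$ large this forces $t_k \ge \eps$. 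Passing to a subsequence, $t_k \to t^\star \in [\eps,\infty)$ and $\mb z_k \to \mb x^\star + t^\star \mb u$, which lies in $\obj$ by closedness. But then the ray $\{\mb x^\star\} + \reals_+ \mb u$ meets $\obj$ at a point distinct from $\mb x^\star$, contradicting $\nu(\mb x^\star,\mb u) = 1$.

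The one delicate step is the uniform lower bound $t_k \ge \eps$: it rests on the local half-space structure of a triangulated object at a point in the relative interior of a face, which requires invoking the gluing condition $\Delta_i \cap \Delta_j \subseteq \mc V \cup \mc E$ to rule out other faces accumulating on the outward side of $\Delta_{i^\star}$ near $\mb x^\star$. Once that is in hand, the remaining steps are routine sequential compactness.
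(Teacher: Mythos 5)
Your proof is correct and follows essentially the same route as the paper's: reduce to a point in $\Phi$, pass to a relative neighborhood contained in a single face, and derive a contradiction from a sequence $\mb x_k \to \mb x^\star$ with $\nu(\mb x_k,\mb u) = 0$, using a uniform lower bound on $t_k$, boundedness, sequential compactness, and closedness of $\obj$. The one place you diverge slightly is the justification of the uniform lower bound: you appeal to a local half-space structure of $\obj$ near $\mb x^\star$ (a statement about the solid $\obj$), whereas the paper gets the bound from the weaker and more directly available fact that $\ball{\mb x^\star}{\tau} \cap \objbdy \subseteq \Delta_{i^\star}$ together with the observation that any point $\mb x_k + t_k \mb u$ lying on $\objbdy$ with $0 < t_k \le \tau/2$ would have to lie in the affine plane of $\Delta_{i^\star}$, which is impossible since $\<\mb n_{i^\star},\mb u\> > 0$. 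Both arguments work, but the paper's version sidesteps the need to reason about the interior of $\obj$; you correctly flag this as the delicate step.
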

\begin{proof} Please see Appendix \ref{app:shadow-proofs}.
\end{proof}
\noindent With this in mind, we can let
\(
\shadowboundary{\mb u} \;\doteq\; \relbdy{S[\mb u]}
\)
denote the shadow boundary, and note that $\shadowboundary{\mb u} \subseteq S[\mb u]$. Points on the shadow boundary  $\shadowboundary{\mb u}$ can be separated into those that come from cast shadows and those that come from attached shadows. For $\mb x \in \objbdy$ and $\mb u \in \sphere^2$, let
\begin{eqnarray}
t_\star(\mb x, \mb u) &\doteq& \inf \set{ t > 0 \mid \mb x - t \mb u \in \objbdy } \;\in\; { [}0,+\infty], \\
t^\star(\mb x, \mb u) &\doteq& \inf \set{ t > 0 \mid \mb x + t \mb u \in \objbdy } \;\in\; { [}0,+\infty],
\end{eqnarray}
where we adopt the standard convention that the infimum of the empty set is $+\infty$. We set
\begin{eqnarray}
\mb x_{\mb u} &=& \mb x - t_\star(\mb x,\mb u) \mb u,  \qquad \forall \; (\mb x,\mb u) \;\; \text{s.t.} \;\; t_\star(\mb x,\mb u) < +\infty. \\
\mb x^{\mb u} &=& \mb x + t^\star(\mb x, \mb u) \mb u,  \qquad \forall \; (\mb x,\mb u) \;\; \text{s.t.} \;\; t^\star(\mb x,\mb u) < +\infty.
\end{eqnarray}
We call $\mb x_{\mb u}$ the {\em shadow projection} of $\mb x$ along direction $\mb u$, and $\mb x^{\mb u}$ the {\em shadow retraction} of $\mb x$ along direction $\mb u$. For light direction $\mb u$, the physical interpretation of the shadow projection of $\mb x$ is that it is the first point that is shadowed by $\mb x$. Conversely, the shadow retraction is the first point that could cast a shadow on $\mb x$. In particular,
\(
\nu(\mb x, \mb u) = 0 \; \iff \; t^\star(\mb x, \mb u) < + \infty.
\)
Notice that because $\obj$ is closed, whenever they exist, we have $\mb x_{\mb u} \in \objbdy$ and $\mb x^{\mb u} \in \objbdy$.

The notion of a shadow retraction allows us to associate to each point $\mb x$ that lies in a cast shadow a point $\mb x^{\mb u}$ which prevents the source from directly illuminating $\mb x$. In particular, if $\mb x$ is in the boundary of a cast shadow, we will see that $\mb x^{\mb u}$ is necessarily an edge point: $\mb x^{\mb u} \in E$. The following technical lemma carries this through precisely:
\begin{lemma}\label{lem:cast-attach} Set $C[\mb u] \;\doteq\; \partial S[\mb u] \cap \Phi$. Then for each $\mb x \in C[\mb u]$, $\mb x^{\mb u}$ exists. If we let
$\chi[\mb u] \doteq \set{ \mb x^{\mb u} \mid \mb x \in C[\mb u] \; }$,
 then $\chi[\mb u] \subseteq E$.
\end{lemma}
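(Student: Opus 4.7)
The plan is to first characterize $C[\mb u]$, then deduce existence of $\mb x^{\mb u}$, and finally show $\mb x^{\mb u} \in E$ by contradiction. The starting observation is that for $\mb x \in C[\mb u] = \partial S[\mb u] \cap \Phi$, the facial neighborhood of $\mb x$ forces $\<\mb n(\mb x), \mb u\> > 0$: since $\mb x \in \Phi$, a relatively open neighborhood $U$ of $\mb x$ in $\objbdy$ lies in a single face with constant outward normal $\mb n(\mb x)$, so if $\<\mb n(\mb x), \mb u\> \le 0$, equation \eqref{eqn:D-def} would force $\directb{\mb u}$ to vanish identically on $U$, placing $U$ inside $S[\mb u]$ and contradicting $\mb x \in \partial S[\mb u]$. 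Combining this with $\mb x \in S[\mb u]$ (which follows from Lemma~\ref{lem:rc}) and $\rho(\mb x) > 0$, the vanishing of $\directb{\mb u}(\mb x)$ forces $\nu(\mb x, \mb u) = 0$, so the ray $\mb x + \reals_+ \mb u$ meets $\obj$ at some positive time. Since $\<\mb n(\mb x), \mb u\> > 0$ places $\mb x + t\mb u$ strictly outside $\obj$ for all sufficiently small $t > 0$, a connectedness argument then produces a first positive $t$ at which $\mb x + t\mb u \in \objbdy$, giving $t^\star(\mb x, \mb u) < +\infty$ and hence existence of $\mb x^{\mb u}$.

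For the second claim, I would assume for contradiction that $\mb x^{\mb u}$ lies in the relative interior of some face $\Delta_j$ with outward normal $\mb n_j$, and perform a case analysis on the sign of $\<\mb n_j, \mb u\>$. The key auxiliary fact, which follows from the infimum definition of $t^\star$ together with the ``outside-start'' observation above, is that $\mb x^{\mb u} - \epsilon \mb u$ lies strictly outside $\obj$ for all sufficiently small $\epsilon > 0$. If $\<\mb n_j, \mb u\> > 0$, the local half-space description of $\obj$ near $\mb x^{\mb u}$ instead places $\mb x^{\mb u} - \epsilon\mb u$ inside $\obj$, a contradiction. If $\<\mb n_j, \mb u\> = 0$, then $\mb u$ is parallel to $\aff{\Delta_j}$, so for small $\epsilon > 0$ the point $\mb x^{\mb u} - \epsilon \mb u$ remains in $\relint{\Delta_j} \subseteq \objbdy$, again contradicting the infimum definition of $t^\star$. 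If $\<\mb n_j, \mb u\> < 0$, continuity of the ray--plane intersection for transverse rays produces a neighborhood $U'$ of $\mb x$ in the face containing $\mb x$ such that for every $\mb x' \in U'$, the ray from $\mb x'$ along $\mb u$ still meets $\relint{\Delta_j}$ at a positive time, so $\nu(\mb x', \mb u) = 0$; combined with $\<\mb n(\mb x'), \mb u\> > 0$ on $U'$, this places the whole neighborhood inside $S[\mb u]$ and contradicts $\mb x \in \partial S[\mb u]$.

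The main obstacle is the transverse case $\<\mb n_j, \mb u\> < 0$: the continuity argument must produce a uniform neighborhood of $\mb x$ whose rays all hit $\relint{\Delta_j}$, which hinges both on $\relint{\Delta_j}$ being relatively open in $\aff{\Delta_j}$ and on the implicit-function-style continuity of the transverse ray--plane intersection map $\mb x' \mapsto \mb x' + t(\mb x')\mb u$. The tangential case is handled by essentially the same relative-openness observation. Once these two geometric subtleties are in place, the exclusion of all three subcases forces $\mb x^{\mb u} \in E$, completing the proof.
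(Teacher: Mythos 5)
Your proposal is correct and follows essentially the same route as the paper's: show $\langle\mb n(\mb x),\mb u\rangle>0$ for $\mb x\in\partial S[\mb u]\cap\Phi$ (which the paper phrases as $\mb x\notin B[\mb u]$), deduce $\nu(\mb x,\mb u)=0$ and hence existence of $\mb x^{\mb u}$, and then obtain a contradiction from the assumption $\mb x^{\mb u}\in\relint{\Delta_j}$ by splitting on whether $\mb u$ is tangent or transverse to $\Delta_j$. Your tangential case ($\langle\mb n_j,\mb u\rangle=0$) is exactly the paper's argument that $\mb u\notin\mathrm{span}(\mb w_1,\mb w_2)$, and your $\langle\mb n_j,\mb u\rangle<0$ case is the paper's full-rank linear-system / continuity argument placing a whole neighborhood of $\mb x$ in $S[\mb u]$.

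The one genuine difference is that the paper handles $\langle\mb n_j,\mb u\rangle>0$ and $\langle\mb n_j,\mb u\rangle<0$ uniformly with the same transverse-intersection argument, whereas you split off $\langle\mb n_j,\mb u\rangle>0$ and dispatch it via an inside/outside contradiction: $\mb x^{\mb u}-\epsilon\mb u$ is strictly outside $\obj$ (by minimality of $t^\star$ and the start of the ray exiting $\obj$), yet the local half-space picture at $\relint{\Delta_j}$ places it inside. That argument is sound but leans on a solid-object ``local half-space'' picture of $\obj$ near a face interior, an assumption the paper never needs to invoke explicitly in this lemma; the paper's proof stays entirely on $\objbdy$ and applies its ray--plane transversality argument to both signs at once. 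So your third case is a harmless but unnecessary extra: the transverse-intersection argument you already give for $\langle\mb n_j,\mb u\rangle<0$ works verbatim for $\langle\mb n_j,\mb u\rangle>0$, which lets you drop the appeal to the interior of $\obj$ and tighten the proof.
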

\begin{proof} Please see Appendix \ref{app:shadow-proofs}.
\end{proof}

\noindent The physical interpretation is that $C[\mb u]$ contains the boundaries of the {\em cast shadows}. $\chi[\mb u]$ consists of those edges that cast the shadows. The important (and physically quite intuitive) point here is that every point on the boundary of a cast shadow can be identified with an edge point in $\chi[\mb u] \subseteq E$. Hence, it is meaningful to talk about the length of the collection of points $\chi[\mb u]$ that cast shadow edges. %This length reflects the nonconvexity of the object: if $\mc O$ is convex, $C[\mb u] = \emptyset$ and $\chi[\mb u] = \emptyset$. If $\mc O$ is nonconvex, $\chi[\mb u]$ can be nonempty. Its length gives a global measure of the nonconvexity of the object. This will be very useful for deriving perturbation bounds.

\subsection{Perturbation bounds}\label{subsec:perturb-bound}
With all the definitions above, we are ready to show how our bounds are phrased in terms of the extreme values of three physical quantities:
\begin{align}
&\text{\bf Maximum length of shadowing edges:} &\chi_\star &\doteq \quad \sup_{\mb u \in \sphere^2} \length{\chi[\mb u]}, \\
&\text{\bf Minimum visibility:} &\nu_\star &\doteq \quad \inf_{\mb x \in \objbdy} \tilde{\nu}(\mb x) \quad\ge\quad  0, \\
&\text{\bf Maximum albedo:} &\rho_\star &\doteq \quad \sup_{\mb x \in \objbdy} \rho(\mb x) \quad\le\quad 1.
\end{align}
For convex objects, $\nu_\star = 1$ and $\chi_\star = 0$. For general objects, $1- \nu_\star$ and $\chi_\star$ can be interpreted as measures of nonconvexity.  In terms of these quantities, we obtain perturbation bounds on $\bar{\mc D}$, $\mc T$ and $\mc P$, which can be combined to bound the error in approximating $\bar{\mb y}[\mb u]$:
%These two quantities appear to capture somewhat complementary information: $\nu_\star$ is localized, depending on properties of the object at a point, while $\chi_\star$ depends more strongly on the global geometry.

\begin{theorem}[Perturbation of direct illumination] \label{thm:D} Suppose that $\obj$ is a triangulated object, and $\rho(\mb x) : \objbdy \to (0,1]$ is strictly positive. Let $\directb{\mb u} \in L^2[\objbdy]$ be as in \eqref{eqn:D-def}. Then for all $\mb u, \mb u' \in \sphere^2$ with $\norm{\mb u - \mb u'}{2} \le \sqrt{2}$, we have
\begin{eqnarray} \label{eqn:D-bound}
\norm{\directb{\mb u} -\directb{\mb u'} }{L^2}^2 \quad\le\quad 2 \, \rho_\star^2 \, \area{\partial \mc O} \norm{\mb u - \mb u'}{2}^2 \;+\; 32\sqrt{2} \, \rho_\star^2 \,\diameter{\mc O} \chi_\star \norm{\mb u - \mb u' }{2}. \qquad
\end{eqnarray}
If $\mc O$ is convex, we have the tighter bound
\begin{equation}  \label{eqn:D-bound-convex}
\norm{\directb{\mb u} - \directb{\mb u'} }{L^2}^2 \;\le \;  \rho_\star^2 \, \area{\partial \mc O} \norm{\mb u - \mb u'}{2}^2.
\end{equation}
\end{theorem}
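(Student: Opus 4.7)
The plan is to pointwise decompose
\begin{align*}
\directb{\mb u}(\mb x) - \directb{\mb u'}(\mb x) \;=\;& \rho(\mb x)\,\nu(\mb x,\mb u)\bigl[\innerprod{\mb n(\mb x)}{\mb u}_+ - \innerprod{\mb n(\mb x)}{\mb u'}_+\bigr] \\
&+\; \rho(\mb x)\,\innerprod{\mb n(\mb x)}{\mb u'}_+\bigl[\nu(\mb x,\mb u) - \nu(\mb x,\mb u')\bigr],
\end{align*}
square, invoke $(a+b)^2 \le 2(a^2+b^2)$, and integrate over $\objbdy$ with respect to $\mu_{\objbdy}$. The first ``smooth'' term is easy: since $[\cdot]_+$ is $1$-Lipschitz and $\mb n$ is a unit vector, $|\innerprod{\mb n}{\mb u}_+ - \innerprod{\mb n}{\mb u'}_+|\le \norm{\mb u-\mb u'}{2}$, and using $\rho\le\rho_\star$, $\nu\le 1$, the contribution integrates to at most $2\rho_\star^2\,\area{\objbdy}\,\norm{\mb u-\mb u'}{2}^2$, accounting for the first term of \eqref{eqn:D-bound}.

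The real work is the ``shadow-jump'' term, where $\nu(\mb x,\mb u) - \nu(\mb x,\mb u')\in\{-1,0,1\}$, so one must estimate the $\mu_{\objbdy}$-measure of
\[
\Delta_{\mb u,\mb u'} \;\doteq\; \{\mb x\in\Phi : \nu(\mb x,\mb u) \ne \nu(\mb x,\mb u')\}.
\]
The hard part is to show that $\mu_{\objbdy}(\Delta_{\mb u,\mb u'}) \lesssim \diameter{\mc O}\,\chi_\star\,\norm{\mb u-\mb u'}{2}$. The approach is to argue that any $\mb x\in\Delta_{\mb u,\mb u'}$ must lie in the band swept by the cast-shadow boundary as the light direction interpolates between $\mb u$ and $\mb u'$: by Lemma~\ref{lem:cast-attach}, every point of $C[\mb u]$ (respectively $C[\mb u']$) is the shadow projection along $\mb u$ (respectively $\mb u'$) of some edge point in $\chi[\mb u]\subseteq E$ (respectively $\chi[\mb u']$). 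For a fixed edge point $\mb x^{\mb u}\in E$, the shadow projection $(\mb x^{\mb u})_{\mb v}$ as $\mb v$ traces the great-circle arc from $\mb u$ to $\mb u'$ moves through a curve of length at most $t_\star(\mb x^{\mb u},\mb v)\,\norm{\mb u-\mb u'}{2} \le \diameter{\mc O}\,\norm{\mb u-\mb u'}{2}$. Parameterizing the swept region by arc length along edges in $\chi[\mb u]\cup\chi[\mb u']$ and by the interpolation parameter, and bounding the Jacobian that converts this $1{+}1$-parameter family into area on $\objbdy$ (a bound degrading as $\mb n$ becomes nearly parallel to $\mb u$, hence the numerical constant $32\sqrt{2}$), yields the area bound. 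Multiplying by $\rho_\star^2$ (using $\innerprod{\mb n}{\mb u'}_+^2\le 1$) produces the second term of \eqref{eqn:D-bound}.

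For the convex case, no cast shadows are possible, so $\chi[\mb u]=\emptyset$ for every $\mb u$, and moreover $\nu(\mb x,\mb u)=1$ whenever $\innerprod{\mb n(\mb x)}{\mb u}>0$; hence $\nu(\mb x,\mb u)\innerprod{\mb n(\mb x)}{\mb u}_+ = \innerprod{\mb n(\mb x)}{\mb u}_+$ identically. The pointwise difference reduces to the smooth term alone, with no need for the $(a+b)^2$ splitting, and integrating directly gives the sharper bound \eqref{eqn:D-bound-convex}.

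I expect the main obstacle to be making the swept-band argument fully rigorous: showing that $\Delta_{\mb u,\mb u'}$ is indeed contained in this union of cast-shadow sweeps (attached-shadow jumps occur only where $\innerprod{\mb n}{\mb u}_+ = 0$ on one side and are absorbed into the smooth term, while edge points lie in the $\mu_{\objbdy}$-null set $E$), and extracting the correct geometric constant from the area-of-sweep Jacobian. The hypothesis $\norm{\mb u-\mb u'}{2}\le\sqrt{2}$ ensures that the interpolating directions span less than a hemisphere so that the swept bands remain well-behaved and the shadow projections vary continuously along the sweep.
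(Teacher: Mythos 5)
Your decomposition into a Lipschitz ``smooth'' term and a cast-shadow ``jump'' term, and the idea of controlling the jump term by the region swept by the cast-shadow boundary as the light direction interpolates (parameterized via the shadowing edges $\chi$ and Lemma~\ref{lem:cast-attach}), does capture the paper's geometric picture. Your treatment of the smooth term and of the convex case are both correct.

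The gap is in the central measure estimate. You assert that, for $\mb z \in E$, the shadow projection $\mb z_{\mb v}$ traces a curve of length at most $t_\star(\mb z,\mb v)\,\norm{\mb u-\mb u'}{2} \le \diameter{\obj}\norm{\mb u - \mb u'}{2}$ as $\mb v$ interpolates, and from this that $\mu_{\objbdy}(\Delta_{\mb u,\mb u'}) \lesssim \diameter{\obj}\,\chi_\star\,\norm{\mb u - \mb u'}{2}$. Neither of these holds. The shadow projection is the intersection of a ray with a face $\Delta$, and as $\mb v$ rotates, the depth $t_\star(\mb z,\mb v)$ itself changes at a rate proportional to $1/\innerprod{\mb n(\Delta)}{\mb v}$; the intersection point moves a distance on the order of $\norm{\mb u - \mb u'}{2}/\innerprod{\mb n}{\mb v}$, which is unbounded near grazing incidence. (Concretely: an edge at height $h$ above a face in the $xy$-plane under light at polar angle $\theta$ casts its boundary at $x = h\tan\theta$, which moves by $h\sec^2\theta\,d\theta$, so the swept strip has area $\approx \chi_\star h\sec^2\theta\,\norm{\mb u - \mb u'}{2}$ and is not controlled by $\diameter{\obj}\chi_\star\norm{\mb u-\mb u'}{2}$ as $\theta \to \pi/2$.) So the constant $32\sqrt{2}$ cannot come from ``bounding the Jacobian'' of the sweep; that Jacobian is genuinely unbounded, and the measure bound $\mu_{\objbdy}(\Delta_{\mb u,\mb u'}) \lesssim \diameter{\obj}\chi_\star\norm{\mb u-\mb u'}{2}$ is false.

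What the paper does instead is never discard the angular factor on the shadow-jump region. Writing $r^\star(\mb x)$ for the first interpolation time at which $\mb x$ falls into shadow, it decomposes $\innerprod{\mb n(\mb x)}{\mb u'} = \innerprod{\mb n(\mb x)}{\mb u' - \bar{\mb u}(r^\star(\mb x))} + \innerprod{\mb n(\mb x)}{\bar{\mb u}(r^\star(\mb x))}$, absorbs the first piece into a Lipschitz estimate (which is why the smooth term appears with constant $2$), and bounds $\int \innerprod{\mb n(\mb x)}{\bar{\mb u}(r^\star(\mb x))}^2\, d\mu_{\objbdy}(\mb x)$ over the swept region by discretizing it into quadrilaterals $Q$ (this discretization is where the semialgebraicity machinery enters). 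The point is that on $Q$ the area grows like $\innerprod{\mb n}{\bar{\mb u}(r^\star)}^{-2}$ while the integrand decays like $\innerprod{\mb n}{\bar{\mb u}(r^\star)}^{2}$, and the two cancel to give $\int_Q \lesssim \diameter{\obj}\norm{\mb v - \mb w}{2}|r_1-r_2|\norm{\mb u-\mb u'}{2}$. Your bound $\innerprod{\mb n}{\mb u'}_+^2 \le 1$ throws away exactly this compensation; retaining it, and handling the discrepancy between $\mb u'$ and $\bar{\mb u}(r^\star(\mb x))$ via a separate Lipschitz piece, is the essential missing ingredient.
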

\noindent The first term of \eqref{eqn:D-bound} accounts for continuous changes induced by $\innerprod{\mb n(\mb x)}{\mb u}_+$. The second term accounts for nonsmooth changes due to cast shadows, which are reflected in the term $\nu(\mb x, \mb u)$. The proof of Theorem \ref{thm:D} is somewhat technical. We delay it to Appendix \ref{sec:direct}.

{After direct illumination, the object is also subject to interreflection, $\mc T$. This operator is also bounded:}

\begin{lemma} \label{lemma:T}
The operator $\mc T$ satisfies $\norm{\mc T}{L^2 \to L^2}\le \rho_\star\cdot(1 - \nu_\star) $.
\end{lemma}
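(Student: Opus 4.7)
The plan is to view $\mc T$ as an integral operator with kernel $\kappa(\mb x,\mb x')$ and invoke Schur's test: it suffices to show that both $\sup_{\mb x}\int \kappa(\mb x,\mb x')\,d\mu_{\objbdy}(\mb x')$ and $\sup_{\mb x'}\int \kappa(\mb x,\mb x')\,d\mu_{\objbdy}(\mb x)$ are at most $\rho_\star(1-\nu_\star)$; then $\|\mc T\|_{L^2\to L^2}\le \rho_\star(1-\nu_\star)$ follows. (Equivalently, one can apply Cauchy--Schwarz on the inner integral followed by Fubini, factoring out one row-sum and one column-sum.)

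The main computation is a spherical change of variables on the first integral. For fixed $\mb x\in\Phi$, the map $\mb u\mapsto \mb x^{\mb u}=\mb x+t^\star(\mb x,\mb u)\mb u$ puts the set of directions $\{\mb u\in\sphere^2:\<\mb n(\mb x),\mb u\>>0,\;\nu(\mb x,\mb u)=0\}$ in bijection with the points $\mb x'\in\objbdy$ satisfying $V(\mb x,\mb x')=1$ and $\<\mb n(\mb x),\mb x'-\mb x\>>0$, which are the only $\mb x'$ contributing to the integral. Writing $r=\|\mb x'-\mb x\|$ and $\mb u=(\mb x'-\mb x)/r$, the standard solid-angle Jacobian gives
\[
d\mu_{\objbdy}(\mb x') \;=\; \frac{r^{2}}{\lvert\<\mb n(\mb x'),\mb u\>\rvert}\,d\sigma(\mb u),
\]
and on visible pairs $-\<\mb n(\mb x'),\mb u\>=\lvert\<\mb n(\mb x'),\mb u\>\rvert>0$. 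Plugging into $\kappa$ cancels every geometric factor except $\<\mb n(\mb x),\mb u\>$, yielding
\[
\int \kappa(\mb x,\mb x')\,d\mu_{\objbdy}(\mb x') \;=\; \frac{\rho(\mb x)}{\pi}\int_{\<\mb n(\mb x),\mb u\>\ge 0,\,\nu(\mb x,\mb u)=0}\<\mb n(\mb x),\mb u\>\,d\sigma(\mb u).
\]

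Now I would invoke the elementary identity $\tfrac{1}{\pi}\int_{\<\mb n,\mb u\>\ge 0}\<\mb n,\mb u\>\,d\sigma(\mb u)=1$ and split the hemisphere into $\{\nu=1\}\cup\{\nu=0\}$. Using the definition of $\tilde\nu(\mb x)$, the right-hand side equals $\rho(\mb x)\bigl(1-\tilde\nu(\mb x)\bigr)\le \rho_\star(1-\nu_\star)$. For the column-sum $\int \kappa(\mb x,\mb x')\,d\mu_{\objbdy}(\mb x)$, I pull out the factor $\rho(\mb x)\le\rho_\star$, note that the remaining kernel is symmetric in $(\mb x,\mb x')$, and apply exactly the same change of variables with the roles of $\mb x$ and $\mb x'$ exchanged to obtain the same bound $\rho_\star(1-\tilde\nu(\mb x'))\le\rho_\star(1-\nu_\star)$. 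Schur's test then closes the argument.

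The main obstacle is justifying the spherical change of variables rigorously on a triangulated (non-smooth) surface. The delicate set is where the bijection breaks down: directions $\mb u$ with $\<\mb n(\mb x),\mb u\>=0$ or with $\mb x^{\mb u}\in E$, and symmetrically pairs $(\mb x,\mb x')$ with $\mb x'\in E$. These exceptional sets are measure zero on both sides --- the directions pointing to edges have zero spherical measure for almost every $\mb x$, and $E$ itself is $\mu_{\objbdy}$-null by the construction of $\mu_{\objbdy}$ in Appendix~\ref{app:int-obj-bdy} --- so they do not affect the integrals. The cleanest way to package this is to partition $\objbdy=\bigsqcup_i\relint{\Delta_i}\cup E$, apply the local change of variables on each face interior (where everything is smooth and $\mb n(\mb x')$ is constant), and sum; Fubini then delivers the global identity.
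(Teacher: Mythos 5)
Your proposal is correct and follows essentially the same route as the paper: both establish the row-sum and column-sum bounds $\int\kappa(\mb x,\mb x')\,d\mu_{\objbdy}\le\rho_\star(1-\nu_\star)$ via the solid-angle change of variables that identifies the integral of $\kappa$ over $\mb x'$ with $\rho(\mb x)(1-\tilde\nu(\mb x))$, and then close via Schur's test (which the paper invokes as Theorem~II.1.6 of Conway). The only cosmetic difference is that you spell out the Jacobian $d\mu_{\objbdy}(\mb x')=r^2\lvert\<\mb n(\mb x'),\mb u\>\rvert^{-1}\,d\sigma(\mb u)$ and the hemisphere identity explicitly, whereas the paper compresses this into a one-line change-of-variables identity for $\tilde\nu$.
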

\noindent We prove this bound in Appendix \ref{app:T}. In practice, $\nu_\star$ is bounded away from zero, and $\rho_\star$ is bounded away from one. This implies that $\norm{\mc T}{L^2 \to L^2}<1$, and
\(
\norm{(\mc I-\mc T)^{-1}}{L^2 \to L^2} \;\le\; ( 1-\rho_\star\cdot(1-\nu_\star) )^{-1}.
\)
This inequality controls the total effect of interreflection for all bounces. For convex objects,  $\nu_\star=1$, and $\mc T$ does not participate in the image formation process.

{At last, the effect of projection and sampling can be controlled in terms of the properties of the sensor:}

\begin{lemma}\label{lemma:P} Under our sensor model, let $\ell = \min \set{ \< \mb e_3, \mb x \> \mid \mb x \in \obj }$ be the depth of the object, and set $\beta_c = \frac{\gamma_c}{4}\frac{d_c^2}{f_c^2}$. The projection and sampling operator $\mc P$ satisfies
\begin{equation}
\norm{\mc P}{L^2 \to \ell^2}\;\,\le\;\, 2^{1/4}\, \beta_c f_c s_c / \ell.
\end{equation}
\end{lemma}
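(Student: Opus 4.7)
The plan is to bound $\norm{\mc P[g]}{2}^2 = \sum_{i=1}^m y_i^2$ pixel-by-pixel: (i) apply Cauchy--Schwarz to each pixel integral \eqref{eqn:image-irradiance}, (ii) change variables from the image tile $I_i$ back to $\objbdy$ via the perspective projection $\mf p$, and (iii) sum using pairwise disjointness of the preimages $\mf p^{-1}(I_i)$ inside the visible portion of $\objbdy$. Every remaining geometric factor will then collapse under the depth bound $\innerprod{\mb x}{\mb e_3} \ge \ell$.

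First, I would apply Cauchy--Schwarz on $L^2(I_i,d\mu)$ against the constant $1$; since $\mu(I_i)=s_c^2$ this yields
\[
y_i^2 \;\le\; \beta_c^2\, s_c^2 \int_{I_i} g(\mf p^{-1}(\mb z))^2 \innerprod{\mb z/\norm{\mb z}{2}}{\mb e_3}^8 \, d\mu(\mb z).
\]
Next, I would change variables $\mb z = \mf p(\mb x)$ on the visible portion of $\objbdy$, using the standard pinhole-projection Jacobian $d\mu(\mb z) = \frac{f_c^2\, |\innerprod{\mb n(\mb x)}{\mb x}|}{\innerprod{\mb x}{\mb e_3}^3}\, d\mu_{\objbdy}(\mb x)$ together with the ray identity $\innerprod{\mb z/\norm{\mb z}{2}}{\mb e_3} = \innerprod{\mb x}{\mb e_3}/\norm{\mb x}{2}$. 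After substitution the integrand becomes $g(\mb x)^2$ times the weight $\frac{f_c^2\, \innerprod{\mb x}{\mb e_3}^5\, |\innerprod{\mb n(\mb x)}{\mb x}|}{\norm{\mb x}{2}^8}$, which the inequalities $|\innerprod{\mb n}{\mb x}|\le\norm{\mb x}{2}$, $\innerprod{\mb x}{\mb e_3}\le\norm{\mb x}{2}$, and $\norm{\mb x}{2}\ge\ell$ bound pointwise by $f_c^2/\ell^2$; any residual slack can be absorbed into the $2^{1/4}$ constant.

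Summing over $i$ and invoking pairwise disjointness of $\set{\mf p^{-1}(I_i)}$ in $\objbdy$ gives
\[
\sum_i y_i^2 \;\le\; \beta_c^2\, s_c^2\, f_c^2\, \ell^{-2} \int_{\objbdy} g(\mb x)^2\, d\mu_{\objbdy}(\mb x) \;=\; (\beta_c s_c f_c/\ell)^2 \norm{g}{L^2}^2,
\]
and the claim follows after taking square roots. The main technical obstacle is justifying the change of variables, since $\mf p^{-1}$ is not a global diffeomorphism: it is undefined on occlusion contours and discontinuous across projections of the edges $\mc E$. Within each $I_i$, however, the union of these bad loci is a finite collection of piecewise-smooth curves, hence a $\mu$-null set, and on its complement $\mf p$ restricts to a diffeomorphism of relatively open subsets of $\Phi$. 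Applying the area formula there and discarding the null set then legitimizes the Jacobian computation above.
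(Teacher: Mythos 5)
Your proposal is correct, and in fact yields a slightly sharper constant (dropping the $2^{1/4}$). The overall architecture matches the paper's Appendix~\ref{sec:D}: bound each pixel functional $\mc P_i$ separately, change variables between the image tile and the object surface via $\mf p$, and sum using disjointness of the preimages $\mf p^{-1}[I_i]$ in $\objbdy$. You differ in two linked technical choices. First, you apply Cauchy--Schwarz against the constant function on $I_i$ to extract $\mu(I_i)=s_c^2$; the paper instead computes $\norm{\mc P_i}{L^2\to\reals}$ exactly as the $L^2(\objbdy)$ norm of the functional's kernel (Lemma~\ref{lem:Pi-individual}), pushing the integral through the charts $\varphi_j$ and back to image coordinates. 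Second, and more decisively, you invoke the exact solid-angle Jacobian $d\mu(\mb z)=f_c^2\,|\innerprod{\mb n(\mb x)}{\mb x}|\,\innerprod{\mb x}{\mb e_3}^{-3}\,d\mu_{\objbdy}(\mb x)$, so the combined weight $\cos^8\alpha\cdot|\det(\partial(\mf p\circ\varphi_j)/\partial\mb w)|$ collapses pointwise to $f_c^2\,\innerprod{\mb x}{\mb e_3}^5\,|\innerprod{\mb n(\mb x)}{\mb x}|\,\norm{\mb x}{2}^{-8}\le f_c^2\,\norm{\mb x}{2}^{-2}\le f_c^2/\ell^2$. The paper instead dominates the $2\times2$ determinant by the squared operator norm of $\partial\tilde{\mf p}/\partial\mb x$, evaluated through the rank-one-perturbation identity of Lemma~\ref{lem:P}, and then absorbs the remaining slack via $\cos^8\alpha\,(1+\tan\alpha)\le\sqrt{2}$ --- which is exactly where the $2^{1/4}$ originates. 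Your route trades the auxiliary Lemma~\ref{lem:P} and that optimization for the burden of justifying the pinhole Jacobian; the paper's route is more mechanical but slightly less tight. Your caveat about the regularity of $\mf p^{-1}$ across occlusion contours and edge projections is well placed, and corresponds to the paper's restriction of the pixel integral to $I_i\cap\mathrm{im}(\mf p)$ and its chart-by-chart decomposition of $\objbdy$.
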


Finally, putting these three bounds together, we obtain a perturbation for the images $\barmb{y}[\cdot]$ of $\obj$ under point illumination below:
\begin{theorem}\label{thm:main-perturbation} Under our hypotheses, for any $\mb u$, $\mb u' \in \sphere^2$ with $\norm{\mb u - \mb u'}{2} \le \sqrt{2}$,
\[
\norm{\bar{\mb y}[\mb u] - \bar{\mb y}[\mb u']}{2} \;\le\; 2^{1/4}\frac{ \, \beta_c \rho_\star f_c s_c}{\ell ( 1 - \rho_\star(1-\nu_\star))} \times \left( 2\, \area{\objbdy} \norm{\mb u - \mb u'}{2}^2 +32\sqrt2 \, \diameter{\obj} \chi_\star \norm{\mb u - \mb u'}{2} \right)^{1/2},
\]
\end{theorem}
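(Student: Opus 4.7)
The plan is to simply combine the three perturbation/norm bounds established just above (Theorem \ref{thm:D}, Lemma \ref{lemma:T}, Lemma \ref{lemma:P}) using the factorization of the imaging operator provided by Lemma \ref{lem:imaging} and submultiplicativity of operator norms. No new estimates are needed; all the technical work already lives in those three statements.

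First I would write
\[
\bar{\mb y}[\mb u] - \bar{\mb y}[\mb u'] \;=\; \mc P (\mc I - \mc T)^{-1}\bigl( \directb{\mb u} - \directb{\mb u'} \bigr),
\]
which is immediate from Lemma \ref{lem:imaging} and linearity of $\mc P$ and $(\mc I-\mc T)^{-1}$. Taking $\ell^2$ norms and applying submultiplicativity of the operator norms $\norm{\cdot}{L^2\to\ell^2}$ and $\norm{\cdot}{L^2\to L^2}$ yields
\[
\norm{\bar{\mb y}[\mb u] - \bar{\mb y}[\mb u']}{2} \;\le\; \norm{\mc P}{L^2\to\ell^2}\,\norm{(\mc I-\mc T)^{-1}}{L^2\to L^2}\,\norm{\directb{\mb u}-\directb{\mb u'}}{L^2}.
\]

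Next I would insert the three bounds. Lemma \ref{lemma:P} gives $\norm{\mc P}{L^2\to\ell^2} \le 2^{1/4} \beta_c f_c s_c/\ell$. Lemma \ref{lemma:T} gives $\norm{\mc T}{L^2\to L^2}\le \rho_\star(1-\nu_\star) < 1$, so the Neumann series converges and $\norm{(\mc I-\mc T)^{-1}}{L^2\to L^2}\le (1-\rho_\star(1-\nu_\star))^{-1}$. For the last factor, Theorem \ref{thm:D} together with $\sqrt{a+b}\le\sqrt{a}+\sqrt{b}$ (or, more directly, pulling the common $\rho_\star^2$ out before taking the square root) gives
\[
\norm{\directb{\mb u}-\directb{\mb u'}}{L^2} \;\le\; \rho_\star \left( 2\,\area{\objbdy}\,\norm{\mb u-\mb u'}{2}^2 \;+\; 32\sqrt{2}\,\diameter{\obj}\,\chi_\star\,\norm{\mb u-\mb u'}{2} \right)^{1/2}.
\]

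Multiplying the three bounds and collecting the constants produces exactly the claimed inequality. There is no real obstacle here: the entire difficulty of the theorem has been absorbed into Theorem \ref{thm:D} (the delicate shadow-boundary argument controlling $\chi_\star$) and into verifying the sub-unit operator norm of $\mc T$ in Lemma \ref{lemma:T}; the present statement is just the clean assembly of those pieces via the factorization in Lemma \ref{lem:imaging}. The only mild care required is noting that $\norm{\mb u-\mb u'}{2}\le\sqrt 2$ is precisely the hypothesis under which Theorem \ref{thm:D} applies, which matches the hypothesis of the theorem as stated.
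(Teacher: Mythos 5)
Your proposal is correct and coincides with the paper's own (implicit) proof: the paper states the result immediately after Lemmas \ref{lemma:T} and \ref{lemma:P} with the remark ``putting these three bounds together,'' which is exactly the assembly via submultiplicativity and the factorization $\bar{\mb y}[\mb u] = \mc P(\mc I - \mc T)^{-1}\directb{\mb u}$ from Lemma \ref{lem:imaging} that you carry out. The arithmetic with the pulled-out $\rho_\star$ and the hypothesis $\norm{\mb u - \mb u'}{2} \le \sqrt{2}$ matching Theorem \ref{thm:D} are both handled correctly.
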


\paragraph{Number of Sample Images.}
To our knowledge, Theorems \ref{thm:D}-\ref{thm:main-perturbation} are new, and could be useful for other problems in vision and graphics. This bound depends only on properties of the object and imaging system that can be known or estimated. In conjunction with Lemma \ref{lem:ext-appx}, it gives a guideline for choosing the sampling density that {\em guarantees} a representation that works for every illumination $f \in \mc F_\alpha$.

In particular, as $\norm{\mb u - \mb u'}{2}\to 0$, Theorem \ref{thm:main-perturbation} suggests that $\norm{\bar{\mb y}[\mb u] - \bar{\mb y}[\mb u']}{2}$ is proportional to $\norm{\mb u - \mb u'}{2}^{1/2}$. We can deduce that for guaranteed $\eps$-approximation verification with ambient illumination level $\alpha$, it would require
\(
\label{eqn:sample_number_order}
n(\alpha,\eps)\;=\; \frac{\mathrm{const}( \mathtt{sensor}, \mathtt{object} ) }{(\alpha\eps)^{4}}
\)
sample images --  polynomial in the approximation error $\eps$, ambient level $\alpha$ and dimension $m$.  This is possible due to the very special structure of the extreme rays $\bar{\mb y}[\mb u]$ of $C_\alpha$: they lie on a submanifold of dimension $2$. In contrast, general cone approximation in $\reals^m$ requires a number of samples exponential in $m$ \cite{Bronshteyn1976}.

\section{Cone Preserving Complexity Reduction\label{sec:Complexity-Reduction}}
Although the sample complexity $n(\alpha,\eps)$ in \eqref{eqn:sample_number_order} is polynomial in $\eps^{-1}$, it can still be very large. This makes working directly with the dictionary $\bar{\mb A} \in \reals^{m \times n}$ problematic in practice.
%At test time, the main computational burden comes from computing the gradient of the objective when solving a nonnegative least squares problem \cite{Byrd1995-SJC,Kim2010-pp}, which involves matrix-vector multiplies requiring $O\left(mn\right)$-time.
Hence, we would like to find a surrogate $\widehat{\mathbf{A}}$ that is structured in such a way as to enable efficient computation, while still belonging to the set
\(
\Omega_{0} \! \doteq\!\left\{\! \widehat{\mathbf{A}}\!\in\!\Re^{m\times n}\;\middle|\; \delta\left({\rm cone(\bar{\mb A})},{\rm cone(\widehat{\mathbf{A}})}\right)\!\le\!\gamma\right\}
\)
for guaranteed verification. If $\widehat{\mb A}$ can be expressed as $\mb L + \mb S$, where $\mb L$ has rank $r$ and $\mb S$ has $k$ nonzero entries, product $\widehat{\mb A} \mb x$ can be computed in time $O( (m+n)r + k)$, much smaller than $O(mn)$. Empirical evidence suggests that this model gives a good approximation for images under varying illuminations (Figure \ref{fig:ls_demo}): the low-rank term captures the smooth variations \cite{Basri2003-PAMI}, while cast shadows are often sparse \cite{Wright2009-PAMI}. The effectiveness of such model has been noted, e.g., in \cite{Candes2011-JACM}, and exploited for robust photometric stereo by \cite{Wu2010-ACCV}.
\begin{figure}[h]
\centering
\includegraphics[width=4in]{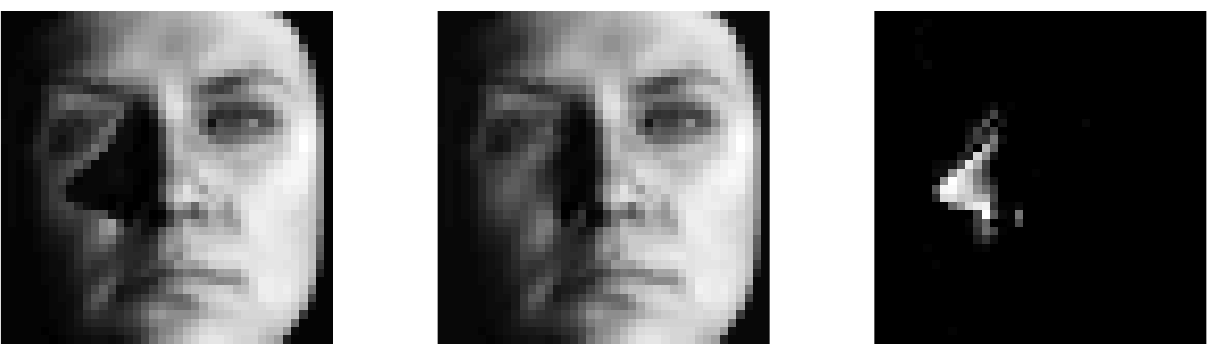}
\caption{{\bf Low Rank + Sparse Decomposition.} Left: input $\bar{\mb A}$. Middle: low-rank term $\mb L$. Right: sparse term $\mb S$.}
\label{fig:ls_demo}
\end{figure}

%If $\gamma$ is small, from Section 2 we know that we sacrifice little in working with  $\widehat{\mb{A}}$ instead of $\bar{\mb{A}}$. Since here we have the freedom to choose $\widehat{\mathbf{A}}$ among set $\Omega_0$, we can encourage $\widehat{\mb{A}}$ to possess good structure that enables both fast matrix-vector operations and efficient storage. {\em What are those good structures}?
To build a framework for complexity reduction with guaranteed approximation quality, we start with the following problem, which seeks the {\em lowest-complexity} pair $(\mb L,\mb S)$ that suffice for guaranteed verification:
\begin{eqnarray}
\label{eq:low_rank+sparse}
 & \min_{\left(\mathbf{L},\mathbf{S}\right)} & {\rm rank}\left(\mathbf{L}\right)+\lambda\|\mathbf{S}\|_{0}\nonumber \\
 & {\rm {s.t.}} & \mathbf{L}+\mathbf{S}=\widehat{\mathbf{A}}\in\Omega_{0}.
\end{eqnarray}
Note that the constraint illustrates a basic difference between our setting here and all of the aforementioned works on low-rank and sparse recovery. Previous works aimed at statistical estimation of $\mb L$ and $\mb S$, and hence worked with simple constraints of the form $\norm{ \mb L + \mb S - \bar{\mb{A}} }{F} \le \eps$. Here, we care about preserving the performance guarantee for detection -- in particular, ensuring that $\cone{\mb L + \mb S}$ and $\cone{\bar{\mb{A}}}$ are close in Hausdorff sense. This forces us to work with a more complicated set $\Omega_0$ of matrices, giving a very different optimization problem. The following result shows how to convexify $\Omega_0$, to obtain a tractable convex optimization problem whose solution is guaranteed to well-approximate $\cone{\bar{\mb{A}}}$, {\em in the sense required by the application}.

One immediate relaxation is to replace nonconvex objectives rank and $\ell^{0}$-norm with their convex envelope nuclear norm $\|\cdot\|_{*}$ (sum of all singular values) and the $\ell^{1}$-norm (sum of absolute values of all entries) respectively.
For the nonconvex domain $\Omega_{0}$, we will instead work on one of its convex subsets $\Omega_1$, which is defined via a bound on a supremum of convex functions of $\widehat {\mb A}$ as follows:

\begin{lemma}\label{lemma:lrsd}
If $\gamma'\le\frac{\gamma}{\gamma+1},$ we have $\Omega_{1}\subseteq\Omega_{0}$, where
\(
\Omega_{1}\doteq\left\{ \widehat{\mathbf{A}}\!\in\!\Re^{m\times n}\;\middle|\;\max_{\mathbf{x}\ge\mathbf{0},\norm{\bar{\mb A}\mathbf{x}}{2}\le1}\norm{\bar{\mb A}\mathbf{x}-\widehat{\mathbf{A}}\mathbf{x} }{2}\le\gamma'\right\}.  \label{eqn:omega1}
\)
\end{lemma}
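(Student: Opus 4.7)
The plan is to verify the two suprema appearing in the definition of $\delta\bigl(\cone{\bar{\mb A}},\cone{\widehat{\mb A}}\bigr)$ separately, given any $\widehat{\mb A}\in\Omega_1$. The key preliminary observation is that the bound defining $\Omega_1$ is homogeneous in $\mb x$: by scaling, the constraint $\max_{\mb x\ge\mb 0,\,\|\bar{\mb A}\mb x\|_2\le 1}\|\bar{\mb A}\mb x-\widehat{\mb A}\mb x\|_2\le\gamma'$ is equivalent to
\[
\|\bar{\mb A}\mb x-\widehat{\mb A}\mb x\|_2 \;\le\; \gamma'\,\|\bar{\mb A}\mb x\|_2 \qquad \forall\,\mb x\ge\mb 0,
\]
which is the form I would use throughout.

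For the first supremum, I would take any unit $\mb y=\bar{\mb A}\mb x\in\cone{\bar{\mb A}}$ with $\mb x\ge\mb 0$, and use $\widehat{\mb A}\mb x$ as a witness in $\cone{\widehat{\mb A}}$: then $d(\mb y,\cone{\widehat{\mb A}})\le\|\bar{\mb A}\mb x-\widehat{\mb A}\mb x\|_2\le\gamma'$. So this side is trivially bounded by $\gamma'\le\gamma$.

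The less trivial direction is the second supremum: I need to bound $d(\mb y,\cone{\bar{\mb A}})$ for a unit $\mb y=\widehat{\mb A}\mb x$. The natural candidate is $\bar{\mb A}\mb x\in\cone{\bar{\mb A}}$, giving $d(\mb y,\cone{\bar{\mb A}})\le\|\widehat{\mb A}\mb x-\bar{\mb A}\mb x\|_2\le\gamma'\|\bar{\mb A}\mb x\|_2$. The issue is that $\|\bar{\mb A}\mb x\|_2$ is not {\em a priori} bounded by $1$; however, a reverse triangle inequality together with the homogeneous bound yields
\[
\|\bar{\mb A}\mb x\|_2 \;\le\; \|\widehat{\mb A}\mb x\|_2 + \|\bar{\mb A}\mb x-\widehat{\mb A}\mb x\|_2 \;\le\; 1 + \gamma'\,\|\bar{\mb A}\mb x\|_2,
\]
so $\|\bar{\mb A}\mb x\|_2\le 1/(1-\gamma')$ (and $\gamma'<1$ is ensured by the hypothesis $\gamma'\le\gamma/(\gamma+1)<1$). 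Hence $d(\mb y,\cone{\bar{\mb A}})\le \gamma'/(1-\gamma')$.

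Finally, I would combine the two bounds to get $\delta(\cone{\bar{\mb A}},\cone{\widehat{\mb A}})\le\gamma'/(1-\gamma')$, and observe that the hypothesis $\gamma'\le\gamma/(\gamma+1)$ is algebraically equivalent to $\gamma'/(1-\gamma')\le\gamma$, which closes the argument. The only ``obstacle'' is the mildly circular bound on $\|\bar{\mb A}\mb x\|_2$, which is handled by solving the triangle-inequality relation for $\|\bar{\mb A}\mb x\|_2$; everything else is a direct unpacking of definitions.
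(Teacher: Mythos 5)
Your proof is correct and follows essentially the same route as the paper's: both note that the supremum over $\cone{\bar{\mb A}}$ is trivially at most $\gamma'$, both use the triangle inequality with the homogeneous form of the $\Omega_1$ constraint to deduce $\|\bar{\mb A}\mb x\|_2\le 1/(1-\gamma')$ when $\|\widehat{\mb A}\mb x\|_2\le 1$, and both close by observing that $\gamma'/(1-\gamma')\le\gamma$ under the hypothesis.
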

While $\Omega_1$ is convex, it does not have a tractable description, because of the quadratic maximization involved. We use a standard lifting trick, writing $\mathbf{X}=\mathbf{x}\mathbf{x}^{T} \succeq \mb 0$, to relax this quadratic maximization to a (convex) linear maximization over a semidefinite unknown $\mb X$. This gives an upper bound on the maximum in \eqref{eqn:omega1}, giving another convex subset $\Omega_2 \subseteq \Omega_1$, which {\em does} admit a tractable representation:
\begin{lemma} \label{lemma:lifting}
Consider \[\Omega_{2}\doteq\left\{ \widehat{\mathbf{A}}\!\in\!\Re^{m\times n}\;\middle|\;\max_{\mathbf{X}\in\mathcal{X}}\left\langle(\widehat{\mathbf{A}}-\bar{\mb{A}})^{T}(\widehat{\mathbf{A}}-\bar{\mb{A}}),\,\mathbf{X}\right\rangle\le(\gamma')^2\right\},\]
where $\mathcal{X}\doteq\left\{ \mathbf{X}\!\in\!\Re^{n\times n}\;\middle|\; \mathbf{X}\ge\mathbf{0},\,\mathbf{X}\succeq0,\,\left\langle\bar{\mb{A}}^T\bar{\mb{A}},\,\mathbf{X}\right\rangle\le1\right\} .$
Then we have $\Omega_{2}\subseteq\Omega_{1}.$
\end{lemma}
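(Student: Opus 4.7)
The plan is to use the classical rank-one lifting argument: every feasible point $\mathbf{x}$ for the inner maximization defining $\Omega_1$ lifts to a feasible rank-one matrix $\mathbf{X} = \mathbf{x}\mathbf{x}^T$ for the inner maximization defining $\Omega_2$, with matching objective values. Since $\Omega_2$ is defined by a constraint on a \emph{larger} maximum (over a bigger feasible set), membership in $\Omega_2$ will automatically imply membership in $\Omega_1$.

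Concretely, I would fix $\widehat{\mathbf{A}} \in \Omega_2$ and an arbitrary $\mathbf{x} \ge \mathbf{0}$ with $\norm{\bar{\mb A}\mathbf{x}}{2} \le 1$, and verify that $\mathbf{X} \doteq \mathbf{x}\mathbf{x}^T$ lies in $\mathcal{X}$. The three conditions are immediate: $\mathbf{X} \ge \mathbf{0}$ entrywise because $\mathbf{x} \ge \mathbf{0}$; $\mathbf{X} \succeq 0$ because it is a rank-one outer product; and
\[
\innerprod{\bar{\mb A}^T \bar{\mb A}}{\mathbf{X}} \;=\; \mathbf{x}^T \bar{\mb A}^T \bar{\mb A} \mathbf{x} \;=\; \norm{\bar{\mb A}\mathbf{x}}{2}^2 \;\le\; 1.
\]
Next, I would observe the key identity
\[
\norm{\bar{\mb A}\mathbf{x} - \widehat{\mathbf{A}}\mathbf{x}}{2}^2 \;=\; \mathbf{x}^T (\widehat{\mathbf{A}} - \bar{\mb A})^T (\widehat{\mathbf{A}} - \bar{\mb A}) \mathbf{x} \;=\; \innerprod{(\widehat{\mathbf{A}} - \bar{\mb A})^T (\widehat{\mathbf{A}} - \bar{\mb A})}{\mathbf{X}}.
\]
Because $\mathbf{X} \in \mathcal{X}$, the right-hand side is bounded above by the maximum appearing in the definition of $\Omega_2$, which by hypothesis does not exceed $(\gamma')^2$. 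Taking square roots and then supremizing over all admissible $\mathbf{x}$ yields $\widehat{\mathbf{A}} \in \Omega_1$.

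There is essentially no obstacle here, since the argument is the standard Shor/semidefinite lifting: the set $\{ \mathbf{x}\mathbf{x}^T : \mathbf{x} \ge \mathbf{0}, \norm{\bar{\mb A}\mathbf{x}}{2} \le 1 \}$ is contained in $\mathcal{X}$, so optimizing over $\mathcal{X}$ can only enlarge the maximum. The only point that warrants a brief sentence in the write-up is noting that the inclusion $\mathcal{X} \supseteq \{\mathbf{x}\mathbf{x}^T\}$ is generally strict (the relaxation is not tight), which is precisely why $\Omega_2$ is a convex subset of $\Omega_1$ rather than equal to it—this makes $\Omega_2$ the useful, tractable object for the subsequent optimization problem.
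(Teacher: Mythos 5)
Your proof is correct and uses the same rank-one lifting argument as the paper: you show that $\mathbf{x}\mathbf{x}^T \in \mathcal{X}$ for every admissible $\mathbf{x}$, so the SDP maximum in the definition of $\Omega_2$ dominates the quadratic maximum in $\Omega_1$. The paper phrases this as rewriting the $\Omega_1$ maximization exactly as the $\Omega_2$ SDP with an added rank-one constraint and then dropping that constraint, but the substance is identical.
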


\noindent Finally, we reformulate $\Omega_{2}$ via the dual problem of $\max_{\mathbf{X}\in\mathcal{X}}\left\langle(\widehat{\mathbf{A}}-\bar{\mb{A}})^{T}(\widehat{\mathbf{A}}-\bar{\mb{A}}),\mathbf{X}\right\rangle$:
\begin{theorem}\label{thm:cp-lrsd}
Let $(\mb L^\star, \mb S^\star)$ solve
\begin{eqnarray}
\label{eqn:cp-lrsd}
 & \min_{\left(\mathbf{L},\mathbf{S},\mathbf{\mu}\right)} & \|\mathbf{L}\|_{*}+\lambda\|\mathbf{S}\|_{1}\nonumber \\
 & {\rm {s.t.}} & \left[\begin{smallmatrix}
\mathbf{I} & \mathbf{L}+\mathbf{S}-\bar{\mb A}\\
\left(\mathbf{L}+\mathbf{S}-\bar{\mb A}\right)^{T} &\bar{\gamma}\bar{\mb A}^{T}\bar{\mb A}-\mathbf{\mu}
\end{smallmatrix}\right]\succeq\mathbf{0}\label{eq:reform_prob}, \; \mathbf{\mathbf{\mu}}\ge\mathbf{0}. \qquad
\vspace{-4mm}\end{eqnarray}
with $\bar{\gamma}=(\gamma')^2 \le (\frac{\gamma}{1+\gamma})^2$, then $\delta\bigl( \mathrm{cone}(\bar{\mb A}), \mathrm{cone}(\mb L^\star + \mb S^\star) \bigr) \;\le\; \gamma$.
\end{theorem}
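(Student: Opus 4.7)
The plan is to recognize the semidefinite constraint \eqref{eq:reform_prob} as a Schur-complement form of a dual feasibility certificate for the quadratic maximization defining $\Omega_2$; this places $\widehat{\mb A} \doteq \mb L^\star + \mb S^\star$ in $\Omega_2$, and then the previously-proved inclusions $\Omega_2 \subseteq \Omega_1 \subseteq \Omega_0$ deliver the Hausdorff bound. The choice $\bar{\gamma} = (\gamma')^2 \le \bigl(\gamma/(1+\gamma)\bigr)^2$ is tuned exactly so that $\gamma' \le \gamma/(1+\gamma)$, meeting the hypothesis of Lemma \ref{lemma:lrsd}.

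First I would apply the Schur complement to the $2\times 2$ block matrix in \eqref{eq:reform_prob}. Since the $(1,1)$ block is the identity, positive semidefiniteness of the full block matrix is equivalent to the linear matrix inequality
\[
\bar{\gamma}\,\bar{\mb A}^T \bar{\mb A} - \mu \;\succeq\; (\widehat{\mb A} - \bar{\mb A})^T(\widehat{\mb A} - \bar{\mb A}),
\]
together with the entrywise constraint $\mu \ge 0$. This converts the SDP condition into a form that can be paired directly with candidate matrices $\mb X$ feasible for the inner maximization defining $\Omega_2$.

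Next I would verify that this LMI certifies membership $\widehat{\mb A} \in \Omega_2$. For any $\mb X \in \mc X$, taking the Frobenius inner product of the LMI with $\mb X \succeq 0$ preserves the inequality (since $\langle A, X\rangle \ge \langle B, X\rangle$ whenever $A \succeq B$ and $X \succeq 0$), yielding
\[
\langle (\widehat{\mb A} - \bar{\mb A})^T(\widehat{\mb A} - \bar{\mb A}),\, \mb X \rangle \;\le\; \bar{\gamma}\,\langle \bar{\mb A}^T \bar{\mb A},\, \mb X\rangle - \langle \mu, \mb X\rangle.
\]
Bounding the right-hand side using $\langle \bar{\mb A}^T \bar{\mb A}, \mb X\rangle \le 1$ (from the definition of $\mc X$) and $\langle \mu, \mb X\rangle \ge 0$ (both entrywise nonnegative) gives $\langle (\widehat{\mb A} - \bar{\mb A})^T(\widehat{\mb A} - \bar{\mb A}), \mb X\rangle \le \bar{\gamma}$. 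Taking the supremum over $\mb X \in \mc X$ produces the defining inequality of $\Omega_2$. This step is essentially weak SDP duality, with $(\mu, \bar{\gamma})$ serving as a dual feasible pair.

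Once $\widehat{\mb A} \in \Omega_2$ is established, Lemma \ref{lemma:lifting} yields $\widehat{\mb A} \in \Omega_1$, and then Lemma \ref{lemma:lrsd}, applied with $\gamma' = \sqrt{\bar{\gamma}} \le \gamma/(1+\gamma)$, yields $\widehat{\mb A} \in \Omega_0$; this is exactly $\delta\bigl(\cone{\bar{\mb A}}, \cone{\mb L^\star + \mb S^\star}\bigr) \le \gamma$. The only conceptual obstacle is recognizing \eqref{eq:reform_prob} as the Schur-complement encoding of the dual certificate; once that is in hand, all remaining steps are either routine inner-product manipulations or direct invocations of the two preceding lemmas.
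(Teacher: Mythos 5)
Your proof is correct, and it is noticeably more economical than the paper's. The paper's argument proves the \emph{full equivalence} $\Omega_3 = \Omega_2$ by deriving the exact Lagrangian dual of the inner quadratic maximization, verifying zero duality gap via Slater's condition, and then separately showing that fixing the dual variable $\beta$ at the value $\bar\gamma$ (rather than minimizing over $\beta \ge 0$) does not change the optimal value of the outer problem. You instead read the semidefinite constraint \eqref{eq:reform_prob}, after a Schur complement, as a weak-duality certificate for the inner maximization: feasibility of $(\mb L^\star,\mb S^\star,\mu)$ with $\mu \ge 0$ and $\bar\gamma\,\bar{\mb A}^T\bar{\mb A}-\mu \succeq (\widehat{\mb A}-\bar{\mb A})^T(\widehat{\mb A}-\bar{\mb A})$, paired against any $\mb X \in \mc X$, yields $\langle(\widehat{\mb A}-\bar{\mb A})^T(\widehat{\mb A}-\bar{\mb A}),\mb X\rangle \le \bar\gamma\langle\bar{\mb A}^T\bar{\mb A},\mb X\rangle - \langle\mu,\mb X\rangle \le \bar\gamma$, hence $\widehat{\mb A}\in\Omega_2$. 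This one-sided containment is all the theorem requires, so you can dispense with strong duality, Slater's condition, and the $\beta$-versus-$\bar\gamma$ equivalence argument entirely. The paper's stronger equivalence $\Omega_3 = \Omega_2$ is of independent interest (it shows the convex reformulation is \emph{tight}, not merely sufficient), but for the stated conclusion your streamlined certificate argument is complete and preferable: Schur complement, weak duality, then Lemmas \ref{lemma:lifting} and \ref{lemma:lrsd} in sequence.
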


In contrast to existing matrix decompositions (e.g., \cite{Chandrasekharan2011-SJO,Candes2011-JACM}), which aim at statistical estimation, and measure quality of approximation in Frobenius norm, we guarantee approximation in Hausdorff distance $\delta(\cdot,\cdot)$. This is precisely the measure required for worst case verification. We call \eqref{eq:reform_prob} a {\bf \em cone-preserving low-rank and sparse decomposition}. It can be computed efficiently using the Linearized Alternating Direction Method of Multipliers (L-ADMM)(\cite{zhang2010bregmanized, ADMM_L_MA}), which converges globally with rate $O\left(1/k\right)$ \cite{he20121}.  We describe the numerical implementation and convergence theory associated with this approach in more detail in Appendix \ref{sec:algorithm}.

\section{Numerical Experiment} \label{sec:Numerical-Experiment}
We render images from 3D triangulated object models following a simplified imaging process $\mb y[f]\;=\;\mc P\mc D[f]$. Thus, our simulations include cast shadows, but not interreflection.\footnote{This approximation neglects the nontrivial interreflection terms, $\mc T + \mc T^2 + \dots$. These terms are at most on the order of $\rho_\star$, the maximum albedo. The approximation $\mb y[f] = \mc P \mc D[f]$ can be (loosely) considered to be the limiting case as $\rho_\star$ becomes small. Here, we make this approximation to make it easier to efficiently simulate the imaging process.}
Camera parameters $\gamma_c=f_c=d_c=1$ and $s_c=0.003$ are fixed throughout our experiments.

\paragraph{Verifying the Perturbation Bound.}
We compare the bound in Theorem \ref{thm:main-perturbation}, denoted as $\text{PerturbationBound}(\mb u,\mb u')$, to the actual difference $\norm{\bar{\mb y}[\mb u] - \bar{\mb y}[\mb u']}{2}$ for three different object shapes. The maximum ratio between those two quantities can be expressed as
\(
r \quad \doteq \quad \max_{\mb u,\mb u' \;\text{adjacent}}\set{\frac{\norm{\bar{\mb y}[\mb u] - \bar{\mb y}[\mb u']}{2}}{\text{PerturbationBound}(\mb u,\mb u')}}
\)
In our experiment, the set of point illuminations is generated using a uniform grid, $\theta = pi/360, 2 \pi / 360, \dots$
and $\phi = \pi / 360, 2 \pi / 360, \dots$ in spherical coordinates. Results are listed in Table \ref{tbl:perturbation_bound}: the ratio is always bounded by $1$, corroborating our theoretical results.

\begin{table}[H]
\begin{center} \begin{tabular}{|c|| c | c | c |}\hline
Object	&Vase	 	&Face 		&Bunny\\
&\includegraphics[width=1in]{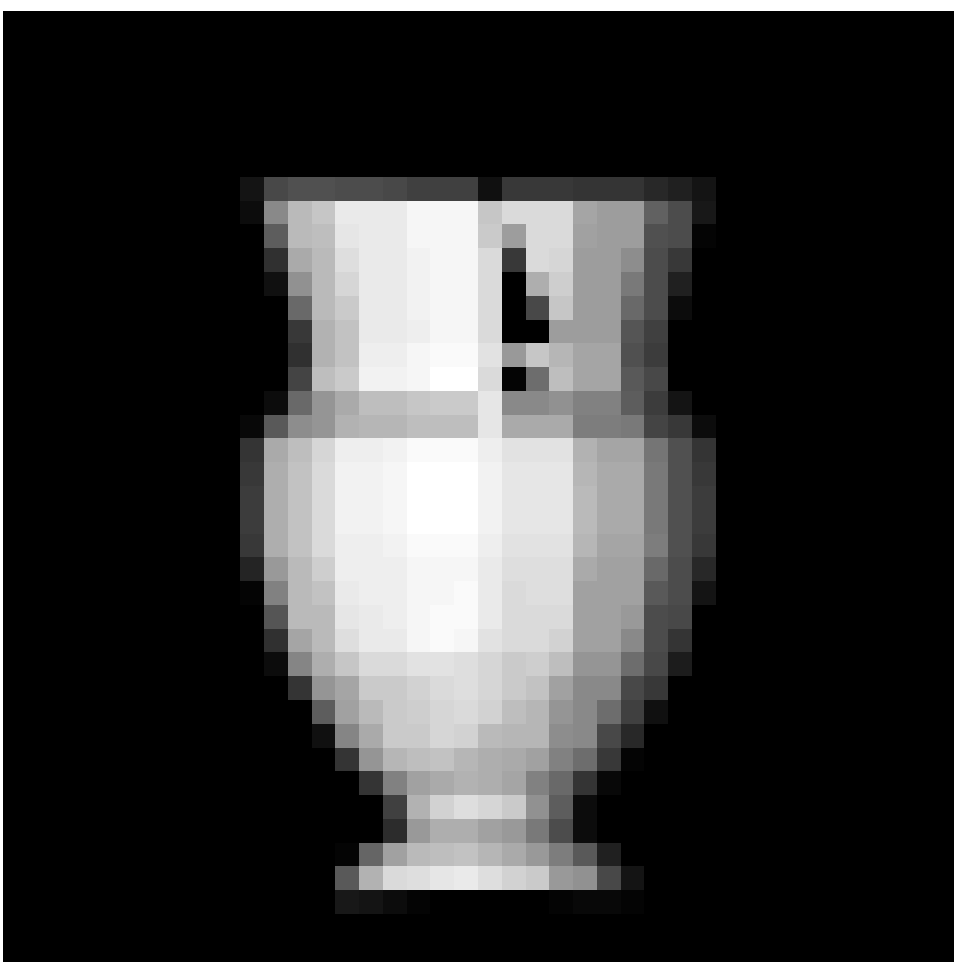}
&\includegraphics[width=1in]{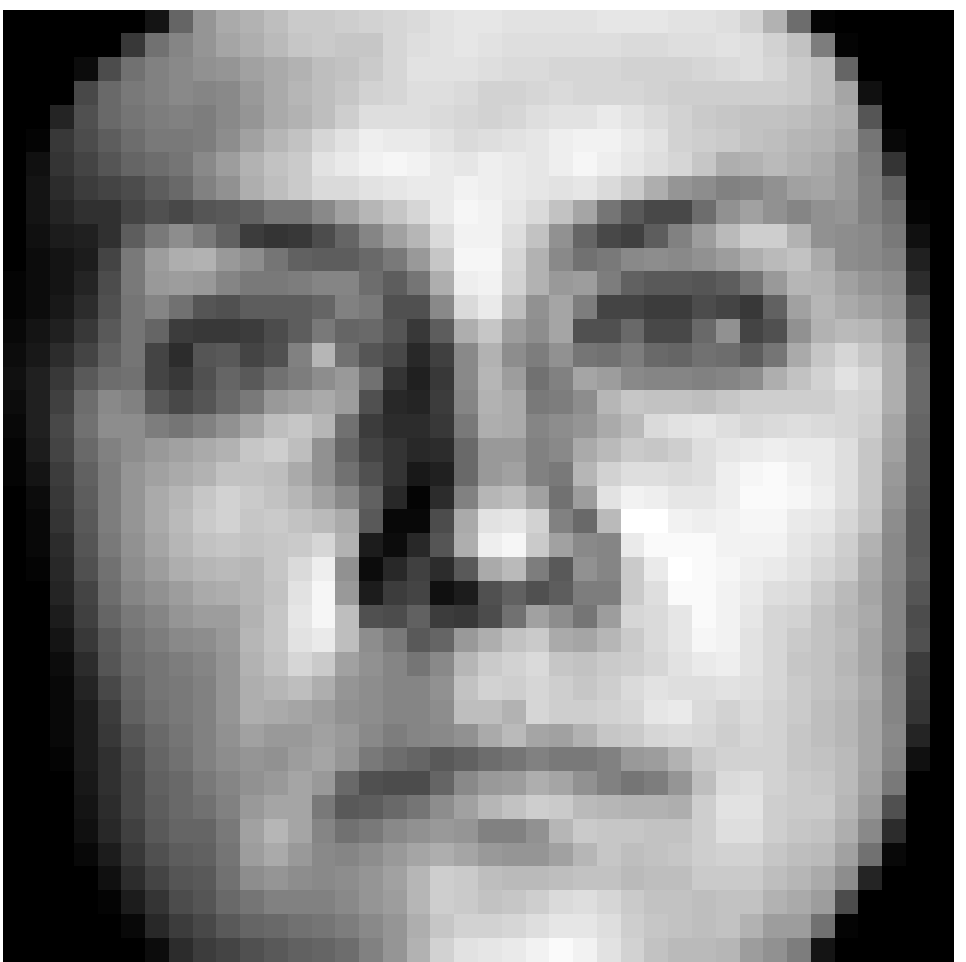}
&\includegraphics[width=1in]{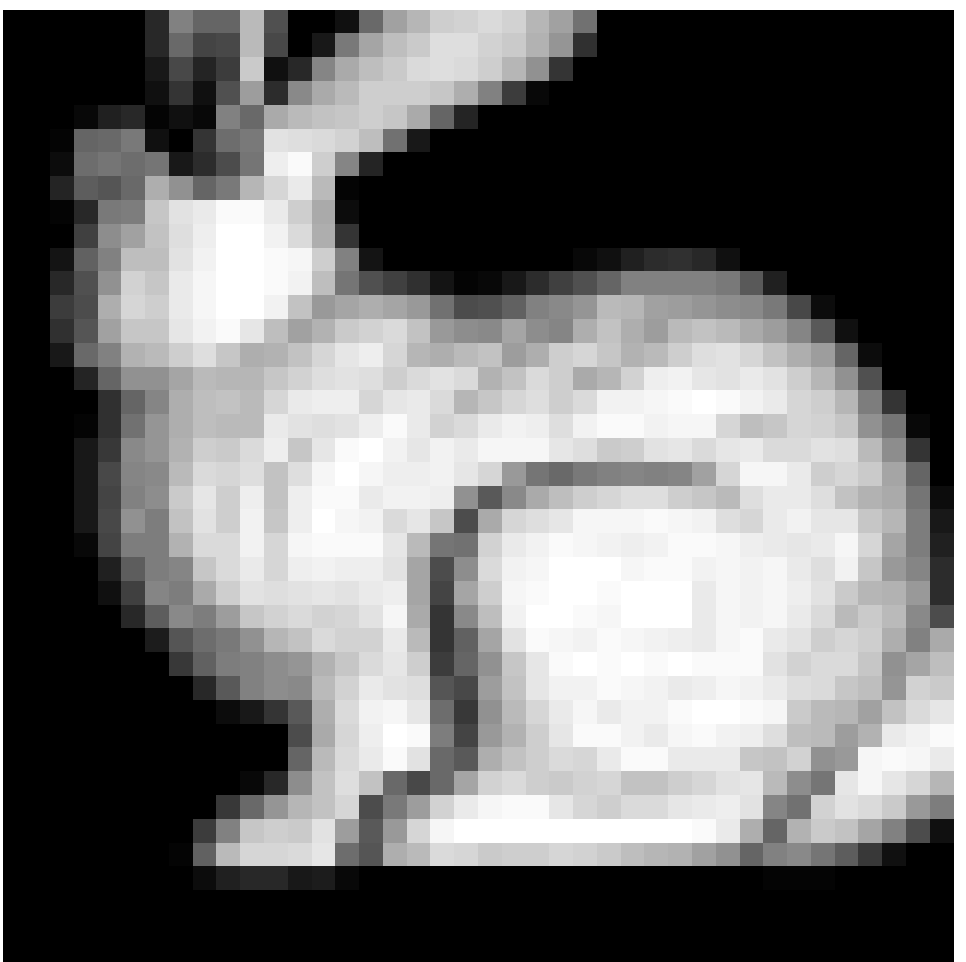}\\    \hline
$r$		&0.1809		&0.0302		&0.0290	\\    \hline
\end{tabular}\end{center}
\caption{{\bf Tightness of the bounds.} Largest ratio $r$  between experimental observation and theoretical upper bound for three different objects. The bound holds in all cases, and is tightest for the vase.}
\label{tbl:perturbation_bound}
\end{table}
%\noindent Our theoretical bound holds for all cases. It is tightest for the vase, and slightly looser for the face and bunny.

\paragraph{Order of Perturbation Bound.}

We next consider the behavior of our bounds when $\norm{\mb u - \mb u'}{2} \to 0$. Our bounds predict that in the worst case, the change in irradiance $\bar{\mc D}[\mb u]$ should be proportional to $\norm{\mb u - \mb u'}{2}^{1/2}$. We investigate this using a toy object composed of two perpendicular surfaces $S_1$ and $S_2$ shown in Figure \ref{fig:toy_example} with $\mb u$ fixed, and  $\mb u'$ changing slowly. Figure \ref{fig:toy_example} (right) shows how $\norm{\bar{\mc D}[\mb u'] - \bar{\mc D}[\mb u]}{}$ depends on $\norm{\mb u-\mb u'}{2}$. Both the theoretical prediction and the computed value appear to be proportional to $\norm{\mb u - \mb u'}{2}^{1/2}$.\footnote{In Figure \ref{fig:toy_example}, we rescale the theoretical prediction for clearer comparison -- our goal is only to show that the exponent is $1/2$.} This suggests that in the worst case, our theory may be tight up to constant factors.
The theoretical prediction curve neglects constants, and simply draws $\diam{\obj} \times \chi_\star \times\sqrt{\norm{\mb u-\mb u'}{2}}$.

\begin{figure}[h]
\centerline{\hspace{10mm}
\begin{minipage}{2in}
\centerline{\qquad\qquad\quad Toy Object}\vspace{-7mm}
\setlength{\unitlength}{1cm}
\begin{picture}(6,5)
\thicklines
\put(2,2){\vector(0,1){2}} \put(1.8,4.2){$z$}
\put(2,2){\vector(1,0){2.4}} \put(4.5,1.8){$y$}
\put(2,2){\vector(-1,-1){1.4}} \put(0.3,0.3){$x$}
\put(4,2){\line(-1,-1){1}}
\put(1,1){\line(1,0){2}}\put(2.7,0.6){$S1$}
\put(2,3){\line(-1,-1){1}}
\put(1,1){\line(0,1){1}}\put(0.4,1.8){$S2$}
\put(4,2){\line(-2,1){3}}\put(1.0,3.1){$\mb u$}
\put(3.5,2){\line(-3,2){2.5}}\put(1.2,3.6){$\mb u'$}
\put(3.5,2){\line(-1,-1){1}}\put(2.8,1.05){$\Xi$}
\end{picture}\end{minipage}
\hspace{-5mm}
\begin{minipage}{3in}
\centerline{\includegraphics[width=2.5in]{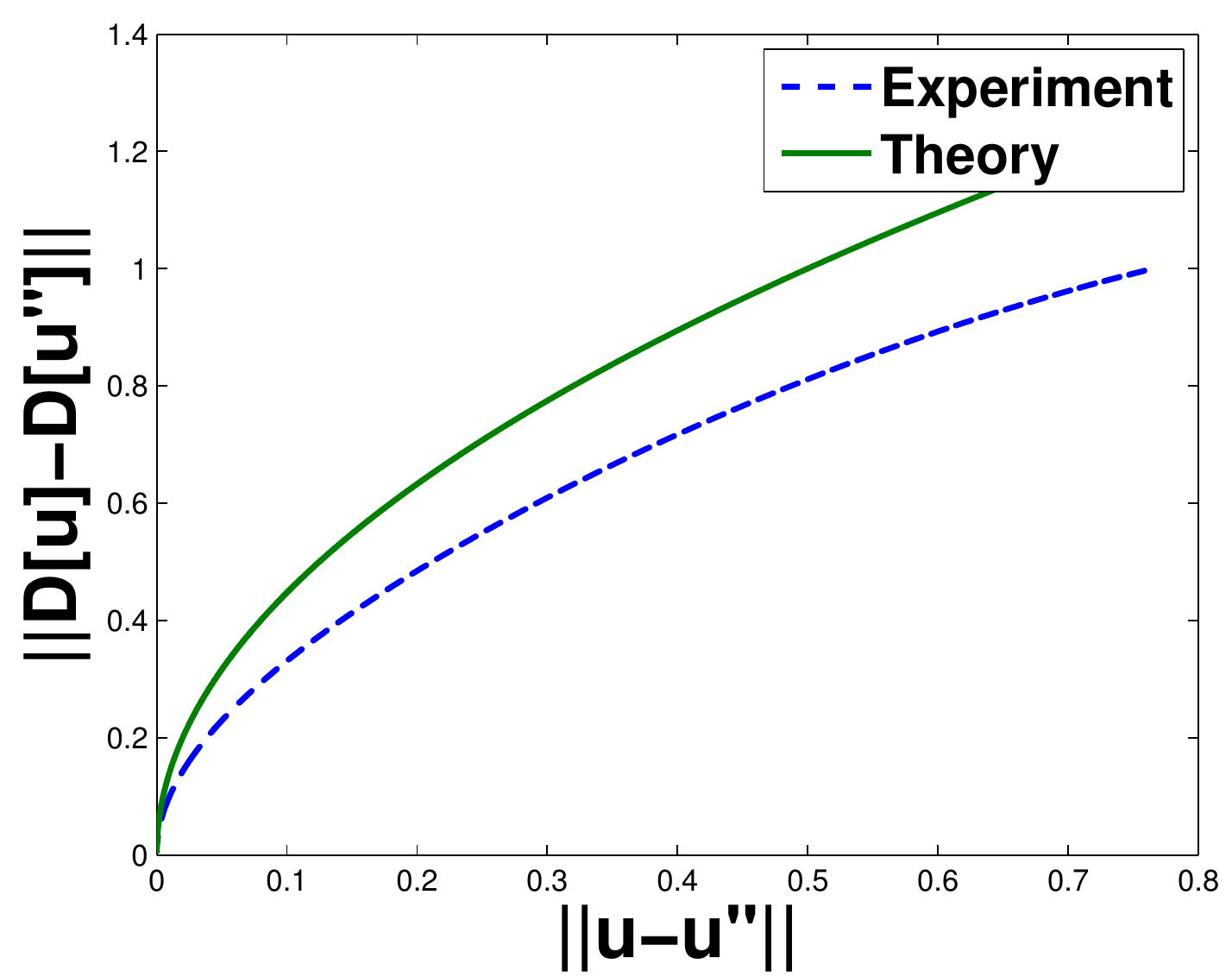} }
\end{minipage}
}
\caption{{\bf Order of Perturbation Bound}. Here, in both theory and simulation the change in $\bar{\mc D}[\mb u]$ is proportional to $\norm{\mb u - \mb u'}{}^{1/2}$. }
\label{fig:toy_example}
\end{figure}

%Following our bound, we can calculate the number of samples required for $\eps$ cone approximation under given ambient level. With $\alpha=1$, Figure \ref{fig:sample_number} shows how the number of sample changes with varying cone distance $\eps$, which verifies \eqref{eqn:sample_number_order}.
%\vspace{-7mm}\begin{figure}[h]
%\centering
%\includegraphics[width=0.35\textwidth]{N_alpha_eps.pdf}
%\caption{{\bf Estimation of Sample Number}.}\vspace{-15mm}
%\label{fig:sample_number}
%\end{figure}

\paragraph{Cone Preserving Complexity Reduction.}
We demonstrate the ability of our solution to \eqref{eq:reform_prob} to reduce the complexity of the representation, while preserving the conic hull.  We start with $n =648$ images of a face under point illuminations, with resolution $40 \times 40$. We solve the low-rank and sparse cone approximation problem in Theorem \ref{thm:cp-lrsd} for varying cone distances $\gamma$ with $\lambda$ simply chosen as $\sqrt{\max(m,n)}$. Figure \ref{fig:complexity_reduction} plots the ratio complexity of $\widehat{\mathbf{A}}$ and $\mb A$, or $\frac{(m+n)r+s}{mn}$, where $r$ is the rank of the recovered low-rank term and $s$ is the number of nonzero entries in the recovered sparse term. The decomposition reduces the complexity in all cases; the reduction becomes more pronounced as $\alpha$ increases. This is expected, since the cone $C_\alpha$ becomes smaller as $\alpha$ increases. This reduction in complexity suggests that although the number of extreme rays is large, there may be quite a bit of additional structure {\em across} the extreme rays, that could be exploited by more sophisticated cone constructions.

\begin{figure}[h]
\centerline{
\begin{minipage}{3in}
\centerline{\includegraphics[width=2.5in]{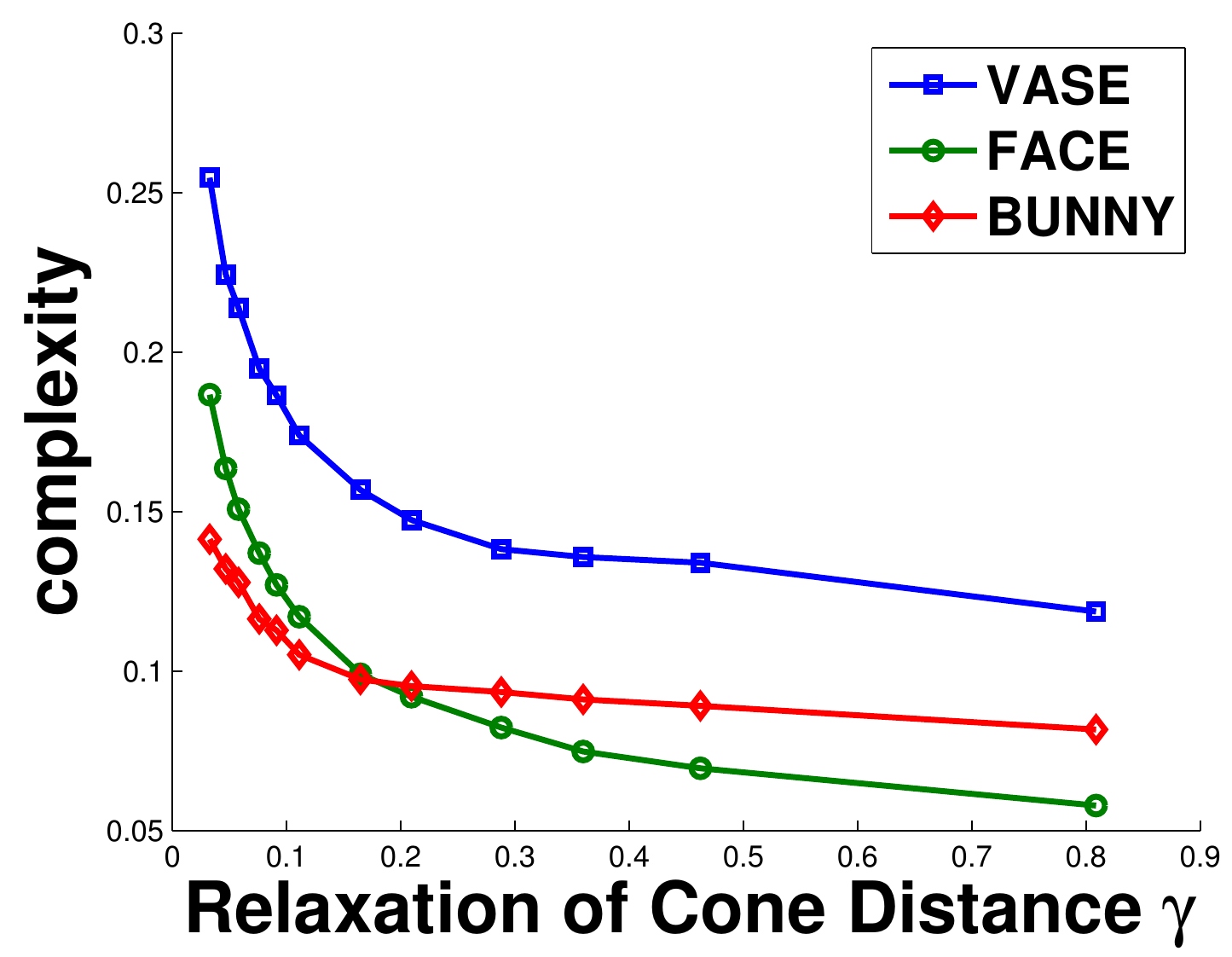}}
\end{minipage}
\hspace{-12mm}
\begin{minipage}{3in}
\centerline{\includegraphics[width=2.5in]{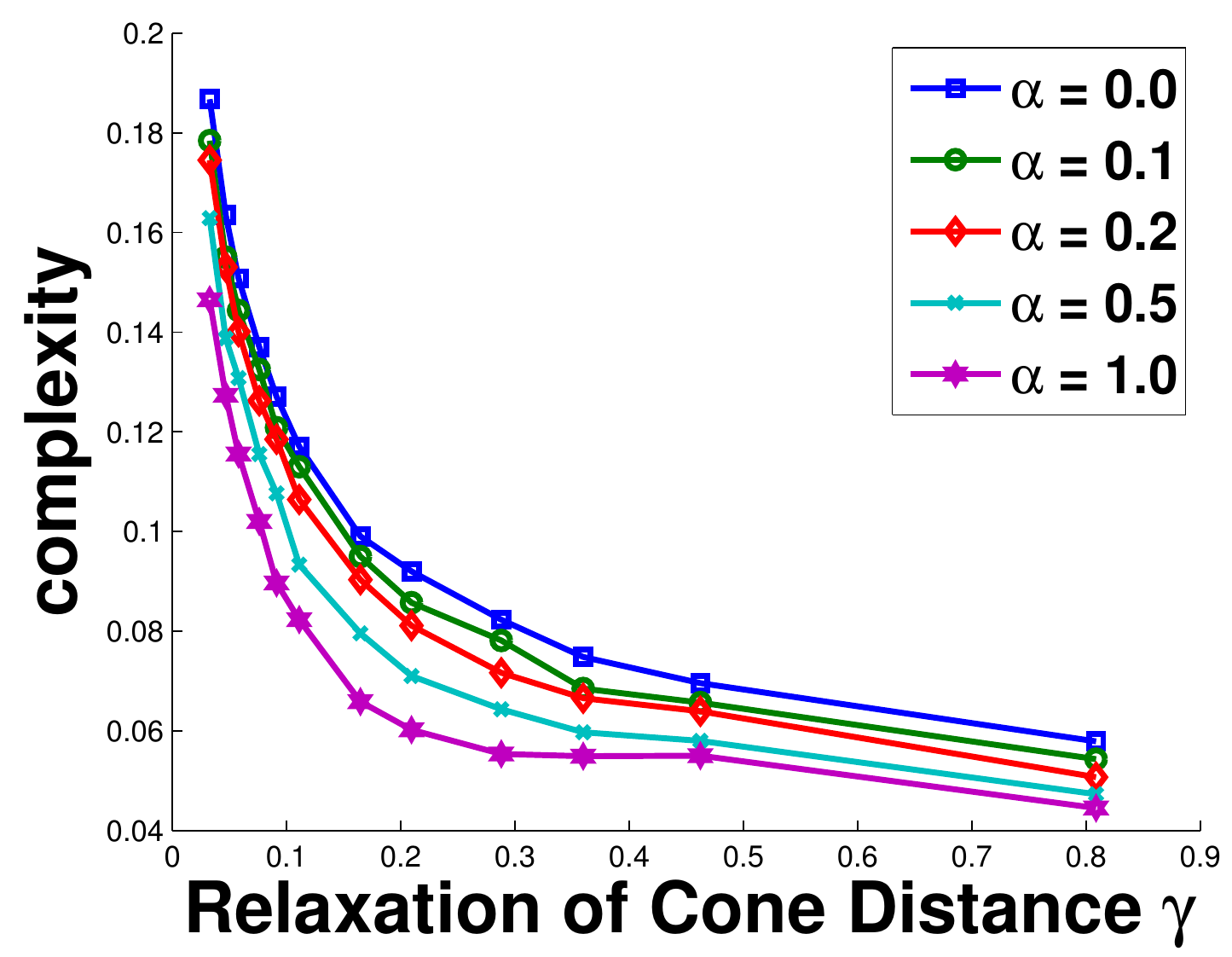} }
\end{minipage}
}
\caption{{\bf Cone Complexity Reduction} for different nonconvex objects under zero ambient level ($\alpha =0$) (left) and for {\em face} under different ambient illumination levels (right).}
\label{fig:complexity_reduction}
\end{figure}

\paragraph{Application Sketch.}
To conceptually justify the advantage of our cone approximation methodology in verification under poor illumination conditions, we compare the receiver operating characteristic (ROC) curves for 5 verification dictionaries obtained from same 3D face model under ambient level $\alpha=0.1$: convex cone $C_1$ composed of 2592 images, corresponds to the $\eps$-approximation of the original illumination cone; $C_2$ is the $\gamma$-approximation of $C_1$ with $L+S$ structure ($\gamma=0.11$); $C_3$ is rendered under 19 illumination directions corresponding to subsets $1$ and $2$ of Yale B \cite{Georghiades2001-PAMI} (roughly, the setting of \cite{Wright2009-PAMI}); $C_4$ is rendered under all 64 illumination directions considered in \cite{Georghiades2001-PAMI}. Finally, motivated by \cite{Basri2003-PAMI}, we also consider the subspace $S$ spanned by 9 principal components of $C_1$.

\begin{figure}[h]
\centerline{
\begin{minipage}{3in}
\centerline{\includegraphics[width=2.5in]{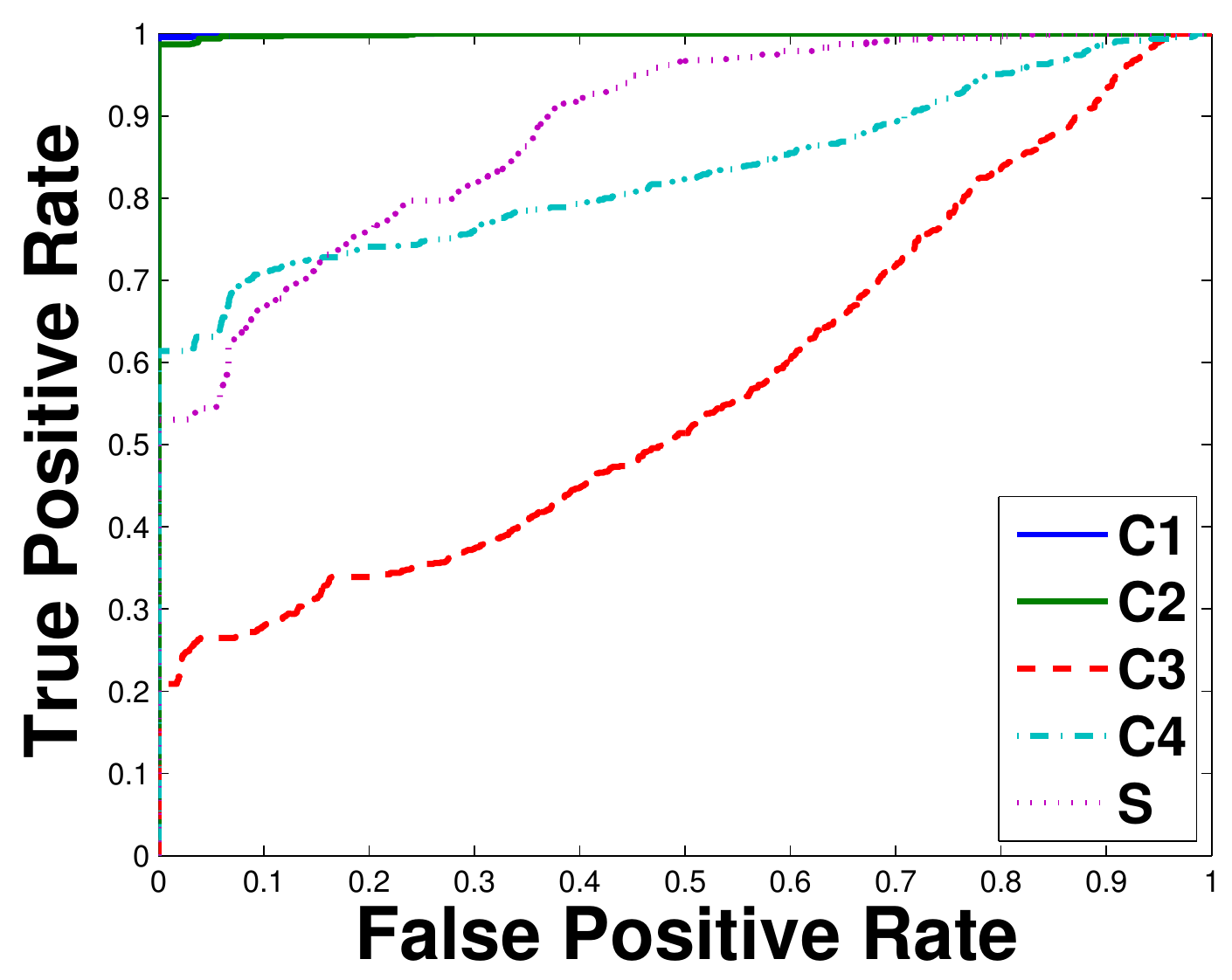}}
\end{minipage}
\hspace{-12mm}
\begin{minipage}{3in}
\centerline{\includegraphics[width=2.5in]{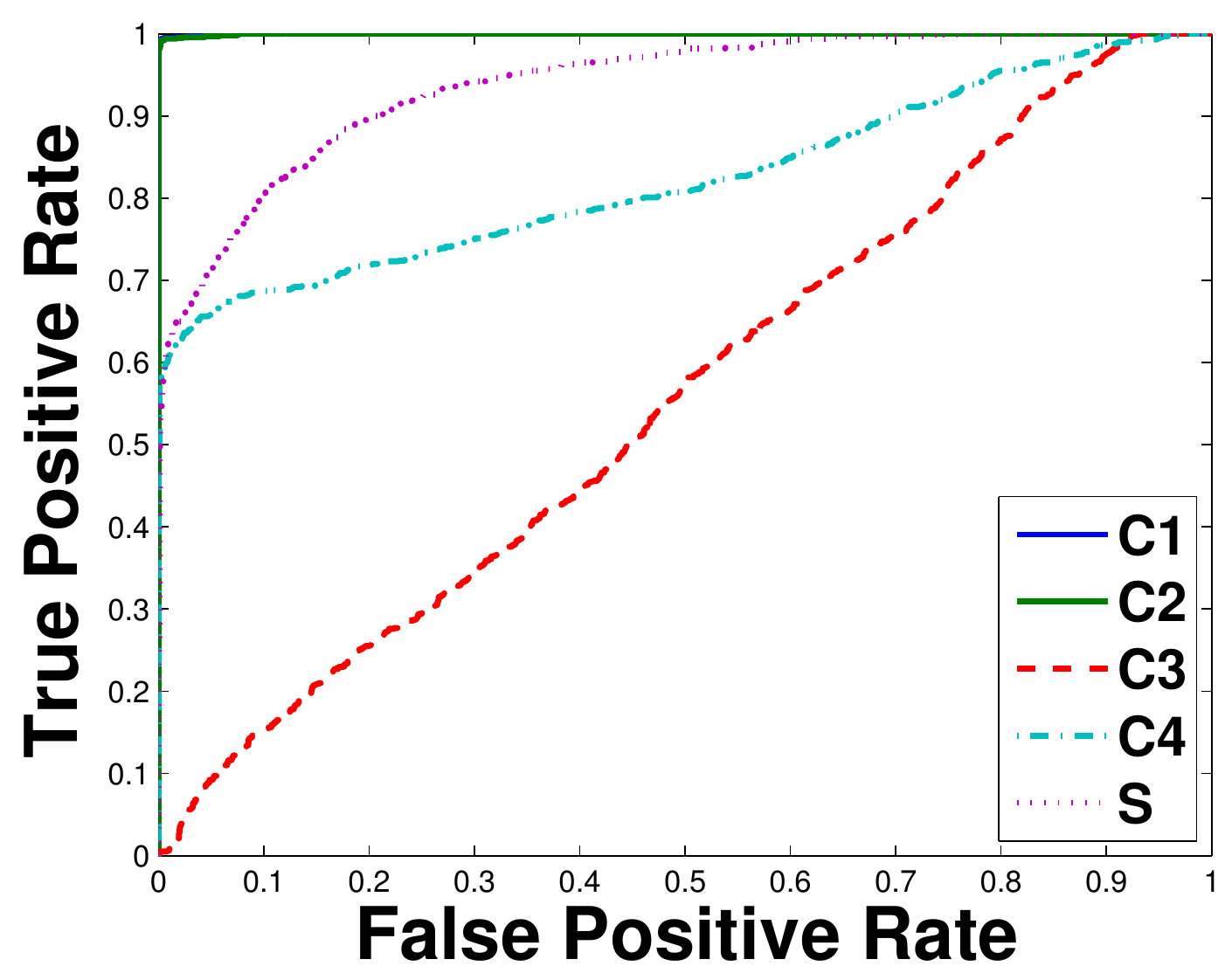} }
\end{minipage}
}
\caption{{\bf ROC Curves} for different dictionaries, with test images under uniform random illumination (left) and extreme illumination (right). The dictionaries are $C_1$: $\eps$ approximation, $C_2$: low-rank and sparse approximation, $C_3$-$C_4$: point illuminations distributed similar to \cite{Georghiades2001-PAMI}, $S$: nine-dimensional linear subspace.}
\label{fig:detection_roc}
\end{figure}

Our test data consist of 1,000 positive images under 1,000 illumination directions and 3,000 negative images of 3 other subjects. We consider two distributions for the illumination directions -- uniform on the sphere (roughly corresponding to the ``average case''), and uniform on the set of $\mb u \in \sphere^2$ for which $-0.1 \le u_3 \le 0.4$. Here, the $u_3$ axis is the camera axis. Arguably, the second set is more challenging. Figure \ref{fig:detection_roc} shows the ROC for a simple verification test based on the distance to the models. As suggested by our theoretical results, both $C_1$ and $C_2$ perform almost perfectly. The simpler models $C_3, C_4, S$ perform better than chance, but still break down frequently. We view this result as illustrating a tradeoff in illumination representation: uniformly good performance is possible, {\em if} we can afford a more complex representation. The cone preserving low-rank and sparse decomposition gives a way to control the complexity of the representation, while still maintaining this good performance.

\section{Discussion}
There are several directions for future work. Although our cone construction scheme guarantees worst case verification, the number of samples required is likely to be very large, in particular for small $\eps$: when $\eps = 0.01$, our theorem requires about $10^{25}$ images under ambient level $\alpha=1$. Our experiments on complexity reduction suggest that there should exist a simpler representation, if we can take advantage of the structure of shadows.

To use the results in a practical recognition system, we need to account for variations in object pose as well. This can be done using local optimization heuristics, or simply building models at a set of reference poses \cite{Georghiades2001-PAMI}. It will be important to have very concise models for each pose; the complexity reduction by convex programming is one means of achieving this.

Here, we have considered object instance {\em verification}, rather than object instance {\em recognition}. The ``yes/no'' question in verification forces us to confront basic questions about the set of images of the object. Nevertheless, we believe our methodology will be useful for recognition as well. For example, one could build models $\widehat{C}$ for each class and assigning the test sample to the closest model in angle. For recognition problems, the formulations and goals for sampling and complexity reduction may also change.

We anticipate three classes of practical application of our results. The first is in instance detection/recognition using 3D models and 2D test images. The second is in instance detection/recognition with active acquisition of training data, e.g., in face recognition for access control \cite{Wagner2012-PAMI}. The final, more speculative application is in instance detection/recognition with large families of objects with similar gross shape and appearance. In face recognition, learned models for physical variabilities (albedo and illumination) are often used in conjunction with deformable models \cite{AAM}. In many practical settings, this approach mitigates the difficulties associated with small training datasets -- they can work with as few as one image \cite{Wang2009-PAMI,Zhuang2013-CVPR}. Our results could give a way of learning a set of canonical illumination models, which capture effects such as cast shadows, and which could be adapted to each new input subject.

\section*{Acknowledgment} It is a great pleasure to acknowledge conversations with Donald Goldfarb (Columbia), Yi Ma (MSRA), Stefano Soatto (UCLA), Andrew Wagner (K.U.\ Leuven), Bin Yu (Berkeley), Zhengdong Zhang (MIT), Zihan Zhou (Penn.\ State). JW also gratefully acknowledges support from Columbia University and the Office of Naval Research.

\bibliographystyle{alpha}
\bibliography{egbib}

\appendix
\section*{Appendix}
%
%This technical supplement collects the arguments needed to rigorously establish the results claimed in the main body of the paper. To avoid confusion, sections in the supplementary appendix are number with Roman numerals I, II, \dots. Arabic numerals 1,2,3,\dots refer to sections in the main (submitted) paper. Section \ref{sec:s2} contains a proof of a slightly more general version of Lemma 2.3 of the main paper. The arguments establishing the lemmas in Section 3 of the main paper are spread over Sections \ref{sec:prelim}-\ref{sec:norms}. Section \ref{sec:prelim} reviews technical background on integration, and proves two results on $\mc V$-representations of illumination cones. Section \ref{sec:norms} proves Lemmas  3.8 and 3.9 of the paper, and then uses them to show Lemma 3.5. While the order of the lemmas in the main paper was chosen to clarify the exposition, logically this order is more sensible, as Lemmas 3.8 and 3.9 are actually used in the proof of Lemma 3.5. Finally Section \ref{sec:direct} shows the perturbation Lemma 3.10 of the main paper.  Section \ref{sec:complexity} delineates algorithmic details and proofs of lemmas for Section 4 of the paper. At last, we present some simulation results on convex objects in Section \ref{sec:simu} to verify the pertubation bound proved in Section \ref{sec:direct}.

\section{Proof of Lemma \ref{lem:cone-appx-gives-AAD-long}} \label{sec:s2}

\begin{proof}
For all $\mb y$ of norm one such that $\angle\left(\mathbf{y},\, C\right)\le\tau$,
$\exists$ $\bar{\mathbf{y}}\in{\rm cl}\left(C\right)$ with
$\|\mathbf{y}-\bar{\mathbf{y}}\|\le\sin\tau$. Moreover, $\norm{\barmb{y}}{2} \le 1$. By (\ref{eq:cone_distance}),
we know that $\exists$ $\hat{\mathbf{y}}\in{\rm cl}\left(\widehat{C}\right)$
s.t. $\|\bar{\mathbf{y}}-\hat{\mathbf{y}}\|\le\delta\left(C,\widehat{C}\right)$.
By triangle inequality, $\|\mathbf{y}-\hat{\mathbf{y}}\|\le\sin\tau+\delta\left(C,\widehat{C}\right)$,
which implies that $\angle\left(\mathbf{y},\,\widehat{C}\right)\le{\rm asin}\left(\sin\tau+\delta\left(C,\widehat{C}\right)\right)\le\xi$.
Therefore $\mathfrak{D}_{\xi}^{\widehat{C}}\left(\mathbf{y}\right)={\rm ACCEPT}$.
\vspace{.1in}

Conversely, if $\mf D_\xi^{\widehat{C}}(\mb y) = \mathrm{ACCEPT}$, then $\angle\left(\mb y,\widehat{C}\right) \le \xi$. If $\norm{\mb y}{2} \le 1$, this implies that there exists $\hat{\mb y} \in \cl{\widehat{C}}$ of norm at most one such that $\norm{\hat{\mb y} - \mb y }{2} \le \sin(\xi)$. Moreover, from the definition of $\delta$, there exists $\bar{\mb y} \in C \cap \ball{0}{1}$ such that $\norm{ \bar{\mb y} - \hat{\mb y} }{2} \le \delta(C,\widehat{C})$. By the triangle inequality,
\[
\norm{\mb y - \bar{\mb y}} \;\le\; \sin(\xi) + \delta( C, \widehat{C} ).
\]
Moreover, if
\( \label{eqn:asin-cond}
\mathrm{asin}\left( \sin(\xi) + \delta( C, \widehat{C} ) \right) \le (1+\eta) \tau,
\)
 we have $\angle(\mb y,C) \le (1+\eta) \tau$. Hence, whenever \eqref{eqn:asin-cond} holds, for every $\mb y$ such that $\mf D_\xi^{\widehat{C}}(\mb y) = \mathrm{ACCEPT}$, we have $\angle(\mb y,C) \le (1+\eta)\tau$, and hence $\mf D_\xi^{\widehat{C}}(\mb y) = \mathrm{REJECT}$ for all $\mb y$ with $\angle(\mb y,C) > (1+\eta)\tau$. This condition holds whenever
\(
\xi \;\le\; \mathrm{asin}( \sin( \tau + \eta \tau ) - \delta(C,\widehat{C}) ).
\)
Take together with the first paragraph, this condition and $\xi \ge \mathrm{asin}\left( \sin \tau + \delta(C,\widehat{C}) \right)$ imply that $\mf D_\xi^{\widehat{C}} \in \widehat{\bb D}_{\tau,\eta}^C$, which establishes our claim.
\end{proof}

\section{Proofs from Section \ref{sec:extreme-rays}}  \label{app:integrals-cones} 

The definition of the Riemann integral on $\reals^2$ gives the following:
\begin{lemma} \label{lem:nn-appx}
Let $\mb h : \sphere^2 \to \reals^m$ be a vector-valued function which is nonnegative and Riemann integrable. Then for every $\eps > 0$, there exists $N \in \bb Z_+$, $\mb u_1 \dots \mb u_N \in \sphere^2$, and $\lambda_1 \dots \lambda_N \ge 0$ such that
\(
\norm{ \int_{\mb u \in \sphere^2} \mb h[\mb u] \, d\mb u \; -\; \sum_{i=1}^N \lambda_i \mb h[\mb u_i] }{2} \;\le\; \eps.
\)
\end{lemma}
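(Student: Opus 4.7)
The plan is to reduce the problem to the scalar case by working componentwise, using the spherical parametrization adopted in the paper's footnote, and then invoking the standard fact that Riemann integrals on rectangles in $\reals^2$ are approximable by Riemann sums. The non-negativity of the weights $\lambda_i$ will come for free from the fact that the Jacobian factor $\sin\phi$ is non-negative on $[0,\pi]$.

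First, following the paper's convention, I would write $\eta : W \to \sphere^2$ with $W = [0,2\pi] \times [0,\pi]$ for the spherical coordinates map, so that
\[
\int_{\mb u \in \sphere^2} \mb h[\mb u] \, d\mb u \;=\; \int_W \mb h \circ \eta(\theta,\phi) \, \sin\phi \, d(\theta,\phi),
\]
where the integral is understood componentwise. By hypothesis, each coordinate function $h_j \circ \eta \cdot \sin\phi$ is Riemann integrable on $W$. Hence, for each $j \in [m]$ there exists a partition $\mc P_j$ of $W$ into axis-aligned rectangles such that every Riemann sum over any refinement of $\mc P_j$ (with any choice of tag points) differs from the $j$-th coordinate integral by at most $\eps/\sqrt{m}$.

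Next, let $\mc P^\star$ denote a common refinement of $\mc P_1, \dots, \mc P_m$ (obtained by intersecting the rectangles); this is again a finite partition of $W$ into rectangles $R_1, \dots, R_N$. Choose any tag points $(\theta_i, \phi_i) \in R_i$, and define
\[
\mb u_i \;=\; \eta(\theta_i,\phi_i), \qquad \lambda_i \;=\; \area(R_i) \, \sin(\phi_i).
\]
Since $\phi_i \in [0,\pi]$, we have $\sin(\phi_i) \ge 0$ and hence $\lambda_i \ge 0$. Moreover, $\sum_i \lambda_i \mb h[\mb u_i]$ is, componentwise, a Riemann sum for the partition $\mc P^\star$ of $W$ applied to $(h_j \circ \eta) \sin\phi$. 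By the choice of $\mc P^\star$, each coordinate of the error vector $\int \mb h \, d\mb u - \sum_i \lambda_i \mb h[\mb u_i]$ has magnitude at most $\eps/\sqrt{m}$, so its $\ell_2$ norm is at most $\sqrt{m}\cdot\eps/\sqrt{m} = \eps$, establishing the claim.

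There is no real obstacle here: the only care needed is in choosing a \emph{single} partition that simultaneously controls all $m$ coordinates, which is resolved by the common refinement. The non-negativity constraint on $\lambda_i$ is automatic because the spherical Jacobian $\sin\phi$ does not change sign on the parameter domain, which is exactly why the argument reduces to the scalar Riemann integrability hypothesis without any further sign considerations.
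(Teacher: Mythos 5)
Your proposal is correct and matches the paper's own argument essentially step for step: both pass to spherical coordinates on $W=[0,2\pi]\times[0,\pi]$, reduce to the scalar Riemann integral of $h_j\circ\eta\cdot\sin\phi$, take a common refinement of the $m$ per-coordinate partitions, pick tags $(\theta_i,\phi_i)$ to set $\mb u_i=\eta(\theta_i,\phi_i)$ and $\lambda_i=\mathrm{vol}(R_i)\sin\phi_i$, and pass from a componentwise $\eps/\sqrt m$ bound to an $\ell^2$ bound. Your explicit observation that $\lambda_i\ge 0$ because $\sin\phi\ge 0$ on $[0,\pi]$ is a nice touch that the paper leaves implicit.
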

\begin{proof} {Again letting $W = [0,2\pi] \times [0,\pi]$, and let $\eta(\theta, \phi) = ( \cos \theta \sin \phi, \sin \theta \sin \phi, \cos \phi )$ denote the spherical coordinate map.} Consider a single coordinate $j$. From the definition of the Riemann integral, we have
\(
\int_{\mb u \in \sphere^2} \mb h_j[\mb u ] \, d \mb u \;=\; \int_W \mb h_j\left[ \eta(\theta, \phi)\right] \; \sin \phi \; d(\theta,\phi),
\)
where the right hand side is a Riemann integral on $\reals^2$. For every $\eta > 0$, there exists a partition $\Pi_j$ of $[0,2\pi] \times [0,\pi]$ such that
\[
\sum_{R \in \Pi_j} \left[ \sup_{(\theta,\phi) \in R} \tilde{\mb h}_j[\eta(\theta,\phi)] \, \sin \phi \right] \volume{ R } - \eta \;\le\; \int_{\mb u} \mb h_j[\mb u] \, d \mb u \;\le\;   \sum_{R \in \Pi_j} \left[ \inf_{(\theta,\phi) \in R} \tilde{ \mb h}_j[\eta(\theta,\phi)] \, \sin \phi \right]   \volume{ R }  + \eta.
\]
Choose such a partition $\Pi_j$ for each $j$, and let $\Pi = \set{ R_1, \dots, R_L }$ be the common refinement. Then for any choice of $(\theta_i,\phi_i) \in R_i$, we have
\(
\norm{ \sum_{i=1}^L {\mb h}_j[\eta(\theta_i,\phi_i)] \sin \phi_i \times \volume{R_i} - \int_{(\theta,\phi)} \mb h[\eta(\theta,\phi)] \,\sin\phi \; d (\theta,\phi) }{\infty} \;\le\; \eta.
\)
For all $i \in I$, set $\mb u_i = \eta(\theta_i,\phi_i)$, and $\lambda_i = \volume{R_i} \sin \phi_i$. Then
\(
\norm{ \sum_{i=1}^L \lambda_i \mb h[\mb u_i]  - \int_{\mb u} \mb h[\mb u] \, d \mb u }{\infty} \;\le\; \eta.
\)
Setting $\eta = \eps / \sqrt{m}$, we obtain the result.
\end{proof}

\paragraph{Proof of Lemma \ref{lem:C0-ext}.}  Below, we prove Lemma \ref{lem:C0-ext}.

\vspace{.1in}

\begin{proof} For notational convenience, let $\Psi = \cone{\set{ \bar{\mb y}[\mb u] \mid \mb u \in \sphere^2 } }$. Consider $\mb y_0 \in C_0$. Then $\mb y_0 \;=\; \mb y[f] \;=\; \int_{\mb u}\,  f(\mb u) \, \bar{\mb y}[\mb u] \, d \mb u$, for some nonnegative, Riemann integrable $f$. The vector valued function $\bar{\mb y}[\mb u] f(\mb u)$ is Riemann integrable. By Lemma \ref{lem:nn-appx}, for every $\eps > 0$, there exists $\hat{\mb y}_\eps \in \Psi$ with $\norm{ \mb y_0 - \hat{\mb y}_\eps }{2} \;\le\; \eps$. Hence,
\(
\mb y_0 \;\in\; \cl{ \Psi },
\)
and $C_0 \subseteq \cl{ \Psi }$.

We complete the proof by showing that $\Psi \subseteq \cl{C_0}$. By continuity of $\bar{\mb y}[\cdot]$, for any $\mb u_0 \in \sphere^2$, and any $\eps > 0$, there exists $\eta > 0$ such that
\(
\norm{\bar{\mb y}[\mb u_0] - \bar{\mb y}[\mb v]}{2} \;\le\; \eps \qquad \forall \mb v \in \ball{\mb u_0}{ \eta} \cap \sphere^2,
\)
where $\ball{\mb u}{r}$ is the $\ell^2$ ball of radius $r$ around $\mb u$. Let $f_\eps : \sphere^2 \to \reals$ via
\(
f_\eps( \mb v ) \;=\; \frac{1}{\area{\ball{\mb u_0}{\eta} \cap \sphere^2}} \indicator{\norm{\mb v - \mb u_0}{2} \;\le\; \eta }.
\)
Then $f_\eps$ is Riemann integrable, and
\begin{eqnarray}
\norm{\mb y[f_\eps] - \bar{\mb y}[\mb u_0]}{2}  &=&  \norm{ \frac{1}{\area{\ball{\mb u_0}{\eta} \cap \sphere^2 }} \int_{\| \mb u - \mb u_0\| \le \eta} \bar{\mb y}[\mb u] d\mb u \;\,-\;\, \bar{\mb y}[\mb u_0] }{2} \nonumber  \\
&\le& \frac{1}{\area{\ball{\mb u_0}{\eta}\cap \sphere^2 } } \int_{\| \mb u - \mb u_0\| \le \eta} \norm{ \bar{\mb y}[\mb u] \;-\; \bar{\mb y}[\mb u_0]}{2} d\mb u \nonumber \\
&\le& \frac{1}{\area{\ball{\mb u_0}{\eta}\cap \sphere^2 } } \int_{\| \mb u - \mb u_0\| \le \eta} \eps \;  d \mb u \nonumber \\
&\le& \eps.
\end{eqnarray}
Since this is true for every $\eps > 0$,  $\bar{\mb y}[\mb u] \in \cl{C_{0}}$, and so $\Psi \subseteq \cl{C_{0}}$, completing the proof.
\end{proof}

\paragraph{Proof of Lemma \ref{lem:ext-amb}.} Below, we prove Lemma \ref{lem:ext-amb}.

\vspace{.1in}

\begin{proof}
For $f = \alpha \omega + f_d \in \mc F_a$, write
\begin{eqnarray}
\mb y[f]  &=& \int_{\mb u} \bar{\mb y}[\mb u] \, f(\mb u) \, d\mb u \quad=\quad \int_{\mb u} \bar{\mb y}[\mb u] (\alpha \omega(\mb u) + f_d(\mb u)) \, d\mb u \\
 &=& \alpha \mb y_a + \int_{\mb u} \bar{\mb y}[\mb u] f_d(\mb u) \, d\mb u, \\
 &=& \alpha \, ( 1 - \norm{f_d}{L^1} ) \; \mb y_a + \int_{\mb u} \breve{\mb y}[\mb u] f_d(\mb u) d  \mb u. \label{eqn:y-f-sep}
\end{eqnarray}
Repeating arguments in the proof of Lemma 3.1, and using that $\breve{\mb y}[\mb u]$ is continuous in $\mb u$, we have
\(
\int_{\mb u} \breve{ \mb y}[\mb u] \, f_d(\mb u) \, d\mb u \;\in\; \cl{ \cone{ \set{ \breve{\mb y}[\mb u] \mid \mb u \in \sphere^2 } } }.
\)
Hence, from \eqref{eqn:y-f-sep}, $\norm{f_d}{L^1} \le 1$, and the fact that 
\(
\mb y_a \in \cl{ {\cone{\set{ \breve{\mb y}[\mb u] \mid \mb u \in \sphere^2 }} }}
\)
we have $\mb y[f] \in \cl{ {\cone{\set{ \breve{\mb y}[\mb u] \mid \mb u \in \sphere^2 }} }}$, and so $C_\alpha \subseteq \cl{\breve{C}}$.

Conversely, repeating arguments of Lemma \ref{lem:C0-ext}, we can show that the generators $\set{ \breve{\mb y}[\mb u] \mid \mb u \in \sphere^2 }$ and $\mb y_a$ are all elements of $\cl{C_\alpha}$, and hence $\breve{C} \subseteq \cl{C_\alpha}$, completing the proof.
\end{proof}

\paragraph{Proof of Lemma \ref{lem:ext-appx}.}

\begin{proof} Set
\(
\eps = \sup_{\mb u \in \sphere^2} \min_i \, \norm{ \frac{\breve{\mb y}[\mb u]}{\norm{\breve{\mb y}[\mb u]}{2}} - \frac{\breve{\mb y}[\mb u_i]}{\norm{\breve{\mb y}[\mb u_i]}{2}} }{2},
\)
and let $\mb w_\star$ realize the supremum in the definition of
\(
\label{eqn:eta-def}
\eta_\star = \sup_{\norm{\mb w}{2} \;\le\; 1} \inf_i \; \innerprod{ \mb w }{ \frac{\breve{\mb y}[\mb u_i]}{ \norm{ \breve{ \mb y}[ \mb u_i ] }{2} } }.
\)
\noindent Since $\bar{C} \subseteq \breve{C}$, we have
\(
\delta( \breve{C}, \bar{C} ) \;=\; \sup_{\mb y \in \breve{C} \setminus \set{\mb 0} } \frac{d(\mb y, \bar{C} )}{\norm{\mb y}{2}}.
\)
By Caratheodory's theorem, for every $\mb y \in \breve{C}$, there exist $\mb v_1 \dots \mb v_m \in \sphere^2$ and scalars $\lambda_1 \dots \lambda_m \ge 0$, $\zeta \ge 0$ such that
\(
\mb y \quad =\quad \sum_{j=1}^m \lambda_j \, \frac{\breve{\mb y}[\mb v_j]}{\norm{\breve{\mb y}[\mb v_j]}{2}} \;+\; \zeta \mb y_a.
\)
For each $j$, choose $\mb u_{i_j}$ such that $\norm{\frac{\breve{\mb y}[\mb v_j]}{\norm{\breve{\mb y}[\mb v_j]}{2}} - \frac{\breve{\mb y}[\mb u_{i_j}]}{\norm{\breve{\mb y}[\mb u_{i_j}]}{2}} }{2}$ is minimal. Then we have
\begin{eqnarray}
d(\mb y, \bar{C} ) &\le& \norm{\sum_j \lambda_j \frac{\breve{\mb y}[\mb v_j]}{\norm{\breve{\mb y}[\mb v_j]}{2}} + \zeta \mb{y}_a  - \sum_j \lambda_j \frac{\breve{\mb y}[\mb u_{i_j}]}{\norm{\breve{\mb y}[\mb u_{i_j}]}{2}} - \zeta \mb y_a }{2} \nonumber \\
 &\le& \sum_j \lambda_j \norm{ \frac{\breve{\mb y}[\mb v_j]}{\norm{\breve{\mb y}[\mb v_j]}{2}} - \frac{\breve{\mb y}[\mb u_{i_j}]}{\norm{\breve{\mb y}[\mb u_{i_j}]}{2}} }{2} \nonumber \\
 &\le& \sum_j \lambda_j \eps \nonumber \\
 &\le& \frac{\eps}{\eta_\star} \sum_{j=1}^m \lambda_j \innerprod{ \mb w_\star }{ {\frac{\breve{\mb y}[\mb v_{j}]}{\norm{\breve{\mb y}[\mb v_{j}]}{2}} }  } \nonumber \\
 &=&  \frac{\eps}{\eta_\star} \innerprod{ \mb w_\star }{ \sum_j \lambda_j {\frac{\breve{\mb y}[\mb v_{j}]}{\norm{\breve{\mb y}[\mb v_{j}]}{2}}}  } \nonumber \\
 &\le& \frac{\eps}{\eta_\star} \norm{ \sum_j \lambda_j {\frac{\breve{\mb y}[\mb v_{j}]}{\norm{\breve{\mb y}[\mb v_{j}]}{2}} }  }{2} \nonumber \\
&\le& \frac{\eps}{\eta_\star} \norm{ \mb y }{2}
\end{eqnarray}

Hence, $d(\mb y, \bar{C}) / \norm{\mb y}{2} \;\le\; \eps / \eta^\star$. We finish the proof by noting that
\begin{eqnarray}
 \norm{ \frac{\breve{\mb y}[\mb u]}{\norm{\breve{\mb y}[\mb u]}{2}} - \frac{\breve{\mb y}[\mb u']}{\norm{\breve{\mb y}[\mb u']}{2}} }{2} &\le& \norm{ \frac{\breve{\mb y}[\mb u]}{\norm{\breve{\mb y}[\mb u]}{2}} -  \frac{\breve{\mb y}[\mb u]}{\norm{\breve{\mb y}[\mb u']}{2}}  +  \frac{\breve{\mb y}[\mb u]}{\norm{\breve{\mb y}[\mb u']}{2}}  -  \frac{\breve{\mb y}[\mb u']}{\norm{\breve{\mb y}[\mb u']}{2}} }{2} \nonumber \\
&\le&  \norm{ \frac{\breve{\mb y}[\mb u]}{\norm{\breve{\mb y}[\mb u]}{2}} -  \frac{\breve{\mb y}[\mb u]}{\norm{\breve{\mb y}[\mb u']}{2}}  }{2} + \norm{ \frac{\breve{\mb y}[\mb u]}{\norm{\breve{\mb y}[\mb u']}{2}}  -  \frac{\breve{\mb y}[\mb u']}{\norm{\breve{\mb y}[\mb u']}{2}} }{2} \nonumber \\
&\le& \norm{\breve{\mb y}[\mb u]}{2}  \magnitude{\frac{\norm{\breve{\mb y}[\mb u']}{2} - \norm{\breve{\mb y}[\mb u]}{2} }{\norm{\breve{\mb y}[\mb u']}{2}\norm{\breve{\mb y}[\mb u]}{2}  } } + \frac{\norm{\breve{\mb y}[\mb u]  -  \breve{\mb y}[\mb u']}{2}}{\norm{\breve{\mb y}[\mb u']}{2}} \nonumber \\
&\le& 2 \frac{ \norm{ \breve{\mb y}[\mb u] - \breve{\mb y}[\mb u'] }{2} }{\norm{\breve{\mb y}[\mb u']}{2}} \nonumber \\
&\le& \frac{ 2 \norm{ \breve{\mb y}[\mb u] - \breve{\mb y}[\mb u'] }{2} }{ \alpha \norm{\mb y_a}{2} } \nonumber \\
&=& \frac{ 2 \norm{ \bar{\mb y}[\mb u] - \bar{\mb y}[\mb u'] }{2} }{ \alpha \norm{\mb y_a}{2} },
\end{eqnarray}
and hence
\(
\eps \;\le\; \sup_{\mb u} \min_i \frac{2 \norm{ \bar{\mb y}[\mb u] - \bar{\mb y}[\mb u_i] }{2} }{ \alpha \, \norm{\mb y_a}{2} },
\)
completing the proof. For the bound $\eta_\star \ge 1/\sqrt{m}$, note that if we choose $\mb w = m^{-1/2} \mb 1$ in the right hand side of \eqref{eqn:eta-def}, then since the $\breve{\mb y}[\mb u]$ are elementwise nonnegative, we for each $\mb u \in \sphere^2$
\(
\innerprod{\mb w}{\frac{\breve{\mb y}[\mb u]}{\norm{\breve{\mb y}[\mb u]}{2}} } \;=\; \frac{1}{\sqrt{m}} \frac{\norm{\breve{\mb y}[\mb u]}{1}}{\norm{\breve{\mb y}[\mb u]}{2}} \;\ge\; \frac{1}{\sqrt{m}}.
\)
This completes the proof.
\end{proof}

\section{Integrating on $\objbdy$} \label{app:int-obj-bdy}

 For each triangle 
\(
\Delta_i \;=\; \conv\set{ \mb z_i, \mb z'_i, \mb z_i'' }, 
\)
we can find an open triangle $U_i \subset \reals^2$, and an isometry $\varphi_i : U_i \to \mathrm{relint}(\Delta_i)$. To make this more concrete, we can let $\mb B_i \in \reals^{3 \times 2}$ be a matrix whose columns are an orthonormal basis for the $\mathrm{span}\set{ \mb z_i' - \mb z_i, \mb z_i'' - \mb z_i }$, write 
\(
\varphi_i( \mb w ) \;=\; \mb z_i + \mb B_i \mb w,
\)
and 
\(
U_i = \varphi_i^{-1} [ \mathrm{relint}(\Delta_i ) ].
\)
We construct an integral $\objbdy$ as follows. For each $i$, define a $\sigma$-algebra $\Sigma_i$ consisting of all sets of the form $\varphi_i[ S ]$, where $S \subseteq U_i$ is Lebesgue measurable. Let $\Sigma_{\objbdy}$ be the smallest $\sigma$-algebra containing each of the $\Sigma_i$. Define a measure $\mu_{\objbdy} : \Sigma_{\objbdy} \to \reals_+$ via 
\(
\mu_{\objbdy}(S) \;=\; \sum_{i} \; \mu\left( \; \varphi_i^{-1} [ S \cap \relint{\Delta_i} ]\; \right),
\)
where $\mu$ is the Lebesgue measure on $\reals^2$. It is easy to verify that $\mu_{\objbdy}$ is measure, making $(\Phi,\Sigma_{\objbdy},\mu_{\objbdy})$ a measure space, with Lebesgue integral
\(
\int g(\mb x) \; d\mu_{\objbdy}(\mb x) \;=\; \sum_{i} \int_{U_i} g \circ \varphi_i  \; d\mu.
\)
This gives an integral over $\Phi$. It extends to an integral over $\objbdy$ as a whole: for $g : \objbdy \to \reals$, we define its integral to be the integral of its restriction to $\Phi$.

\section{Proof of Lemma \ref{lem:imaging}} \label{app:imaging-lem-pf} {We prove Lemma \ref{lem:imaging}, which writes the imaging map $\mb y[f] = \mc P ( \mc I - \mc T )^{-1} \mc D[f]$ as an integral of the form $\int f(\mb u) \, \barmb{y}[\mb u] \, d\sigma(\mb u)$, where $\barmb{y}[\mb u ] = \mc P (\mc I - \mc T)^{-1} \bar{\mc D}[\mb u]$:}

\begin{proof}
We will show that for any Lebesgue integrable $f$, 
\( 
\label{eqn:y-exp-leb}
\mb y[f] \;=\; \int f(\mb u) \, \bar{\mb y}[\mb u] \; d \sigma(\mb u).
\)
By Theorem \ref{thm:main-perturbation}, $\bar{\mb y}[\mb u]$ is continuous in $\mb u$. Since the product of a nonnegative Riemann integrable $f$ and $\bar{\mb y}[\mb u]$ is Riemann integrable, this is expression is equal to the Riemann integral
\(
\mb y[f] \;=\; \int f(\mb u)\, \bar{\mb y}[\mb u] \; d \mb u,
\)
as desired. To show \eqref{eqn:y-exp-leb}, we use Tonelli's theorem and monotone convergence. {It is not difficult to show that the Riemann-integrable function $f$ is $\Sigma_{\sphere^2}$-measurable, $\bar{\mc D}[\mb u](\mb x)$ is $\Sigma_{\sphere^2} \times \Sigma_{\objbdy}$ measurable, and $\kappa$ is $\Sigma_{\objbdy} \times \Sigma_{\objbdy}$-measurable.} By repeated application of Tonelli's theorem, and using that the integrands are nonnegative, we obtain 
\begin{eqnarray}
\mc T^i \mc D[f] &=&  \int \kappa(\mb x, \mb x_i) \dots  \left( \int \kappa( \mb x_2, \mb x_1 ) \left( \int \bar{\mc D}[\mb u](\mb x_1) \, f (\mb u) \, d \sigma(\mb u) \right) \; d \mu_{\objbdy}(\mb x_1) \right) \; \dots \; d \mu_{\objbdy}(\mb x_i) \nonumber \\
 &=& \int f(\mb u) \left( \int \kappa(\mb x, \mb x_i) \dots \left( \int \kappa( \mb x_2, \mb x_1) \bar{\mc D}[\mb u](\mb x_1) d \mu_{\objbdy}(\mb x_1)\right) \dots d \mu_{\objbdy}(\mb x_i) \right) d \sigma(\mb u) \nonumber \\
 &=& \int f(\mb u) \; (\mc T^i \bar{\mc D}[\mb u])(\mb x) \; d \sigma(\mb u). 
\end{eqnarray}
By monotone convergence, 
\(
\sum_{i=0}^\infty \mc T^i \mc D[f](\mb x) \;=\; \int f(\mb u) \left( \sum_{i=0}^\infty \mc T^i \bar{\mc D}[\mb u] \right)(\mb x) \; d \sigma(\mb u).
\)
One more application of Tonelli's theorem gives \eqref{eqn:y-exp-leb}, completing the proof.
\end{proof}

\section{Proofs from Section \ref{subsec:shadow-boundary}} \label{app:shadow-proofs}

\paragraph{Proof of Lemma \ref{lem:rc}.} 
\begin{proof} Let $\mb x \in S[ \mb u ]^c$. By definition, $\bar{\mc D}(\mb x) = 0$ for all $\mb x \in E$, and so we may assume that $\mb x \in \Phi = \objbdy \setminus E$. Hence, $\mb x \in \relint{\Delta}$ for some face $\Delta$. Moreover, the definition of a triangulated object implies that there exists $\tau > 0$ such that $\ball{\mb x}{\tau} \cap \objbdy \subseteq \Delta$. 

Suppose, for purposes of contradiction, that there does not exist $r_0 > 0$ such that $\ball{\mb x}{ r_0} \cap \objbdy \subseteq S[\mb u]^c$. Then there exists a sequence $(\mb x_i)_{i=1}^\infty \subset S[\mb u]$, with $\mb x_i \to \mb x$. By dropping finitely many terms, we may assume $\mb x_i \in \relativeinterior{\Delta}$ for all $i$. Since $\mb x \in S[\mb u]^c$, $\innerproduct{\mb n(\mb x)}{\mb u} > 0$, and so $\innerproduct{\mb n(\mb x_i)}{\mb u} > 0 $ for all $i$. Hence, since $\mb x_i \in S[\mb u]$, for each $i$ there exists $t_i > 0$ such that $\mb x_i + t_i \mb u \in \objbdy$. For all $i$ large enough that $\mb x_i \in \ball{\mb x}{ \tau/2 }$, we necessarily have $t_i > \tau/2$. On the other hand, $t_i$ is bounded above by the diameter of the object. Because $t_i$ is bounded, it has a convergent subsequence $t_{i_1}, t_{i_2}, \dots \to \hat{t}$, with $\hat{t} > \tau/2$. Moreover, we have
\(
\lim_{j \to \infty} \mb x_{i_j} + t_{i_j} \mb u \;=\; \mb x + \hat{t} \mb u.
\)
Because $\obj$ is closed, this point is in $\obj$. Because $\hat{t} > \tau/2 > 0$, this implies that $\mb x \notin S[\mb u]^c$, a contradiction. Hence, for every $\mb x \in S[\mb u]^c$, there exists $r_0 > 0$ such that $\ball{\mb x}{r_0} \cap \objbdy \subseteq S[\mb u]^c$, and so $S[\mb u]^c$ is relatively open and $S[\mb u]$ relatively closed. 
\end{proof}

\paragraph{Proof of Lemma \ref{lem:cast-attach}.}

%\jw{Need to change the interval in the def'n of $t_\star$ and $t^\star$ to $[0,+\infty]$ (note inclusion of $0$).}

We use the notation\footnote{``$B$'' can be taken to stand for ``back''.}
\(
B[\mb u] \;\doteq\; \set{ \mb x \in \Phi \mid \innerproduct{ \mb n(\mb x) }{\mb u } \le 0 }.
\)
This set contains those points $\mb x$ that are necessarily shadowed, because $\mb n(\mb x)$ has nonpositive inner product with the light direction. Intuitively, if we ignore $B[\mb u]$, the remainder $S[\mb u] \setminus B[\mb u]$ should contain cast shadows, and for any point $\mb x$ in this set, the shadow retraction $\mb x^{\mb u}$ should exist. Furthermore, if $\mb x$ lies on the boundary of a cast shadow, its shadow retraction should lie in some {\em edge} of the object. We next state an intermediate lemma which makes this precise:
\begin{lemma}\label{lem:edge-cast} For all $\mb u$, and all $\mb x \in \Phi \cap \left( \shadowboundary{\mb u} \setminus B[{\mb u}] \right)$, $\mb x^{{\mb u}}$ exists, and $\mb x^{{\mb u}} \in E$.
\end{lemma}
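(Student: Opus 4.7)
My plan is to derive existence of $\mb x^{\mb u}$ directly from the Lambertian formula for $\directb{\mb u}$, and then to prove $\mb x^{\mb u}\in E$ by contradiction, using the triangulated structure to argue that if $\mb x^{\mb u}$ were in the relative interior of some face, then a whole neighborhood of $\mb x$ on its own face $\Delta$ would be shadowed, contradicting $\mb x\in\shadowboundary{\mb u}$.

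First, since $\mb x\in\shadowboundary{\mb u}$ and $S[\mb u]$ is relatively closed (Lemma \ref{lem:rc}), $\mb x\in S[\mb u]$, i.e.\ $\directb{\mb u}(\mb x)=0$. Because $\mb x\in\Phi$, formula \eqref{eqn:D-def} gives $\directb{\mb u}(\mb x)=\rho(\mb x)\innerprod{\mb n(\mb x)}{\mb u}_+\nu(\mb x,\mb u)$; the hypotheses $\mb x\notin B[\mb u]$ and $\rho>0$ force $\innerprod{\mb n(\mb x)}{\mb u}>0$, so $\nu(\mb x,\mb u)=0$. By \eqref{eqn:point-dire-vis} there exists $t>0$ with $\mb x+t\mb u\in\obj$, hence $t^\star(\mb x,\mb u)<+\infty$ and $\mb x^{\mb u}$ is defined.

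For the edge claim, suppose for contradiction $\mb x^{\mb u}\in\Phi$, so $\mb x^{\mb u}\in\relint{\Delta'}$ for some triangle $\Delta'$. The triangulation axiom yields $\epsilon>0$ with $\ball{\mb x^{\mb u}}{\epsilon}\cap\objbdy\subseteq\relint{\Delta'}$. On the other hand, $\mb x\in\relbdy{S[\mb u]}$ produces a sequence $(\mb x_i)\subseteq\Phi\setminus S[\mb u]$ with $\mb x_i\to\mb x$; after dropping finitely many terms, $\mb x_i\in\relint{\Delta}$, and each ray $\mb x_i+\reals_+\mb u$ misses $\obj$. I would then split on the angle between $\mb u$ and the plane of $\Delta'$. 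In the transverse subcase $\innerprod{\mb n(\Delta')}{\mb u}\ne 0$, the plane equation $\innerprod{\mb n(\Delta')}{\mb x'+t\mb u-\mb x^{\mb u}}=0$ has a unique solution $t(\mb x')$ depending linearly (hence continuously) on $\mb x'$, with $t(\mb x)=t^\star(\mb x,\mb u)>0$. For $\mb x'$ near enough to $\mb x$, $t(\mb x')>0$ and $\mb x'+t(\mb x')\mb u\in\ball{\mb x^{\mb u}}{\epsilon}\cap\objbdy$, so $\nu(\mb x',\mb u)=0$. This places a whole neighborhood of $\mb x$ on $\Delta$ inside $S[\mb u]$, contradicting the existence of the $\mb x_i$. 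In the tangent subcase $\innerprod{\mb n(\Delta')}{\mb u}=0$, the line $\mb x+\reals\mb u$ is parallel to the plane of $\Delta'$; if $\mb x$ is not in that plane the ray never meets $\mb x^{\mb u}$, while if it is, then $\mb x+(t^\star-\delta)\mb u$ remains in the in-plane neighborhood of $\mb x^{\mb u}$ inside $\relint{\Delta'}\subseteq\objbdy$ for all small $\delta>0$, contradicting the infimum definition of $t^\star$.

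I expect the main technical obstacle to be the bookkeeping in the transverse subcase, specifically showing that $t(\mb x')$ is strictly positive and that the first intersection of $\mb x'+\reals_+\mb u$ with $\objbdy$ is not deflected to some other face before reaching the predicted point near $\mb x^{\mb u}$. The local neighborhood $\ball{\mb x^{\mb u}}{\epsilon}\cap\objbdy\subseteq\relint{\Delta'}$ supplied by the triangulation hypothesis is what sidesteps this: the computed intersection point is guaranteed to land in $\objbdy$, which already suffices to conclude $t^\star(\mb x',\mb u)<+\infty$ and hence $\nu(\mb x',\mb u)=0$, whether or not it is actually the first hit.
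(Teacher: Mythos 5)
Your proof is correct and follows the same route as the paper: existence of $\mb x^{\mb u}$ from $\mb x\notin B[\mb u]$ forcing $\nu(\mb x,\mb u)=0$, and the edge claim by contradiction, showing that if $\mb x^{\mb u}\in\relint{\Delta'}$ then a whole relative neighborhood of $\mb x$ on its own face lies in $S[\mb u]$; your transverse and tangent subcases are exactly the paper's full-rank $3\times 3$ linear system and its preliminary claim that $\mb u\notin\mathrm{span}(\mb w_1,\mb w_2)$, just written with the scalar plane equation rather than matrix coordinates. One small fix: to place the intersection point $\mb x'+t(\mb x')\mb u$ in $\objbdy$ you should invoke the relative-interior property $\ball{\mb x^{\mb u}}{\eps}\cap\aff{\Delta'}\subseteq\Delta'$ rather than $\ball{\mb x^{\mb u}}{\eps}\cap\objbdy\subseteq\relint{\Delta'}$, since the computed point is a priori known only to lie in $\aff{\Delta'}$ near $\mb x^{\mb u}$, not in $\objbdy$ --- the needed property follows equally from $\mb x^{\mb u}\in\relint{\Delta'}$, so the repair is immediate.
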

\begin{proof}
Fix $\mb u$. Consider $\mb x \in \Phi \cap \left( \partial S[\mb u] \setminus B[\mb u] \right)$. 
Since $\mb x \in \Phi$, $\mb n(\mb x)$ is well-defined. Since $\mb x \notin B[\mb u]$, $\innerproduct{ \mb n(\mb x) }{ \mb u} > 0$. Since $\mb x \in S[\mb u]$, $\innerprod{\mb n(\mb x)}{\mb u} \nu(\mb x, \mb u) = 0$. This implies that $\mb x^{\mb u}$ exists, {$\mb x^{\mb u} \ne \mb x$}, and $\mb x^{\mb u}$ is an element of $\objbdy$.  If $\mb x^{\mb u} \in E$, we are done. For purpose of contradiction, suppose that $\mb x^{\mb u} \in \relint{\Delta}$ for some face $\Delta$. Thus, there exists $\eps > 0$ such that $\ball{\mb x^{\mb u}}{ \eps } \cap \aff{\Delta} \subseteq \Delta$. Since $\mb x \in \Phi$, $\mb x \in \relint{\Delta'}$ for some face $\Delta'$. Hence, there also exists $\eps' < 0$ such that $\ball{\mb x}{\eps'} \cap \objbdy \subset \Delta'$. 

Choose $\mb w_1, \mb w_2 \in \reals^3$ such that 
$$\aff{\Delta} \;=\; \set{ \mb x^{\mb u} + \alpha_1 \mb w_1 + \alpha_2 \mb w_2 \mid \alpha_1, \alpha_2 \in \reals}.$$
Similarly, {choose orthonormal vectors} $\mb v_1, \mb v_2 \in \reals^3$ such that 
$$\aff{\Delta'} \;=\; \set{\mb x + \beta_1 \mb v_1 + \beta_2 \mb v_2 \mid \beta_1, \beta_2 \in \reals}.$$
We claim that $\mb u \notin \mathrm{span}(\mb w_1, \mb w_2)$. Indeed, if not, then for small $\delta_t$, $\mb x^{\mb u} - \delta_t \mb u \in \Delta$. This would imply that $\mb x + (t^\star(\mb x, \mb u) - \delta_t) \mb u \in \objbdy$, contradicting the minimality of $t^\star(\mb x, \mb u)$. So, 
\[
\mathrm{rank}\left(\;\left[ \; \mb w_1 \,\mid\, \mb w_2 \,\mid\, \mb u \;\right]\;\right) \;=\; 3.
\]
Consider a generic point $\mb x' = \mb x + [\mb v_1 \mid \mb v_2 ] \left[ \begin{array}{c} \beta_1 \\ \beta_2 \end{array} \right] \in \aff{\Delta'}$. We find $t$ such that $\mb x' + t \mb u \in \aff{ \Delta }$. This is possible iff the equation 
\[
\mb x^{\mb u} + \mb W \mb \alpha \;=\; \mb x + t \mb u + \mb V \mb \beta
\]
{(with $\mb W = [\mb w_1 \mid \mb w_2]$, $\mb V = [\mb v_1 \mid \mb v_2]$)} has a solution {$(\mb \alpha, t)$}. Rearranging, we obtain
\[
\left[ \mb W \mid \mb u \right] \left[ \begin{array}{c} \mb \alpha \\ -t \end{array} \right] \;=\; \left[ \mb V \mid \mb u \right] \left[ \begin{array}{c} \mb \beta \\ - t^\star(\mb x, \mb u) \end{array} \right].
\]
Since the matrix on the left has full rank three, we have 
\[
\left[ \begin{array}{c} \mb \alpha \\ -t \end{array} \right] \;=\; \left[ \mb W \mid \mb u \right]^{-1} \left[ \mb V \mid \mb u \right] \left[ \begin{array}{c} \mb \beta \\ - t^\star(\mb x, \mb u) \end{array} \right]. 
\]
When $\mb \beta = \mb 0$, the solution is $\mb \alpha = \mb 0$, $t = t^\star(\mb x, \mb u ) > 0$. Hence, we can find $\eps'' \in (0,\eps')$ such that $\norm{\mb \beta}{2} \le \eps''$ implies that (i) $t > 0$, (ii) $\mb x + \mb V \mb \beta \in \Delta'$, (iii) $\Delta \owns \mb x^{\mb u} + \mb W \mb \alpha = \mb x + t \mb u$. {Because $\eps'' < \eps'$,  every $\mb x' \in \ball{\mb x}{\eps''} \cap \objbdy$ lies in $\Delta'$, and therefore has an expression $\mb x' = \mb x + \mb V \mb \beta$ with $\norm{\mb \beta}{} < \eps''$. Here, the fact that $\norm{\mb \beta}{} \le \eps''$ follows because $\mb V$ has orthonormal columns. By properties (i)-(iii) above, ${\mb x'}^{\mb u}$ exists, $\mb x' \in S[\mb u]$, and so $\ball{\mb x}{\eps''} \cap \objbdy \subseteq S[\mb u]$.} This implies that $\mb x \notin \partial S[\mb u]$. Hence, if $\mb x^{\mb u} \notin E$, $\mb x \notin \partial S[\mb u]$, and the proof is complete. 
\end{proof}

\begin{proof}[Proof of Lemma \ref{lem:cast-attach}] 

%{\bf Showing $A[\mb u] \subseteq E$.} Suppose first that $\mb x \in \shadowboundary{\mb u} \setminus C[\mb u]$. Since $\shadowboundary{\mb u} \subseteq S[\mb u]$, $\directb{\mb u}(\mb x) = 0$.
%Since $\mb x \notin C[\mb u]$, either $\mb x \notin \Phi$, in which case $\mb x \in E$ and we are done, or $\mb x^{\mb u}$ does not exist, and $\ptvisibility{\mb x}{\mb u} = 1$. Since $\rho(\mb x) > 0$ everywhere, if $\mb x \in \Phi$, we necessarily have $\innerproduct{\mb n(\mb x)}{\mb u} \le 0$, which implies that $\directb{\mb u}(\mb x') = 0$ for all $\mb x'$ in the face containing $\mb x$. In particular, there exists a relatively open ball about $\mb x$ that is also containing in $S[\mb u]$, contradicting $\mb x \in \shadowboundary{\mb u}$. Hence, if $\mb x \in \shadowboundary{\mb u} \setminus C[\mb u]$, $\mb x \notin \Phi$, and necessarily $\mb x \in E$. Hence, $A[\mb u] \subseteq E$.
%\vspace{.1in}

If $\mb x \in \partial S[\mb u] \cap \Phi$, then $\mb x \in \relativeinterior{\Delta}$ for some face $\Delta$. If $\mb x \in B[\mb u]$, then $\relativeinterior{\Delta} \subseteq S[\mb u]$, and $\mb x \notin \partial S[\mb u]$. Hence, $\mb x \in B[\mb u]^c$. By Lemma \ref{lem:edge-cast}, $\mb x^{\mb u}$ exists, and is an element of $E$. 
\end{proof}

%\section{Proof from Section \ref{subsec:perturb-bound}}

\section{Proof of Theorem \ref{thm:D}} \label{sec:direct}

\begin{proof} 

Our goal is to bound
\(
\norm{\directb{\mb u} -\directb{\mb u'}}{L^2}^2 = \int \magnitude{ \directb{\mb u}(\mb x) -\directb{\mb u'}(\mb x) }^2 d \mu_{\objbdy}(\mb x).
\)
\noindent {\bf Initial manipulations.} Notice that for $\mb x \in S[\mb u] \cap S[\mb u']$, $\directb{\mb u}(\mb x) = \directb{\mb u'}(\mb x) = 0$. For $\mb x \in S[\mb u]^c \cap S[\mb u']^c$,
\(
\magnitude{ \directb{\mb u}(\mb x) - \directb{\mb u'}(\mb x) } \;=\; \rho(\mb x) \magnitude{ \innerproduct{\mb n(\mb x)}{\mb u} - \innerproduct{\mb n(\mb x)}{\mb u'} } \;\le\; \rho(\mb x) \norm{\mb u - \mb u'}{2}. \label{eqn:lipschitz-bound}
\)
This bound also holds for $\mb x \in B[\mb u] \cup B[\mb u']$. Thus, setting
\(
\label{eqn:gamma-def}
\Gamma \;=\; B[\mb u] \bigcup B[\mb u'] \bigcup \left( S[\mb u] \cap S[\mb u'] \right) \bigcup \left( S[\mb u']^c \cap S[\mb u]^c \right),
\)
we obtain
\(
\label{eqn:gamma-part}
\norm{ \mc P_{\Gamma}\left( \directb{\mb u} - \directb{\mb u'} \right) }{L^2}^2 \;\le\; \rho_\star^2 \;\area{\Gamma} \, \norm{\mb u - \mb u'}{2}^2.
\)
\item Note that
\(
\label{eqn:partition}
\objbdy \;=\; \Gamma \biguplus \left\{ S[\mb u] \setminus \left( S[\mb u'] \cup B[\mb u] \right) \right\} \biguplus \left\{ S[\mb u'] \setminus \left( S[\mb u] \cup B[\mb u'] \right) \right\},
\)
where $\biguplus$ denotes disjoint union. Consider $S[\mb u] \setminus ( S[\mb u'] \cup B[\mb u] )$. Introduce a notation
\(
\bar{\mb u}(r) = \frac{(1-r) \mb u' + r \mb u}{\norm{(1-r) \mb u' + r \mb u}{2}}, \qquad r \in [0,1], 
\)
and set
\(
r^\star(\mb x) =  \inf \set{ r \in [0,1] \mid \mb x \in S[\barmb{u}(r)] }.
\)
For all $\mb x \in S[\mb u ] \setminus (S[\mb u'] \cup B[\mb u])$, since $\mb u = \barmb{u}(1)$ and $\mb x \in S[\mb u]$, $r^\star(\mb x) \le 1$ is finite. We have
\begin{eqnarray}
\magnitude{ \directb{\mb u}(\mb x) - \directb{\mb u'}(\mb x) } &=& \directb{\mb u'}(\mb x) \quad=\quad \rho(\mb x)\innerproduct{\mb n(\mb x)}{\mb u'} \nonumber \\
&=& \rho(\mb x)  \innerproduct{\mb n(\mb x)}{\mb u' - \barmb{u}(r^\star(\mb x))} + \rho(\mb x) \innerproduct{\mb n(\mb x)}{\barmb{u}(r^\star(\mb x))} \nonumber \\
&\le& \rho(\mb x) \norm{ \mb u' - \barmb{u}(r^\star(\mb x)) }{2} + \rho(\mb x) \innerproduct{\mb n(\mb x)}{\barmb{u}(r^\star(\mb x))}.
\end{eqnarray}
So, we have
\begin{eqnarray}
\lefteqn{\norm{ \mc P_{S[\mb u] \setminus (S[\mb u']\cup B[\mb u])} \left( \directb{\mb u} - \directb{\mb u'} \right) }{L^2}^2 } \nonumber \\ &\le& 2 \, \rho_\star^2 \cdot \area{S[\mb u] \setminus (S[\mb u']\cup B[\mb u] )} \norm{\mb u - \mb u'}{2}^2 \nonumber \\
&& + \; 2 \, \rho_\star^2 \cdot \int_{\mb x \in S[\mb u] \setminus (S[\mb u']\cup B[\mb u] )} \innerproduct{ \mb n(\mb x) }{ \barmb{u}(r^\star(\mb x))}^2 d \mu_{\objbdy}(\mb x) \label{eqn:SuSupBuBup}
\end{eqnarray}
and our task is to bound the final integral.\footnote{Under our assumptions, the function $\zeta(\mb x) = \innerproduct{ \mb n(\mb x) }{ \barmb{u}(r^\star(\mb x))}^2$ can be shown to be piecewise rational, with pieces defined on semialgebraic sets. This implies that $\zeta(\mb x)$ is measurable, and the integral in \eqref{eqn:SuSupBuBup} is well-defined.} We will show the following:
\begin{quote}
$(\triangle)$ For all $\mb x \in S[\mb u ] \setminus (S[\mb u'] \cup B[\mb u])$ such that $\mb x \in \Phi$, we have $\mb x \in \shadowboundary{\barmb{u}(r^\star(\mb x))}$.
\end{quote}
{\em The intuition behind this claim is straightforward: as the light direction moves from $\mb u'$ to $\mb u$, the first time that $\mb x$ falls into shadow, it must lie in the boundary of the shadow (imagine the boundary of the shadow sweeping across the face $\Delta_j$). Obtaining this as a rigorous consequence of our assumptions requires some manipulation, which we perform below. }

\paragraph{Proving $(\triangle)$.} Since $\mb x \in S[\mb u]$, and $\mb u = \barmb{u}(1)$, $r^\star(\mb x) \le 1$ is finite. Notice that $\barmb{u}(r)$ is continuous in $r$. Take $r_i \to r^\star(\mb x)$, with $\mb x \in S[\barmb{u}(r_i)]$. Then $\barmb{u}(r_i) \to \barmb{u}(r^\star(\mb x))$. If $\mb x \in S[\mb u] \setminus \left( S[\mb u'] \cup B[\mb u] \right)$ then $\< \mb n(\mb x), \mb u \> > 0$, $\< \mb n(\mb x), \mb u' \> > 0$, and so for any $r \in [0,1]$, $\< \mb n(\mb x), \bar{\mb u}(r) \> > 0$. Hence, for any $r$ such that $\mb x \in S[\barmb{u}(r)]$, it must be that $\nu(\mb x,\mb u) = 0$, and $\mb x^{\barmb{u}(r)}$ exists. So, for each of our sequence of $r_i$, $\mb x^{\barmb{u}(r_i)} \in \objbdy$ exists: 
\[
\mb x + t^\star(\mb x, \barmb{u}(r_i))\, \barmb{u}(r_i) \in \objbdy,
\]
where we recall that $t^\star(\mb x, \barmb{u}(r_i)) \,=\, \inf \set{ t > 0 \mid \mb x + t \, \barmb{u}(r_i) \in \objbdy }$. Let $\Delta$ be the face containing $\mb x$. Since $\mb x \in \Phi$, $\mb x \in \relint{\Delta}$, and there exists $\eps > 0$ such that $\ball{\mb x}{\eps} \cap \objbdy \subseteq{\Delta}$. Since $\< \mb n(\mb x), \barmb{u}(r_i)\> > 0$, if $t > 0$ is such that $\mb x + t \barmb{u}(r_i) \in \objbdy$, then $t > \eps$. Hence, for every $i$, $t^\star(\mb x, \barmb{u}(r_i) ) \ge \eps > 0$. Because $\obj$ is bounded, the $\beta_i \doteq t^\star( \mb x, \barmb{u}(r_i))$ are bounded. Hence, the sequence $(\beta_i)$ has a convergent subsequence $\beta_{i_j}$: $\lim_{j \to \infty} \beta_{i_j} = \beta_\star$ for some $\beta_\star$. From the previous discussion $\beta_\star \ge \eps$. Moreover, $\barmb{u}(r_{i_j}) \to \barmb{u}(r^\star(\mb x))$. Hence
\[
\mb x + \beta_{i_j} \barmb{u}(r_{i_j}) \to \mb x + \beta_\star \barmb{u}(r^\star(\mb x)).
\]
Because each element of the left hand side is in $\objbdy$, and $\objbdy$ is closed, the limit is in $\objbdy$. Because $\beta_\star > 0$, the right hand side is not equal to $\mb x$. We conclude that $\mb x \in S[\barmb{u}(r^\star(\mb x))]$. 

It is left to show that $\mb x$ lives in the relative boundary $\partial S[\barmb{u}(r^\star(\mb x))]$ of this set. Choose $r' \in (0,r^\star)$, and note that  $\innerprod{\barmb{u}(r')}{\mb n(\mb x)} > 0$. Notice that 
\[
\aff{\Delta} = \set{ \mb x' \mid \innerprod{\mb n(\mb x)}{\mb x'} = \innerprod{\mb n(\mb x)}{\mb x} }. 
\]
Hence, for any $\tau > 0$, if we set 
\[
s = \tau \frac{\innerprod{\mb n(\mb x)}{\barmb{u}(r^\star(\mb x))}}{\innerprod{\mb n(\mb x)}{\barmb{u}(r')}},
\]
then 
\[
\mb x' \doteq \mb x - \tau \barmb{u}(r^\star(\mb x)) + s \barmb{u}(r') \in \aff{ \Delta }.
\]
Suppose, for purpose of contradiction, that $\mb x \in \relint{S[\barmb{u}(r^\star(\mb x))]}$. Then, there exists $\tau_0 > 0$ such that for $\tau \in (0,\tau_0)$, we have 
\[
\mb x' \in S[\barmb{u}(r^\star(\mb x))].
\]
Moreover, if $\mb x' \in \ball{\mb x}{\eps/2}$, and $t > 0$ is such that $\mb x' + t \, \barmb{u}(r^\star(\mb x)) \in \objbdy$, then $t \ge \eps / 2$. Choose $\tau >0$ small enough that $\tau < \tau_0$, $\norm{\mb x ' - \mb x }{2} < \eps / 2$, and $\tau < \eps / 2$. 

With these choices, $\mb x' \in S[\barmb{u}(r^\star(\mb x))]$, and there exists $t \ge \eps / 2 > \tau$ such that 
\[
\mb x' + t \barmb{u}(r^\star(\mb x)) \in \objbdy.
\]
Write 
\[
\mb x' + t \barmb{u}(r^\star(\mb x)) = \mb x + s \barmb{u}(r') + (t - \tau) \barmb{u}(r^\star(\mb x)) = \mb x + s' \mb v,
\]
with 
\[
\mb v = \frac{ s \barmb{u}(r') + (t - \tau) \barmb{u}(r^\star) }{ \norm{ s \barmb{u}(r') + (t - \tau) \barmb{u}(r^\star)  }{2} }
\]
and
\[
s' = \norm{ s \barmb{u}(r') + (t - \tau) \barmb{u}(r^\star)  }{2}.
\]
Since $\mb v = \barmb{u}(r'')$ for some $r'' > 0$ which is strictly smaller than $r^\star(\mb x)$, and $\mb x \in S[\barmb{u}(r'')]$, this contradicts the definition of $r^\star$ as the infimum. Hence, $\mb x \in \shadowboundary{\barmb{u}(r^\star(\mb x))}$.

\paragraph{Bounding the integral.} {\em The main utility of $(\triangle)$ is that it allows us to organize our calculations in terms of the points that cast the shadows, rather than the points that are shadowed. We next reduce the problem of obtaining the desired bound to that of showing one key inequality, \eqref{eqn:ki-1}. We show how this inequality implies the desired result, and then return to show that this inequality indeed holds.}

To lighten the notation, we write
\(
H = S[\mb u] \setminus ( S[\mb u'] \cup B[\mb u] ), \qquad H_\Phi = H \cap \Phi.
\)
Since every $\mb x \in H_\Phi$ satisfies $\mb x \in \shadowboundary{\barmb u(r^\star(\mb x))}$, by the Lemma \ref{lem:edge-cast}, $\mb x^{\bar{\mb u}(r^\star(\mb x))}$ exists, and is an element of some edge $e^\star(\mb x)$. For each $e \in \mc E$, let
\(
\Xi_{e,\Delta} \;\doteq\; \set{ \mb x \in H_\Phi \cap \Delta \mid e^\star(\mb x) = e }.
\)
By the above discussion,
$$H_\Phi \;=\; \bigcup_{e,\Delta} \Xi_{e,\Delta}.$$
Below, we will demonstrate the following key inequality:\footnote{Below, we will show that the $\Xi_{e,\Delta}$ are semialgebraic sets, and hence measurable. Thus, the integrals in \eqref{eqn:ki-1} are well-defined.}
\(
\label{eqn:ki-1}
\sum_{e,\Delta} \int_{\mb x \in \Xi_{e,\Delta}} \innerproduct{\mb n(\mb x)}{\barmb{u}(r^\star(\mb x))}^2 d \mu_{\objbdy}(\mb x) \;\le\; 8 \sqrt{2} \cdot \diameter{\obj} \norm{\mb u - \mb u'}{2} \chi_\star.
\)
The proof of this inequality will consist of several steps, which are carried out below. We first show that this inequality implies the desired result. Notice that by definition,
\begin{eqnarray}
\int_{\mb x \in H}  \innerproduct{\mb n(\mb x)}{\barmb{u}(r^\star(\mb x))}^2 d \mu_{\objbdy}( \mb x ) &=& \int_{\mb x \in H_\Phi}  \innerproduct{\mb n(\mb x)}{\barmb{u}(r^\star(\mb x))}^2 d\mu_{\objbdy}( \mb x ) \nonumber \\
&\le& 8 \sqrt{2} \cdot \diameter{\obj} \norm{\mb u - \mb u'}{2} \chi_\star. 
\end{eqnarray}
Combining with \eqref{eqn:SuSupBuBup}, we obtain
\begin{eqnarray}
\lefteqn{ \norm{ \mc P_{S[\mb u] \setminus (S[\mb u']\cup B[\mb u] )} \left( \direct{\mb u} - \direct{\mb u'} \right) }{L^2}^2 } \nonumber \\
&\le& 2 \, \rho_\star^2  \, \area{S[\mb u] \setminus (S[\mb u']\cup B[\mb u] )} \norm{\mb u - \mb u'}{2}^2 \;+ \; 16 \sqrt{2} \, \rho_\star^2\, \diameter{\mc O}  \chi_\star \norm{\mb u -\mb u'}{2}. \label{eqn:SuSupBuBup-final} \quad
\end{eqnarray}
By symmetry, we also obtain
\begin{eqnarray}
\lefteqn{ \norm{ \mc P_{S[\mb u'] \setminus (S[\mb u] \cup B[\mb u'])} \left( \direct{\mb u} - \direct{\mb u'} \right) }{L^2}^2 } \nonumber \\
&\le& 2 \, \rho_\star^2 \, \area{S[\mb u'] \setminus (S[\mb u] \cup B[\mb u'])} \norm{\mb u - \mb u'}{2}^2 \;+\; 16 \sqrt{2} \, \rho_\star^2\, \diameter{\mc O}  \chi_\star \norm{\mb u -\mb u'}{2}. \label{eqn:SupSuBuBup-final} \quad
\end{eqnarray}
Combining these two bounds with \eqref{eqn:gamma-part} and \eqref{eqn:partition}, we obtain the claimed result. We are just left to verify \eqref{eqn:ki-1}.

\vspace{.25in}
\noindent{\bf Proof of Key Inequality \eqref{eqn:ki-1}.} {\em Rather than directly proving \eqref{eqn:ki-1}, which requires us to bound integrals over $\Xi_{e,\Delta}$, we first demonstrate a bound over a much simpler set (which turns out to be a quadrilateral $Q \subseteq \Delta$), and then show that we can arbitrarily well-approximate the domain of interest using finite collections of such quadrilaterals, to obtain the desired bound.}

For each $\mb z \in e$, and each $\Delta$, let
\(
\tau_{e,\Delta}(\mb z) = \set{ r \in [0,1] \mid \mb z_{\barmb{u}(r)} \;\text{\rm exists, and} \; \mb z_{\barmb{u}(r)}\in \Xi_{e,\Delta} }
\)
It is immediate that $\Xi_{e,\Delta} = \set{ \mb z_{\barmb{u}(r)} \mid \mb z \in e, r \in \tau_{e,\Delta}(\mb z)}$. Let $[\mb v, \mb w] = \conv\set{\mb v,\mb w}$. Simple geometric reasoning shows that if $[\mb v,\mb w] \subseteq e$, and {$[r_1,r_2] \subseteq \tau_{e,\Delta}(\mb z)$ for every $\mb z \in [\mb v,\mb w]$ (i.e., $[r_1,r_2] \subseteq \bigcap_{\mb z \in [\mb v,\mb w]} \tau_{e,\Delta}(\mb z)$),} the set
\(
\mc Q\left( [\mb v,\mb w], [r_1,r_2] \right) \;\doteq\;  \set{ \mb z_{\barmb{u}(r)} \mid r \in [r_1,r_2], \, \mb z \in [\mb v,\mb w] } \subset \Xi_{e,\Delta}
\)
is a quadrilateral. We will show the following:
\begin{quote}
$(\square)$ Let $[\mb v, \mb w] \subseteq e$. Suppose that $[r_1,r_2] \subseteq \bigcap_{\mb z\in [\mb v, \mb w]} \tau_{e,\Delta}(\mb z)$. Let
\[
Q \;=\; \mc Q([\mb v,\mb w],[r_1,r_2]).
\]
Then we have that
\(
\int_{\mb x\in Q} \innerproduct{\mb n(\mb x) }{ \barmb{u}(r^\star(\mb x)) }^2 d \mu_{\objbdy}(\mb x) \; \le\; 8 \sqrt{2} \cdot \diameter{\obj} \norm{\mb u - \mb u'}{2} \norm{\mb v - \mb w}{2} | r_2 - r_1 |.
\)
\end{quote}
We show the claim $(\square)$. If $\mb v - \mb w \in \mathrm{span}(\mb u, \mb u')$, then $Q$ has measure zero, the integral on the left hand side is zero, and the bound holds trivially. Suppose that $\mb v - \mb w \notin \mathrm{span}(\mb u, \mb u')$. Notice that if $\mb x^{\barmb{u}(r)} \in e$, then there exists a solution $(s_1,s_2,s_3)$ to the system of equations $\mb x + s_1 \mb u + s_2 \mb u' = \mb v s_3 + \mb w (1-s_3)$. When $\mb v - \mb w \notin \mathrm{span}(\mb u, \mb u')$ this system has at most one solution, and so for each $\mb x$ there is at most one $r$ such that
\[
\mb x^{\barmb{u}(r)} \in e.
\]
Now, for $\mb x \in Q \subseteq \Xi_{e,\Delta}$, we have $\mb x^{\barmb{u}(r^\star(\mb x))} \in e$. Moreover, by construction of $Q$, $\mb x = \mb z_{\barmb{u}(r)}$ for some $\mb z \in [\mb v, \mb w]$ and $r \in [r_1,r_2]$. Hence, it must be that $r = r^\star(\mb x)$, and so $r^\star(\mb x) \in [r_1,r_2]$. 

Set $\mb u_1 = \barmb{u}(r_1)$, $\mb u_2 = \barmb{u}(r_2)$. Notice that $\mb n(\mb x)$ is constant over $Q$. We abbreviate it by $\mb n$. Suppose that $\innerproduct{\mb n}{\barmb{u}(r)}$ is maximized over $[r_1,r_2]$ at $r = r_1$. Then
\(
\int_{\mb x \in Q} \innerproduct{ \mb n(\mb x) }{\barmb{u}( r^\star(\mb x))}^2 {d \mu_{\objbdy}( \mb x )} \;\le\; \left(\mb n^T \mb u_1\right)^2 \area{Q}.
\)
Let $\mb x_0$ be an arbitrary point in $\Delta$. Then 
\[
\aff{\Delta} \;=\; \set{ \mb x \mid \innerprod{ \mb n }{ \mb x }= \innerprod{\mb n}{ \mb x_0 } }.
\]
Using this expression, we can write the shadow projection $\mb z_{\mb u}$ as
\(
\mb z_{\mb u} = \left( \mb I - \frac{\mb u \mb n^T}{\mb n^T\mb u} \right) \mb z  + \frac{\mb n^T \mb x_0}{\mb n^T \mb u} \mb u
\)
The set $Q$ is a quadrilateral, with sides $[\mb w_{\mb u_1}, \mb v_{\mb u_1}]$, $[\mb v_{\mb u_1},\mb v_{\mb u_2}]$, $[\mb v_{\mb u_2}, \mb w_{\mb u_2}]$, $[\mb w_{\mb u_1}, \mb w_{\mb u_2}]$. We can calculate 
\begin{eqnarray*}
\mb v_{\mb u_1} - \mb v_{\mb u_2} &=& \left(\mb v - \mb u_1 \frac{\mb n^T \mb v}{\mb n^T \mb u_1} + \frac{\mb n^T \mb x_0}{\mb n^T \mb u_1} \mb u_1 \right) - \left( \mb v - \mb u_2 \frac{\mb n^T \mb v}{\mb n^T \mb u_2} + \frac{\mb n^T \mb x_0}{\mb n^T \mb u_2} \mb u_2 \right) \\
&=& \left( \frac{\mb u_1}{\mb n^T \mb u_1} - \frac{\mb u_2}{\mb n^T \mb u_2} \right) \left( \mb n^T ( \mb x_0 - \mb v ) \right),
\end{eqnarray*}
and similarly, $\mb w_{\mb u_1} - \mb w_{\mb u_2} =  \left( \frac{\mb u_1}{\mb n^T \mb u_1} - \frac{\mb u_2}{\mb n^T \mb u_2} \right) \left( \mb n^T ( \mb x_0 - \mb w ) \right)$. Since these differences are scalar multiples of the common vector $\frac{\mb u_1}{\mb n^T \mb u_1} - \frac{\mb u_2}{\mb n^T \mb u_2}$, the two sides $[\mb v_{\mb u_1}, \mb v_{\mb u_2}]$ and $[\mb w_{\mb u_1}, \mb w_{\mb u_2}]$ are parallel. 

Let $\ell^\perp$ denote the orthogonal length
\(
\ell^\perp\;\doteq\; \norm{\mb P_{(\mb v_{\mb u_1}-\mb v_{\mb u_2})^\perp} \left( \mb w_{\mb u_1} - \mb v_{\mb u_1} \right) }{2}.
\)
We have
\begin{eqnarray}
\area{Q}  &=& \frac{ \norm{\mb v_{\mb u_1} - \mb v_{\mb u_2} }{2} + \norm{\mb w_{\mb u_1} - \mb w_{\mb u_2} }{2} }{2} \times \ell^\perp \nonumber \\
                &=& \frac{ |\mb n^T (\mb v-\mb x_0)| + |\mb n^T (\mb w-\mb x_0)| }{2} \norm{ \frac{\mb u_1}{\mb n^T \mb u_1} - \frac{\mb u_2}{\mb n^T \mb u_2} }{2} \times \ell^\perp
\end{eqnarray}
Since $\mb v \in \obj$, and $\mb v_{\mb u_2} \in \Delta \subseteq \obj$, we have $\norm{\mb v - \mb v_{\mb u_2}}{2} \le \diameter{\obj}$. If we consider the right triangle formed by $\mb v$, $\mb v_{\mb u_2}$, and $\mb v_{\mathrm{proj}} = \mb v - \mb n \mb n^T (\mb v - \mb x_0)$ (the orthogonal projection of $\mb v$ onto $\mathrm{aff}(\Delta)$), we have
\begin{eqnarray}
|\mb n^T (\mb v-\mb x_0)| &=& \norm{ \mb v - \left( \mb v - \mb n \mb n^T ( \mb v - \mb x_0 ) \right)}{2}, \nonumber \\ 
   &=& \norm{\mb v - \mb v_{\mb u_2} }{2} \, \cdot \, \sin \angle\left( \mb v - \mb v_{\mb u_2}, \mb v_{\mathrm{proj}} - \mb v_{\mb u_2} \right), \nonumber \\ 
  &=& \norm{\mb v - \mb v_{\mb u_2}}{2} \times \mb n^T \mb u_2, \nonumber  \\ &\le& \diameter{\obj} \times \mb n^T \mb u_2.
\end{eqnarray}
A similar inequality holds for $|\mb n^T( \mb w - \mb x_0 ) |$. Together, this implies that
\(
\area{Q} \;\le\; \diameter{\obj} \times \mb n^T \mb u_2 \times \norm{\frac{\mb u_1}{\mb n^T \mb u_1} - \frac{\mb u_2}{\mb n^T \mb u_2} }{2} \times \ell^\perp.
\)
Thus, we have
\begin{eqnarray}
\int_{\mb x \in Q} \innerproduct{ \mb n(\mb x) }{\barmb{u}( r^\star(\mb x) ) }^2 d \mu_{\objbdy}(\mb x) &\le& \mb n^T \mb u_1 \times \norm{\mb n^T \mb u_2 \mb u_1 - \mb n^T \mb u_1 \mb u_2 }{2} \times \diameter{\obj} \times \ell^\perp. \qquad \label{eqn:integral-bnd-1}
\end{eqnarray}
Using the triangle inequality, it is easy to show that 
\(
\norm{\mb n^T \mb u_2 \mb u_1 - \mb n^T \mb u_1 \mb u_2}{2} \;\le\; 2 \, \| \mb u_1 - \mb u_2 \|_2. \label{eqn:triangle-1}
\)
 Using the general fact that for nonzero vectors $\mb a, \mb b$, 
\(
\norm{\frac{\mb a}{\norm{\mb a}{2}} - \frac{\mb b}{\norm{\mb b}{2}}}{2} \;\le\; \frac{2 \norm{\mb a - \mb b}{2}}{\max\set{\| \mb a \|_2, \| \mb b \|_2 }},
\)
and the fact that when $\norm{\mb u - \mb u'}{2} \le \sqrt{2}$, $\norm{r \mb u + (1-r) \mb u'}{2} \ge 1/\sqrt{2}$ for all $r$ in $[0,1]$, 
we have 
\begin{eqnarray}
\| \mb u_1 -\mb u_2 \|_2  &\le& \frac{2 \norm{  r_1 \mb u + (1-r_1) \mb u' - \left( r_2 \mb u - (1-r_2) \mb u' \right) }{2}}{\| r_1 \mb u + (1-r_1) \mb u' \|_2} \nonumber \\ &\le& 2 \sqrt{2} \cdot \| \mb u - \mb u' \|_2 \, | r_1 - r_2 |,  \label{eqn:u-diff}
\end{eqnarray}
Putting together \eqref{eqn:integral-bnd-1}, \eqref{eqn:triangle-1} and \eqref{eqn:u-diff}, we get
\begin{eqnarray*}
\int_{\mb x \in Q} \innerproduct{ \mb n(\mb x) }{\barmb{u}( r^\star(\mb x) ) }^2  d \mu_{\objbdy}(\mb x) &\le& 4 \sqrt{2} \times \mb n^T \mb u_1 \times \ell^\perp \times \diameter{\obj} \times \norm{\mb u- \mb u'}{2} \times |r_1 - r_2 |.
\end{eqnarray*}
Finally, using the expression for $\mb v_{\mb u_1}$ and $\mb w_{\mb u_1}$, we obtain
\begin{eqnarray}
\mb n^T \mb u_1 \times \ell^\perp &\le& \mb n^T \mb u_1 \norm{\mb v_{\mb u_1} - \mb w_{\mb u_1} }{2} \nonumber \\ &=& \norm{ ( \mb n^T \mb u_1) ( \mb v - \mb w ) - \mb u_1 \mb n^T ( \mb v - \mb w ) }{2} \nonumber \\ &\le& 2 \norm{\mb v - \mb w }{2}.
\end{eqnarray}
This completes the proof of $(\square)$ for the case when $\< \mb n, \barmb{u}(r) \>$ is maximized at $r = r_1$. 
If $\innerproduct{\mb n}{\barmb{u}(r)}$ is instead maximized $r = r_2$, we may simply repeat the above argument, interchanging $\mb u_1$ and $\mb u_2$. If $\innerproduct{\mb n}{\barmb{u}(r)}$ is instead maximized at some $r_0 \in (r_1,r_2)$, we may partition $Q$ into two sub-quadrilaterals, indexed by $[r_0,r_1]$ and $[r_1,r_2]$, respectively, and then apply the argument to each. This establishes $(\square)$.

Our approach, then, is to discretize the domain of integration and apply $(\square)$. We make the following technical claim regarding approximation of the domain of intergration $\Xi_{e,\Delta}$ by quadrilaterals:
\begin{quote}
$(\Diamond)$ For each edge $e$ and face $\Delta$, and any $\eps > 0$, there exists a finite collection of segments
\[
[\mb a_1,\mb b_1] \cup [\mb a_2, \mb b_2] \cup \dots \cup [\mb a_N, \mb b_N] \subseteq e
\]
with disjoint relative interiors, and a collection of interior-disjoint intervals
\[
\left( \left[r^{(1)}_{2j-1},r_{2j}^{(1)}\right] \right)_{j=1}^{n_1}, \dots, \left( [r_{2j-1}^{(N)},r_{2j}^{(N)}]\right)_{j=1}^{n_N}
\]
with the following properties:
\begin{eqnarray}
&\text{(i)}& [r_{2j-1}^{(i)},r_{2j}^{(i)}] \subseteq \bigcap_{\mb z \in [\mb a_i,\mb b_i]} \tau_{e,\Delta}(\mb z),
\end{eqnarray}
which implies that $Q_{ij} \doteq \mc Q\left( [\mb a_i,\mb b_i], [r_{2j-1}^{(i)},r_{2j}^{(i)}] \right) \subseteq \Xi_{e,\Delta}$, and 
\begin{eqnarray}
&\text{(ii)}& \mu_{\objbdy} \left( \Xi_{e,\Delta} \setminus \bigcup_{i,j} Q_{ij} \right) \;\le\; \eps.
\end{eqnarray}
\end{quote}
We will show $(\Diamond)$ below. Let us first examine its implications. We have
\begin{eqnarray*}
\lefteqn{ \int_{\mb x \in \Xi_{e,\Delta}} \innerproduct{\mb n(\mb x)}{ \barmb{u}(r^\star(\mb x))}^2 d \mu_{\objbdy}(\mb x) } \\
&\le&\left( \sum_{i,j} \int_{\mb x \in Q_{ij}} \innerproduct{\mb n(\mb x)}{ \barmb{u}(r^\star(\mb x))}^2 d\mu_{\objbdy}( \mb x ) \right) \;+\; \mu\left(\Xi_{e,\Delta} \setminus \bigcup_{ij} Q_{ij} \right) \sup_{\mb x \in \objbdy} \innerproduct{\mb n(\mb x) }{ \barmb{u}(r^\star(\mb x)) }^2, \\
&\le& \sum_{ij} 8 \sqrt{2} \, \diameter{\obj} \norm{\mb u - \mb u'}{2} \norm{\mb b_i- \mb a_i}{2} | r^{(i)}_{2j} - r^{(i)}_{2j-1} | \;\;+\;\; \eps, 
\end{eqnarray*}
where the first term follows from $(\square)$. 

Consider the product $e \times [0,1]$. The rectangles $[\mb a_i, \mb b_i] \times [r_{2j-1}^{(i)},r_{2j}^{(i)}]$ have disjoint interiors. So, 
\(
\sum_{ij} \norm{\mb b_i - \mb a_i}{2} | r_{2j}^{(i)} - r_{2j-1}^{(i)}| \;=\; \int_{\mb z \in e, r\in[0,1]} \indicator{(\mb z,r) \in \bigcup_{ij} \mathrm{int}\left( [\mb a_i, \mb b_i] \times [r_{2j-1}^{(i)},r_{2j}^{(i)}] \right) } d\mb z \, dr.
\)
By construction, for any pair $(\mb z,r) \in [\mb a_i,\mb b_i] \times [r_{2j-1}^{(i)},r_{2j}^{(i)}]$ we have $\mb z_{\barmb{u}(r)} \in \Xi_{e,\Delta}$. So,
\[
\indicator{(\mb z,r) \in \bigcup_{ij} \mathrm{int}\left( [\mb a_i,\mb b_i] \times [ r_{2j-1}^{(i)},r_{2j}^{(i)}] \right) } \;\le\; \indicator{\mb z_{\barmb{u}(r)} \in \Xi_{e,\Delta}},
\]
and
\begin{eqnarray*}
\int_{\mb x \in \Xi_{e,\Delta}} \innerproduct{\mb n(\mb x)}{ \barmb{u}(r^\star(\mb x))}^2 d \mu_{\objbdy}(\mb x) &\le& 8 \sqrt{2} \, \diameter{\obj} \norm{\mb u - \mb u'}{2} \int_{\mb z \in e, r \in [0,1]} \indicator{\mb z_{\bar{\mb u}(r)} \in \Xi_{e,\Delta}} d\mb z \, dr \;\;+\;\; \eps.
\end{eqnarray*}
Since this holds for every $\eps > 0$, we have
\(
\int_{\mb x \in \Xi_{e,\Delta}} \innerproduct{\mb n(\mb x)}{\barmb{u}(r^\star(\mb x))}^2 d \mu_{\objbdy}(\mb x) \;\le\;  8 \sqrt{2} \, \diameter{\obj} \norm{\mb u - \mb u'}{2} \int_{\mb z \in e, r \in [0,1]} \indicator{\mb z_{\bar{\mb u}(r)} \in \Xi_{e,\Delta}} d\mb z\,  dr.
\)
Summing over $e,\Delta$, we obtain
\begin{eqnarray}
\sum_{e,\Delta} \int_{\mb x \in \Xi_{e,\Delta}} \innerproduct{\mb n(\mb x)}{\barmb{u}(r^\star(\mb x))}^2 d \mu_{\objbdy}(\mb x) &\le&   8\sqrt{2} \, \diameter{\obj} \norm{\mb u - \mb u'}{2} \int_{r \in [0,1]} \left( \sum_e \int_{\mb z \in e} \sum_{\Delta} \indicator{\mb z_{\bar{\mb u}(r)} \in \Xi_{e,\Delta}} d\mb z \right) \, dr. \nonumber
\end{eqnarray}
Notice that for a given edge $e = [ \mb a, \mb b]$, if it happens that $ \mb b - \mb a \in \mathrm{span}\set{\mb u, \mb u'}$, $\Xi_{e,\Delta}$ has measure zero. So, letting $\mc E'$ denote the set of edges $[\mb a, \mb b]$ for which $\mb b - \mb a \notin \mathrm{span}\set{\mb u, \mb u'}$, we have
\begin{eqnarray}
\sum_{e,\Delta} \int_{\mb x \in \Xi_{e,\Delta}} \innerproduct{\mb n(\mb x)}{\barmb{u}(r^\star(\mb x))}^2 d\mu_{\objbdy}( \mb x ) &\le&   8\sqrt{2} \, \diameter{\obj} \norm{\mb u - \mb u'}{2} \int_{r \in [0,1]} \left( \sum_{e\in \mc E'} \int_{\mb z \in e} \sum_{\Delta} \indicator{\mb z_{\bar{\mb u}(r)} \in \Xi_{e,\Delta}} d\mb z \right) \, dr. \nonumber
\end{eqnarray}
It is not difficult to show that if $e \in \mc E'$, for each $\mb x \notin e$ there is at most one $r$ such that $\mb x^{\barmb{u}(r)} \in e$. So, if $\mb x = \mb z_{\barmb{u}(r)} \in \Xi_{e,\Delta}$, it must be that $r = r^\star(\mb x)$. This implies (via $(\triangle)$) that $\mb x \in \partial S[\barmb{u}(r)]$. Since $\mb x \in \Phi$ as well, and $\mb z = \mb x^{\barmb{u}(r)}$, we have $\mb z \in \chi[\barmb{u}(r)]$. So, for $e \in \mc E'$, $\mb z_{\barmb{u}(r)} \in \Xi_{e,\Delta}$ implies that $\mb z \in \chi[\barmb{u}(r)]$. This, together with the fact that the sets $\Xi_{e,\Delta}$ and $\Xi_{e,\Delta'}$ are disjoint whenever $\Delta \ne \Delta'$ gives that 
\[
\sum_{\Delta} \indicator{\mb z_{\barmb{u}(r)}\in \Xi_{e,\Delta}} \;\le\; \indicator{\mb z \in \chi[\barmb{u}(r)]},
\]
and
\begin{eqnarray}
\sum_{e,\Delta} \int_{\mb x \in \Xi_{e,\Delta}} \innerproduct{\mb n(\mb x)}{\barmb{u}(r^\star(\mb x))}^2 d \mu_{\objbdy}(\mb x)
&\le& 8\sqrt{2} \, \diameter{\obj} \norm{\mb u - \mb u'}{2} \int_{r \in [0,1]} \left( \sum_{e \in \mc E'} \int_{\mb z \in e} \indicator{\mb z \in \chi[\bar{\mb u}(r)] } d\mb z \right) dr \qquad \nonumber \\
&\le& 8 \sqrt{2} \, \diameter{\obj} \norm{\mb u - \mb u'}{2} \int_{r \in [0,1]} \length{\chi[\bar{\mb u}(r)] } \, dr  \nonumber \\
&\le& 8 \sqrt{2} \, \diameter{\obj} \norm{\mb u - \mb u'}{2}  \chi_\star,
\end{eqnarray}
as desired. To finish the proof, we are just left to show $(\Diamond)$.

\vspace{.25in}
\noindent {\bf Demonstrating $(\Diamond)$.} We begin with the definition of $\Xi_{e,\Delta}$:
\(
\Xi_{e,\Delta} = \set{ \mb x \;\middle| \begin{array}{l} \mb x \in \mathrm{relint}(\Delta) \\ \mb x \in S[\mb u ] \setminus \left( S[\mb u'] \cup B[\mb u] \right) \\ \mb x^{\barmb{u}( r^\star(\mb x))} \in e \end{array} }.
\)
If the outward normal $\mb n$ to $\Delta$ satisfies $\mb n^T \mb u \le 0$, $\mathrm{relint}(\Delta) \subseteq B[\mb u]$, and $\Xi_{e,\Delta}$ is empty, implying that $(\Diamond)$ is trivially satisfied. Similarly, if $\mb n^T \mb u' \le 0$, $\Delta \subseteq S[\mb u']$,  $\Xi_{e,\Delta}$ is empty, and $(\Diamond)$ is trivially satisfied. To fix notation, let $e = [\mb a, \mb b]$. If $\mb b - \mb a \in \mathrm{span} \set{ \mb u, \mb u' }$, then $\Xi_{e,\Delta}$ has measure zero, and $(\Diamond)$ is again trivially satisfied. 

It remains to consider the case when $\mb n^T \mb u > 0$ and $\mb n^T \mb u' > 0$, and $\mb b - \mb a \notin \mathrm{span}\set{\mb u, \mb u'}$. We will find it slightly more convenient to work with an unnormalized version of $\barmb{u}$, by setting 
\(
\tilde{\mb u}(r) = r\mb u + (1-r) \mb u'. 
\)
It is easy to check that {$\mb x^{\barmb{u}(r)}$ is defined if and only if } $\mb x^{\tilde{\mb u}(r)}$ is defined, and $\mb x^{\tilde{\mb u}(r)} = \mb x^{\barmb{u}(r)}$. So, we can rephrase our expression for $\Xi_{e,\Delta}$ as 
\(
\Xi_{e,\Delta} = \set{ \mb x \; \middle| \begin{array}{l} \mb x \in \mathrm{relint}(\Delta) \\ \mb x \in S[\mb u ] \setminus \left( S[\mb u'] \cup B[\mb u] \right) \\ \mb x^{\tilde{\mb{u}}( r^\star(\mb x))} \in e \end{array} }.\label{eqn:xi-props}
\)
To show that $\Xi_{e,\Delta}$ can be well-approximated by quadrilaterals of the desired form, it will be useful to work in coordinates. Let 
\[
\Delta \;=\; \conv\set{ \mb v_1, \mb v_2, \mb v_3 }. 
\]
We can parameterize $\aff{\Delta}$ in terms of $\mb w \in \reals^2$ via 
\(
\mb x( \mb w ) \;=\; \mb v_1 w_1 + \mb v_2 w_2 + \mb v_3 (1 - w_1 - w_2 ) \;=\; \mb V \mb w + \mb v_3,
\)
with $\mb V = [\mb v_1 - \mb v_3 \mid \mb v_2 - \mb v_3 ] \in \reals^{3 \times 2}$. Then  
\(
\Delta = \set{ \mb x(\mb w) \mid w_1 \ge 0, w_2 \ge 0, w_1 + w_2 \le 1 }.
\)
Similarly, parameterize $e$ via 
\(
\mb z(s) = s \mb a + (1-s ) \mb b.
\)

Let 
\(
W_{e,\Delta} \;\doteq\; \set{ \mb w \mid \mb x(\mb w) \in \Xi_{e,\Delta} } \;\subset\; \reals^2.
\)
We will show that $W_{e,\Delta}$ is a {\em semialgebraic set} \cite{coste2000introduction}. As we will see, semialgebraic sets are sufficiently well-behaved to admit the approximation promised by $(\Diamond)$. To show that $W_{e,\Delta}$ is semialgebraic, we take the conditions in \eqref{eqn:xi-props} one at a time. First, notice that 
\(
\mb x(\mb w) \in \mathrm{relint}(\Delta) \quad\iff\quad w_1 > 0, \; w_2 > 0, \;\; \text{and} \,\; w_1 + w_2 < 1.
\)
Set 
\(
W_1 = \set{ \mb w \in \reals^2 \mid w_1 > 0, \; w_2 > 0, \;\; \text{and} \,\; w_1 + w_2 < 1 }.
\)
The set $W_1$ is semialgebraic. 

Now, take the second condition in \eqref{eqn:xi-props}: $\mb x \in S[\mb u]  \setminus \left( S[\mb u'] \cup B[\mb u] \right)$. Since $\mb n^T \mb u > 0$, this condition reduces to $\mb x \in S[\mb u] \setminus S[\mb u']$. Moreover, since $\mb n^T \mb u' > 0$, this condition further reduces to 
\(
\exists \; t > 0 \;\; \mathrm{s.t.} \;\; \mb x + t\mb u \in \objbdy, \qquad \text{and} \qquad \nexists \; t' > 0 \;\; \mathrm{s.t.} \;\; \mb x + t' \mb u' \in \objbdy. 
\)
For each $\Delta'$, define a coordinate map $\mb x_{\Delta'}(\mb w')$ in the same manner as $\mb x(\mb w)$. For each $\Delta'$, write 
\(
S_{2,\Delta'} = \set{ \mb w, \mb w', t \; \middle| \begin{array}{l} \mb x(\mb w) + t \mb u = \mb x_{\Delta'}(\mb w') \\ t > 0 \\ w'_1 \ge 0, \; w'_2 \ge 0, \; w'_1 + w'_2 \le 1 \end{array} } \subset \reals^5.
\)
Write $\mc P_{\mb w}$ for the projection onto the $\mb w$ coordinates, and 
\(
W_{2,\Delta'} = \mc P_{\mb w} S_{2,\Delta'}. 
\)
With this definition, notice that $\exists \, t > 0$ such that $\mb x(\mb w) + t \mb u \in \objbdy$ if and only if $\mb w \in \bigcup_{\Delta' \ne \Delta} W_{2,\Delta}$. Moreover, since each $S_{2,\Delta'}$ is defined by finitely many polynomial inequalities, each $S_{2,\Delta'}$ is semialgebraic. By the Tarski-Seidenberg theorem, each $W_{2,\Delta'}$ is also semialgebraic. 

In a similar manner, define 
\(
S_{3,\Delta'} = \set{ \mb w, \mb w', t' \; \middle| \begin{array}{l} \mb x(\mb w) + t'\mb u' = \mb x_{\Delta'}(\mb w') \\ t' > 0 \\ w'_1 \ge 0, \; w'_2 \ge 0,\; w'_1 + w'_2 \le 1 \end{array} }
\)
and $W_{3,\Delta'} = \mc P_{\mb w} S_{3,\Delta'}$. The $W_{3,\Delta'}$ are also semialgebraic. Combining these sets, we have that $\mb x(\mb w) \in S[\mb u] \setminus S[\mb u']$ if and only if 
\(
\mb w \in \left( \bigcup_{\Delta'} W_{2,\Delta'} \right) \setminus \left( \bigcup_{\Delta'} W_{3,\Delta'} \right) \;\doteq\; W_4.
\)
The set $W_4$ is produced from semialgebraic sets via finitely many set operations, and hence is semialgebraic. 

The final condition in \eqref{eqn:xi-props} that we need to consider is that $\mb x^{\tilde{\mb u}(r^\star(\mb x))} \in e$. Because $\mb b - \mb a \notin \mathrm{span} \set{\mb u, \mb u'}$, for each $\mb x$ there exists at most one pair $(\hat{t},\hat{r})$ with $\hat{t} > 0, \hat{r} \in [0,1]$ such that $\mb x + \hat{t} \tilde{\mb u}(\hat{r}) \in e$. Hence, there exists at most one $\hat{r} \in [0,1]$ such that $\mb x^{\tilde{\mb u}(\hat{r})} \in e$. For any given $\hat{r}$, $\mb x^{\tilde{\mb u}(\hat{r})} \in e$ if and only if the following two conditions are satisfied:
\begin{eqnarray}
\exists \, \hat{t} \; &\text{s.t.}& \; \hat{t} > 0, \; \mb x + \hat{t} \tilde{\mb u}(\hat{r}) \in e \\
\nexists\, (\hat{t},t') \; &\text{s.t.}& \; 0 < t' < \hat{t}, \; \mb x + \hat{t} \tilde{\mb u}(\hat{r}) \in e, \; \mb x + t' \tilde{\mb u}(\hat{r}) \in \objbdy.
\end{eqnarray}
The first condition ensures that the ray $\mb x + \reals_{++} \tilde{\mb u}(\hat{r})$ intersects $e$, while the second ensures that no other point of $\objbdy$ lies between $\mb x$ and this intersection on the ray. 

Set 
\(
S_5 = \set{ \hat{r}, \hat{t}, \mb w, s \; \middle| \begin{array}{l} \mb x(\mb w) + \hat{t} \tilde{\mb u}(\hat{r}) = \mb z(s) \\ s \in [0,1] \\ \hat{t} > 0 \\ \hat{r} \in [0,1] \end{array} }
\)
and $W_5 = \mc P_{\mb w} S_5$. Write
\(
S_{6,\Delta'} = \set{ \hat{r}, \hat{t}, \mb w, s, t', \mb w' \; \middle| \begin{array}{l} \mb x(\mb w) + \hat{t} \tilde{\mb u}(\hat{r}) = \mb z(s) \\ s \in [0,1] \\ \hat{t} > 0 \\ \hat{r} \in [0,1] \\ 0 < t' < \hat{t} \\ w'_1 \ge 0, \; w'_2 \ge 0, \; w'_1 + w'_2 \le 1 \\ \mb x(\mb w) + t' \tilde{\mb u}(\hat{r}) = \mb x_{\Delta'}(\mb w') \end{array} },
\)
and $W_{6,\Delta'} = \mc P_{\mb w} S_{6,\Delta'}$. Then there exists $\hat{r} \in [0,1]$ such that $\mb x(\mb w)^{\tilde{\mb u}(\hat{r})} \in e$ if and only if 
\(
\mb w \in W_5 \setminus \bigcup_{\Delta' \ne \Delta} W_{6,\Delta'} \;\doteq\; W_7.
\)
Again, the set $W_7$ is semialgebraic. 

Consider $\mb w \in W_7$. By construction this means that there exists $\hat{r}$ such that $\mb x(\mb w)^{\tilde{\mb u}(\hat{r})} \in e$. Moreover, by the above reasoning, this $\hat{r}$ is the only $r$ with this property. Is $\hat{r} = r^\star( \mb x(\mb w))$? This is true if and only if there does not exist $r' \in (0,\hat{r})$ and $t' > 0$ with $\mb x(\mb w) + t' \tilde{\mb u}(r') \in \objbdy$. Let 
\(
S_{8,\Delta'} = \set{ \hat{r}, \hat{t}, \mb w, s, r', t', \mb w' \; \middle| \begin{array}{l} \mb x(\mb w) + \hat{t} \tilde{\mb u}(\hat{r}) = \mb z(s) \\ s \in [0,1] \\ \hat{t} > 0 \\ \hat{r} \in [0,1] \\ t' >0 \\ 0 < r' < \hat{r} \\ w'_1 \ge 0, \; w'_2 \ge 0, \; w'_1 + w'_2 \le 1 \\ \mb x(\mb w) + t' \tilde{\mb u}(\hat{r}) = \mb x_{\Delta'}(\mb w') \end{array} }
\)
and again set $W_{8,\Delta'} = \mc P_{\mb w} S_{8,\Delta'}$. Then we have 
\(
\mb x(\mb w)^{\tilde{\mb u}(r^\star(\mb x(\mb w)))} \in e \quad \iff \quad \mb w \in W_7 \setminus \bigcup_{\Delta' \ne \Delta} W_{8,\Delta'} \quad \doteq \quad W_9.
\)
Hence, setting $W_{e,\Delta} = W_1 \cap W_4 \cap W_9$, we have 
\(
\mb x(\mb w) \in \Xi_{e,\Delta} \quad \iff \quad \mb w \in W_{e,\Delta},
\)
and the set $W_{e,\Delta}$ is semialgebraic. 

Our next task is to rewrite $W_{e,\Delta}$ in terms of the parameters $s,r$. Since $\mb n^T \mb u > 0 $ and $\mb n^T \mb u' > 0$, for every $r \in [0,1]$, $\mb n^T \tilde{\mb u}(r) > 0$. Hence, for every $r,s$ there is a unique $t$ such that 
\[
\mb z(s) - t \tilde{\mb u}(r) \in \aff{\Delta}.
\]
In particular, there exists a unique $(\mb w,t) \in \reals^2 \times \reals$ such that 
\[
\mb z(s) - t \tilde{\mb u}(r) = \mb x(\mb w).
\]
This gives a system of equations 
\(
\bigl[\;  \mb V \mid \tilde{\mb u}(r) \; \bigr] \left[ \begin{array}{c} \mb w \\ t \end{array} \right] \;=\; \mb z(s) - \mb v_3.
\)
Under our assumptions, it is not difficult to show that there exists $\zeta > 0$ such that 
\(
\sigma_{\min}\left( \bigl[ \; \mb V \mid \tilde{\mb u}(r) \;\bigr]\right) \;\ge\; \zeta 
\)
for all $r \in [0,1]$.\footnote{Because $\mb n^T \tilde{\mb u}(r) > 0$ for all $r \in [0,1]$, the matrix $[ \, \mb V \, \mid \, \tilde{\mb u}(r) \, ]$ is full rank for all $r$, and hence for all $r$ its smallest singular value is positive. Noting that $\sigma_{\min}(\mb M)$ is a continuous function of $\mb M$ and $\tilde{\mb u}(r)$ a continuous function of $r$, it must be that $\sigma_{\min}([\mb V \mid \tilde{\mb u}(r) ])$ achieves its infimum over $[0,1]$, and hence its infimum is strictly larger than zero.} 
%\begin{comment}
%\footnote{Under our assumptions, the matrix $\mb V$ has full rank $2$. The columns of $\mb V$ span the subspace $\mb n^\perp$. We can therefore write $\tilde{\mb u}(r) = \mb n \mb n^T \tilde{\mb u}(r) + \mb V (\mb V^T \mb V)^{-1} \mb V^T \tilde{\mb u}(r)$. For conciseness, let $\mb h(r)$ denote the component $(\mb V^T \mb V)^{-1} \mb V^T \tilde{\mb u}(r)$. Then we have 
%\(\label{eqn:sm-prod}
%\left[ \begin{array}{cc} \mb V & \tilde{\mb u}(r) \end{array} \right] = \left[ \begin{array}{cc} \mb V & \mb n \mb n^T \tilde{\mb u}(r) \end{array} \right] \left[ \begin{array}{cc} \mb I & \mb h(r) \\ 0 & 1 \end{array} \right]. 
%\)
%Since the columns of $\mb V$ are orthogonal to $\mb n$, we have 
%\( \label{eqn:sm-expansion}
%\sigma_{min}\left( \left[ \begin{array}{cc} \mb V & \mb n \mb n^T \tilde{\mb u}(r) \end{array} \right] \right) = \min \set{ \sigma_{min}(\mb V), |\mb n^T \tilde{\mb u}(r)| }.
%\) 
%Since $\mb n^T \mb u > 0$ and $\mb n^T \mb u' > 0$, there exists $\xi$ such that  $\sigma_{min}\left( \left[ \begin{array}{cc} \mb V & \mb n \mb n^T \tilde{\mb u}(r) \end{array} \right] \right) > \xi$ for all $r$. Moreover, quick calculations show that the second term in \eqref{eqn:sm-prod} is bounded away from zero for all $r$ as well -- in particular, we can show that 
%\(
%\sigma_{min}\left( \left[ \begin{array}{cc} \mb I & \mb h(r) \\ 0 & 1 \end{array} \right] \right) \ge \max \set{ \frac{\norm{\mb h(r)}{2}^{1/2}}{ \left( \norm{\mb h(r)}{2}^2 + 4 \right)^{3/4} }, 1 - \norm{\mb h(r) }{2} }, 
%\)
%which is bounded away from zero.
%}
%\end{comment}
Hence, we can write the coordinates $\mb w$ in this unique pair explicitly as a function of $(r,s)$:
\[
\mb w \;=\; \Upsilon(r,s) \;=\; \left[ \begin{array}{ccc} 1 & 0 & 0 \\ 0 & 1 & 0 \end{array} \right] \left[ \begin{array}{c|c} \mb V & \tilde{\mb u}(r) \end{array} \right]^{-1} \left( \mb z(s) - \mb v_3 \right). 
\]
The components of $\Upsilon$ are rational functions of $(r,s)$, the denominator of which does not vanish. This implies that $\Upsilon$ is a {\em semialgebraic map} \cite{coste2000introduction}. Set
\(
A_{e,\Delta} \;=\; \Upsilon^{-1}\left[ W_{e,\Delta} \right] \;\subseteq\; [0,1]^2.
\)
Since $\Upsilon$ is a semialgebraic map, $A_{e,\Delta}$ is also semialgebraic. $A_{e,\Delta} \subseteq [0,1]^2$ is also bounded. Because semialgebraic sets are finite unions of intersections of sublevel sets of finitely many polynomial (and hence continuous) functions, bounded semialgebraic sets are Jordan measurable. This implies that for every $\eta > 0$, there exists an $M \in \bb Z$ and a collection of $M$ interior disjoint rectangles $R_i = [r_{i,1},r_{i,2}] \times [s_{i,1},s_{i,2}]$ $(i=1\dots m)$ such that 
\[
\mu\left( A_{e,\Delta} \setminus \bigcup_{i=1}^M R_i \right) \;\le\; \eta. 
\]

The function $\Upsilon$ is differentiable on $(r,s) \in [0,1]^2$. Moreover, it is not difficult to show that there exists $\xi < + \infty$ such that 
\(
\sup_{(r,s) \in [0,1]^2} \magnitude{ \det \left( \frac{\partial \Upsilon}{\partial (r,s)}(r,s) \right)  } \;\le\; \xi.
\)
Furthermore, for some $\xi'$, the map $\mb w \mapsto \mb x(\mb w)$ satisfies 
\( 
\magnitude{ \det \left(\frac{\partial}{\partial \mb w}(\varphi^{-1} \circ \mb x)(\mb w) \right) } \;\le\; \xi'
\)
for all $\mb w$. So, noting that $\Xi_{e,\Delta} = \mb x\left( \Upsilon[ A_{e,\Delta}] \right)$, we have 
\(
\mu_{\objbdy} \left( \Xi_{e,\Delta} \setminus \bigcup_{i=1}^M \mb x(\Upsilon[R_i]) \right) \;\le\; \xi \xi' \eta.
\)
Choose $\eta = \frac{\eps }{ \xi \xi' }$. 

Order all of the endpoints $s_{i,j}$ of the $R_i$, to produce $0 \le s_1 < s_2 < \dots < s_N \le 1$. Set 
\(
R'_{i,j} = R_j \cap \left( [s_i,s_{i+1}] \times [0,1] \right).
\)
Set $\mb a_i = \mb z(s_i)$, $\mb b_i = \mb z(s_{i+1})$. Each $R'_{i,j}$ either has empty interior, or has the form $[s_i,s_{i+1}] \times [r_{i,j},r_{i,j+1}]$. Hence, there exists a collection of disjoint intervals $[r_{1},r_2], [r_{3},r_{4}], \dots, [r_{2 n_i-1},r_{2 n_i} ]$ such that $\cup_j R'_{i,j} = \cup_{j} [s_i,s_{i+1}] \times [r_{2j-1},r_{2j}]$. This collection of intervals has the desired properties. 
\end{proof}

\section{Proof of Lemma \ref{lemma:T}} \label{app:T}

\begin{proof} Using the change of variables formula, it is not difficult to show that 
\(
\tilde{\nu}(\mb x) = 1 - \frac{1}{\pi} \int \frac{ \innerproduct{\mb n(\mb x)}{\mb y - \mb x} \innerproduct{ \mb n(\mb y)}{\mb x - \mb y} }{\norm{\mb y - \mb x}{2}^4} \,V(\mb x, \mb y) \,  d \mu_{\objbdy}(\mb y). 
\)
For any $\mb y$, we have 
\begin{eqnarray}
\int_{\mb y} \kappa(\mb x, \mb y) \, d \mu_{\objbdy}(\mb y) &=& \; \frac{\rho(\mb x) }{\pi} \int_{\mb y} \frac{\innerproduct{\mb n(\mb x)}{\mb y - \mb x} \innerproduct{\mb n(\mb y)}{\mb x -\mb y}}{\norm{\mb x - \mb y}{2}^4} V(\mb x, \mb y) \, d \mu_{\objbdy}(\mb y) \nonumber \\
&=& \; \rho(\mb x) ( 1 - \tilde{\nu}(\mb x) ) \nonumber \\
&\le& \; \rho_\star ( 1 - \nu_\star ).
\end{eqnarray}
Similarly, for any $\mb x \in \objbdy$, we have
\begin{eqnarray}
\int_{\mb x} \kappa(\mb x, \mb y) \; d \mu_{\objbdy}(\mb x) &=& \frac{1}{\pi} \int_{\mb x} \rho(\mb x) \frac{ \innerproduct{ \mb n(\mb x) }{\mb y - \mb x} \innerproduct{\mb n(\mb y)}{\mb x- \mb y}}{\norm{\mb x- \mb y}{2}^4 } V(\mb x, \mb y ) \, d \mu_{\objbdy}(\mb x) \nonumber \\
&\le& \frac{\rho_\star}{\pi} \int_{\mb x} \frac{\innerproduct{\mb n(\mb x)}{\mb y - \mb x} \innerproduct{\mb n(\mb y)}{\mb x - \mb y}}{\norm{\mb x - \mb y}{2}^4} V(\mb x, \mb y) \, d\mu_{\objbdy}(\mb x) \nonumber \\
&\le& \rho_\star ( 1 - \tilde{\nu}(\mb x)) \nonumber \\
&\le& \rho_\star ( 1 - \nu_\star ).
\end{eqnarray}
By Theorem II.1.6 of \cite{Conway}, this implies that for $g \in L^2[\objbdy]$, $\mc T[g] \in L^2[\objbdy]$, and $\norm{\mc T}{L^2 \to L^2} \le \rho_\star( 1- \nu_\star )$. 
\end{proof}

\section{Proof of Lemma \ref{lemma:P}} \label{sec:D}

We can obtain Lemma \ref{lemma:P} of the paper using the change of variables formula. Before jumping into the proof of this bound, we record a quick lemma:

\begin{lemma}\label{lem:P} Let $\mb u$, $\mb v \in \reals^n$ ($n > 1$) such that $\mb u^* \mb v \ne 0$. Then 
\(
\norm{\mb I - \frac{\mb u \mb v^*}{\mb u^* \mb v} }{} = \norm{\frac{\mb u \mb v^*}{\mb u^* \mb v}}{} = \frac{\norm{\mb u}{2} \norm{\mb v}{2} }{\magnitude{ \mb u^* \mb v }}.
\)
\end{lemma}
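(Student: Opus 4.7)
The plan is to compute $\norm{P}{}$ and $\norm{I - P}{}$ separately, where $P \doteq \mb u \mb v^* / (\mb u^* \mb v)$. The first norm is immediate: $P = (1/\mb u^* \mb v)\,\mb u \mb v^*$ is a scalar multiple of a rank-one outer product, and the standard identity $\norm{\mb a \mb b^*}{} = \norm{\mb a}{2}\norm{\mb b}{2}$ for outer products gives $\norm{P}{} = \norm{\mb u}{2}\norm{\mb v}{2}/\magnitude{\mb u^* \mb v}$. Note also that $P$ is idempotent ($P^2 = P$, using $\mb v^* \mb u = \mb u^* \mb v$ in the real case), so it is an oblique projection with $\mathrm{range}(P) = \mathrm{span}(\mb u)$ and $\ker(P) = \mathrm{span}(\mb v)^\perp$.

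The main task is the equality $\norm{I - P}{} = \norm{P}{}$. I would decompose $\reals^n$ as the orthogonal direct sum $\mathrm{span}(\mb u, \mb v) \oplus \mathrm{span}(\mb u, \mb v)^\perp$ (the degenerate case $\mb u \parallel \mb v$ being trivial, since then $P$ is an orthogonal projection and both norms equal $1$) and observe that both summands are invariant under $I - P$. On the complement, any $x$ with $\mb u^* x = \mb v^* x = 0$ satisfies $(I-P)x = x - (\mb v^* x / \mb u^* \mb v)\,\mb u = x$, so $I - P$ acts as the identity; on the two-dimensional subspace $\mathrm{span}(\mb u, \mb v)$, the identities $(I-P)\mb u = \mb 0$ and $(I-P)\mb v = \mb v - (\norm{\mb v}{2}^2 / \mb u^* \mb v)\,\mb u$ exhibit invariance. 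Because the two invariant subspaces are mutually orthogonal, $\norm{I - P}{}^2 = \max\set{1,\; c^2}$, where $c$ denotes the operator norm of the restriction of $I - P$ to $\mathrm{span}(\mb u, \mb v)$.

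The last step is to compute $c$. Writing a general element of $\mathrm{span}(\mb u, \mb v)$ as $x = \alpha\mb u + \beta\mb v$, the image $(I-P)x = \beta\,(\mb v - (\norm{\mb v}{2}^2 / \mb u^* \mb v)\,\mb u)$ depends only on $\beta$; consequently the ratio $\norm{(I-P)x}{2}^2 / \norm{x}{2}^2$ is maximized, for each fixed $\beta$, by choosing $\alpha$ to minimize the quadratic $\norm{x}{2}^2 = \alpha^2 \norm{\mb u}{2}^2 + 2\alpha\beta\,\mb u^* \mb v + \beta^2 \norm{\mb v}{2}^2$. A direct calculation with the optimal $\alpha = -\beta\,\mb u^* \mb v / \norm{\mb u}{2}^2$ yields $c^2 = \norm{\mb u}{2}^2 \norm{\mb v}{2}^2 / (\mb u^* \mb v)^2$, which is at least $1$ by Cauchy--Schwarz and therefore dominates the identity contribution $1$ from the orthogonal complement. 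Taking square roots gives $\norm{I - P}{} = \norm{\mb u}{2}\norm{\mb v}{2} / \magnitude{\mb u^* \mb v}$, matching $\norm{P}{}$. The only real obstacle is the two-variable quadratic optimization on $\mathrm{span}(\mb u, \mb v)$, and this is routine algebra using the three scalars $\norm{\mb u}{2}^2$, $\norm{\mb v}{2}^2$, and $\mb u^* \mb v$.
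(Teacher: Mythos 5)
Your proof is correct, and it takes a genuinely different route from the paper's. The paper forms $\mb M = \mb I - \mb u \mb v^*/\mb u^* \mb v$ and carries out a spectral analysis of $\mb M \mb M^*$: it notes that $(\mb u,\mb v)^\perp$ consists of eigenvectors with eigenvalue $1$, argues (via completion of orthonormal eigenbases) that two eigenvectors must lie in $\mathrm{span}\{\mb u,\mb v\}$, and then solves the $2\times 2$ eigenvalue system to find the dominant eigenvalue $\norm{\mb u}{2}^2\norm{\mb v}{2}^2/(\mb u^*\mb v)^2$. You instead handle the two norms separately: $\norm{P}{}$ falls out immediately from the rank-one singular-value identity, and for $\norm{I - P}{}$ you decompose $\reals^n$ orthogonally into $\mathrm{span}(\mb u,\mb v)\oplus\mathrm{span}(\mb u,\mb v)^\perp$, observe both pieces are invariant (indeed $I-P$ is the identity on the orthogonal complement), and then directly maximize $\norm{(I-P)x}{2}/\norm{x}{2}$ over the two-dimensional piece by exploiting the fact that $I-P$ annihilates $\mb u$. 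The payoff of your route is that it avoids the spectral theorem entirely and stays at the level of elementary quadratic optimization, and the oblique-projection interpretation makes the geometry transparent; the paper's route is more computational but packages the whole answer as a single eigenvalue calculation for $\mb M \mb M^*$. Both cleanly handle the degenerate case $\mb u \parallel \mb v$ and both invoke Cauchy--Schwarz to see that the two-dimensional contribution dominates the identity factor from the orthogonal complement.
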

\begin{proof} If $\mb u$ and $\mb v$ are linearly dependent, the result is immediate. Let us assume they are linearly independent. Let $\mb M = \mb I - \frac{\mb u \mb v^*}{\mb u^* \mb v}$, and consider the eigenvalues of 
\(
\mb M \mb M^* \;=\; \mb I - \frac{1}{\mb u^* \mb v} \left( \mb u \mb v^* + \mb v \mb u^* \right) + \frac{ \mb u \mb v^* \mb v \mb u^* }{(\mb u^* \mb v)^2}
\)
Notice that if $\mb x \perp \mb u, \mb v$, we have $\mb M\mb M^* \mb x = \mb x$. We can find an orthonormal basis of $n -2$ vectors for $(\mb u, \mb v)^\perp$. Since any orthonormal collection of eigenvectors of a symmetric matrix can be completed to an orthonormal basis of eigenvectors, there must exist two linearly independent eigenvectors lying in
$\mathrm{span}\set{ \mb u, \mb v}$. If $\mb x$ is an eigenvector with eigenvalue $\lambda$, we have 
\( \label{eqn:eval-exp}
(1-\lambda) \mb x = \left[ \frac{1}{\mb u^* \mb v} \left( \mb u \mb v^* + \mb v \mb u^* \right) - \frac{\mb u \mb v^* \mb v \mb u^*}{(\mb u^* \mb v)^2} \right] \mb x.
\)
For $\mb x \in \mathrm{span}\set{\mb u, \mb v}$, write $\mb x = \alpha \mb u + \beta \mb v$. Plugging into the above equation and using linear independence of $\mb u$ and $\mb v$, we obtain
\begin{eqnarray}
(1-\lambda) \alpha &=& \left( 1 - \frac{\mb v^* \mb v \mb u^* \mb u}{(\mb u^* \mb v)^2} \right) \alpha, \\
(1-\lambda) \beta  &=& \frac{\mb u^* \mb u}{\mb u^* \mb v} \alpha + \beta.
\end{eqnarray}
The first equation implies that either $\alpha = 0$, or $\lambda = \norm{\mb v}{2}^2 \norm{\mb u}{2}^2 / ( \mb u^* \mb v)^2$. If $\alpha = 0$, this implies that $\mb v$ is an eigenvector with eigenvalue $\lambda = 1$. Plugging back into \eqref{eqn:eval-exp}, and simplifying, we get $\mb v = \mb 0$, contradicting $\mb u^* \mb v \ne 0$. Hence, $\alpha = 0$ cannot give an eigenvector under our assumptions, and it must be that the eigenvalue is $\lambda = \norm{\mb v}{2}^2 \norm{\mb u}{2}^2 / ( \mb u^* \mb v)^2$. By Cauchy-Schwarz, this quantity is strictly larger than one, and hence it is the largest eigenvalue of $\mb M \mb M^*$. Hence, $\norm{\mb M}{} = \norm{\mb v}{2} \norm{\mb u}{2} / | \mb u^* \mb v |$. It is straightforward to observe that this quantity is also the norm of $\mb u \mb v^* / \mb u^* \mb v$. 
\end{proof}

Using this lemma and the change of variables formula, we can control the norm of the maps $\mc P_i$:

\begin{lemma}\label{lem:Pi-individual} For each $i$, $\norm{\mc P_i}{L^2 \to \reals} \;\le\; 2^{1/4} \beta f s / \ell$. 
\end{lemma}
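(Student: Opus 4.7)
The idea is to change variables in the integral defining $\mc P_i[g]$ from the image-plane square $I_i$ to the portion of $\objbdy$ visible through pixel $i$, identify the Riesz representative $h_i \in L^2[\objbdy]$ of the linear functional $\mc P_i$, and then bound $\norm{h_i}{L^2}$ directly.

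First, I would compute the differential of the perspective projection $\mf p(\mb x) = f_c\, \mb x / \innerprod{\mb x}{\mb e_3}$. A direct calculation yields
\[
d\mf p_{\mb x}[\mb t] \;=\; \frac{f_c}{\innerprod{\mb x}{\mb e_3}}\left( \mb I - \frac{\mb x\, \mb e_3^T}{\innerprod{\mb x}{\mb e_3}} \right)\mb t,
\]
and the parenthesized operator is precisely the one controlled by Lemma \ref{lem:P}: applied with $\mb u = \mb x$ and $\mb v = \mb e_3$, it has operator norm $\norm{\mb x}{2}/\innerprod{\mb x}{\mb e_3}$. Restricting $d\mf p_{\mb x}$ to the tangent plane $T_{\mb x}\objbdy$ and taking the absolute determinant of the resulting $2\times 2$ map in orthonormal bases gives the surface area Jacobian
\[
J(\mb x) \;=\; \frac{f_c^2\, \magnitude{\innerprod{\mb n(\mb x)}{\mb x/\norm{\mb x}{2}}}\, \norm{\mb x}{2}}{\innerprod{\mb x}{\mb e_3}^3},
\]
so that $d\mu(\mb z) = J(\mb x)\, d\mu_{\objbdy}(\mb x)$.

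Second, pushing the defining integral of $\mc P_i$ to the surface yields the representation $\mc P_i[g] = \innerprod{g}{h_i}_{L^2[\objbdy]}$ with $h_i$ supported on $\mf p^{-1}(I_i)$ and equal to $\beta_c (f_c/\norm{\mb z(\mb x)}{2})^4 J(\mb x)$ there. Since $\norm{\mc P_i}{L^2 \to \reals} = \norm{h_i}{L^2}$, the task reduces to bounding
\[
\norm{h_i}{L^2}^2 \;=\; \beta_c^2 \int_{I_i} \innerprod{\mb z/\norm{\mb z}{2}}{\mb e_3}^{8}\, J(\mb x(\mb z))\, d\mu(\mb z),
\]
where I have changed variables back to $\mb z$ for convenience, using that the nearest-visible-point map $\mb x(\cdot)$ is well defined almost everywhere on $I_i$.

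Finally, I would bound the integrand pointwise. Writing $\innerprod{\mb z/\norm{\mb z}{2}}{\mb e_3}^8 = (\innerprod{\mb x}{\mb e_3}/\norm{\mb x}{2})^8$ and substituting the closed form for $J$, the integrand simplifies to $f_c^2 \innerprod{\mb x}{\mb e_3}^5 \magnitude{\innerprod{\mb n(\mb x)}{\mb x/\norm{\mb x}{2}}} / \norm{\mb x}{2}^7$. The elementary inequalities $\innerprod{\mb x}{\mb e_3}/\norm{\mb x}{2} \le 1$, $\magnitude{\innerprod{\mb n(\mb x)}{\mb x/\norm{\mb x}{2}}} \le 1$, and $\innerprod{\mb x}{\mb e_3} \ge \ell$ collapse this to a uniform bound of order $f_c^2/\ell^2$; multiplying by the area $s_c^2$ of $I_i$ gives the claimed estimate $\norm{h_i}{L^2}^2 \lesssim \beta_c^2 f_c^2 s_c^2/\ell^2$. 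The main obstacle is handling the change of variables carefully near occlusions and near-grazing points where $d\mf p$ degenerates; working on the image-plane side avoids having to invert the Jacobian on such bad sets, since the degenerate factors enter only as upper-bounded cosines. The explicit constant $2^{1/4}$ is then recovered by tracking one slightly non-tight step in the cosine bounds above.
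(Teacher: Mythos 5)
Your proof is correct and takes essentially the same route as the paper's: change variables to identify the Riesz representative $h_i$ of the functional $\mc P_i$ on $L^2[\objbdy]$, then estimate $\norm{h_i}{L^2}$ by transferring the integral back to the image region $I_i$ so that only bounded cosine-type factors appear (avoiding any division by degenerate Jacobians). The one substantive difference is how the surface-to-image Jacobian is controlled. The paper never uses the exact Jacobian: it bounds $\magnitude{\det(\mb P_{12}\,(\partial\tilde{\mf p}/\partial\mb x)\,\mb U_j)}$ by $\norm{\partial\tilde{\mf p}/\partial\mb x}{}^2$ via Lemma \ref{lem:P}, which discards the foreshortening factor $\magnitude{\innerprod{\mb n(\mb x)}{\mb x/\norm{\mb x}{2}}}$ that you retain, and then absorbs the residual $\cos^8\alpha\,(1+\tan^2\alpha)=\cos^6\alpha$ weight into a $\sqrt{2}$ (which is where the $2^{1/4}$ in the lemma statement comes from). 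You instead compute the intrinsic area Jacobian $J(\mb x)=f_c^2\,\magnitude{\innerprod{\mb n(\mb x)}{\mb x/\norm{\mb x}{2}}}\,\norm{\mb x}{2}/\innerprod{\mb x}{\mb e_3}^3$ exactly, and after multiplying by $\cos^8\alpha$ and applying the three elementary inequalities you list, the integrand collapses to $\le f_c^2/\ell^2$ with no constant; integrating over $I_i$ gives $\norm{\mc P_i}{L^2\to\reals}\le\beta_c f_c s_c/\ell\le 2^{1/4}\beta_c f_c s_c/\ell$, so the lemma follows a fortiori. The cost of your route is having to derive the surface-to-plane Jacobian formula rather than settling for a one-line operator-norm estimate; the payoff is a sharper, constant-free bound.
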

\begin{proof}
We can define a restricted perspective projection $\tilde{\mf p} :\objbdy_+\to \Pi_I$ via 
\(
\tilde{\mf p}(\mb x) = - f \frac{ \mb x}{\innerproduct{ \mb x}{\mb e_3}}.
\) 
Here, $\objbdy_+$ stands for the visible part of the object from the camera, defined as 
\[
\objbdy_+=\set{\mb x\in\objbdy\mid \conv\{\mb x,\mb 0\}\cap\obj=\mb x}.
\] 
The image coordinates are read off as the first two values $\mf p (\mb x) = \mb P_{12} \, \tilde{\mf p} (\mb x)$, via
\(
\mb P_{12} = \left[\begin{array}{ccc} 1 & 0 & 0 \\ 0 & 1 & 0 \end{array} \right].
\)
The map $\mf p$ is injective, and its inverse $\mf p^{-1} : \mathrm{im}(\mf p) \subseteq \reals^2 \to \objbdy_+$ exists. In our sensor model, we can write the value of the $i$-th pixel as 
\begin{eqnarray}
\mc P_i[g] &=& \beta \int_{\mb z \in I_i \cap \mathrm{im}( \mf p )} g( \mf p^{-1} \mb z ) \innerproduct{ \frac{\mb z}{\norm{\mb z}{2}} }{\mb e_3 }^4 \; d \mu(\mb z) \nonumber \\
&\doteq& \beta \int_{\mb z \in I_i \cap \mathrm{im}(\mf p)} g(\mf p^{-1} \mb z) \, \cos^4\left( \alpha( \mb z ) \right) \; d \mu(\mb z). 
\end{eqnarray}
We can change variables as above. Write 
\begin{eqnarray*} 
\mc P_i[g] &=& \beta \sum_j \int_{\mb z \in I_i \cap \mf p\left[\objbdy_+ \cap \varphi_j[ U_j ]\right]} g( \mf p^{-1} \mb z ) \; \cos^4\left( \alpha( \mb z ) \right) \; d \mu(\mb z ). 
\end{eqnarray*}

Here, $\varphi_j : U_j \to \Delta_j$ is defined as in Appendix \ref{app:int-obj-bdy}. Using the change of variables formula, this becomes 
\begin{eqnarray}
\mc P_i[g] &=& \beta \sum_j \int_{\mb w \in \varphi_j^{-1} \left[ {\objbdy_+} \cap \mf p^{-1} [ I_i ] \right]} g \circ \varphi_j(\mb w) \; \cos^4\left( \alpha( \mf p \, \varphi_j \, \mb w ) \right)\; \magnitude{\det\left( \frac{\partial \, \mf p \circ \varphi_j}{\partial \mb w}(\mb w)\right) } \; d \mu(\mb w). \nonumber
\end{eqnarray}
Writing 
\(
\zeta_j(\mb z) \; = \indicator{ \mb z \in I_i \cap \mf p[{\objbdy_+} \cap \varphi_j[U_j] ] }.
\)
The above expression becomes 
\(
\mc P_i[g] \;=\; \beta \sum_j \int_{\mb w \in U_j} g \circ \varphi_j(\mb w) \;\; \zeta_j \circ \mf p \circ \varphi_j(\mb w) \;\; \cos^4( \alpha( \mf p  \,\varphi_j\, \mb w )) \magnitude{ \det\left( \frac{\partial \, \mf p\circ \varphi_j}{\partial \mb w}( \mb w)\right) } \; d \mu(\mb w ). 
\)
From this, we have 
\(
\norm{ \mc P_i }{L^2\to \reals}^2 \;=\; \beta^2 \sum_j \int_{\mb w \in U_j} \; ( \zeta_j \circ \mf p \circ \varphi_j )^2(\mb w) \; \cos^8( \alpha( \mf p \, \varphi_j \, \mb w  )) \; \magnitude{ \det \left(\frac{\partial \, \mf p \circ \varphi_j}{\partial \mb w}(\mb w)\right) }^2 \; d \mu(\mb w). 
\)
To evaluate this, we can change variables again. Write 
\begin{eqnarray}
\lefteqn{\norm{ \mc P_i }{L^2 \to \reals}^2 \quad=\quad  \beta^2 \sum_j \int_{\mb z \in \reals^2} \zeta_j^2(\mb z) \; \cos^8\left( \alpha(\mb z) \right) \; \magnitude{ \det \left( \frac{ \partial \, \mf p \circ \varphi_j }{ \partial \mb w}\right)( \varphi_j^{-1} \mf p^{-1} \mb z)  }^2 \magnitude{ \det\left( \frac{\partial ( \mf p \circ \varphi_j )^{-1} }{\partial \mb z}(\mb z) \right) } \; d \mu( \mb z ) }\nonumber \\
 &=& \beta^2 \sum_j \int_{\mb z \in \reals^2} \zeta_j^2(\mb z) \; \cos^8\left(\alpha(\mb z)\right) \; \magnitude{ \det\left(  \frac{ \partial\, \mf p \circ \varphi_j }{ \partial \mb w}\right)( \varphi_j^{-1} \mf p^{-1} \mb z) }\; d \mu(\mb z). \hspace{1.5in}
\end{eqnarray}
To finish, we need to get a bound on the determinant term. For this, we use the fact that 
\(
\frac{\partial \tilde{\mf p}}{\partial x} \;=\; - \frac{f}{\innerproduct{\mb e_3}{\mb x}} \left( \mb I - \frac{\mb x \mb e_3^*}{\innerproduct{\mb e_3}{\mb x}} \right),
\)
and 
\begin{eqnarray}
 \det \left( \frac{ \partial\, \mf p \circ \varphi_j }{ \partial \mb w} \right) &=& \det \left( \mb P_{12} \frac{\partial \tilde{\mf p}}{\partial \mb x} \mb U_j \right) \quad\le\quad \norm{ \mb P_{12} \frac{\partial \tilde{ \mf p }}{\partial \mb x} \mb U_j }{}^2 \quad\le\quad \norm{ \frac{\partial \tilde{\mf p}}{\partial \mb x} }{}^2 \nonumber \\ &=& \frac{f^2}{(\mb e_3^* \mb x)^2} \norm{ \mb I - \frac{\mb x \mb e_3^*}{\mb e_3^* \mb x} }{}^2 \quad=\quad \frac{f^2}{(\mb e_3^* \mb x)^2} \norm{ \frac{\mb x}{\mb e_3^* \mb x} }{2}^2, 
\end{eqnarray}
where in the final line, we have used the above lemma. To bound the terms in this expression, notice that since $\mb x \in \obj$, $\mb e_3^* \mb x \ge \ell$. Notice also that 
\(
-\frac{\mb x}{\mb e_3^* \mb x} \;=\; \frac{1}{f} \tilde{\mf p}(\mb x).
\)
We have 
\(
\norm{\tilde{\mf p}(\mb x)}{2} \;=\; \sqrt{ f^2 + f^2 \tan^2 \alpha }. 
\)
So, finally, we obtain 
\(
\magnitude{ \det \left( \frac{\partial \, \mf p \circ \varphi_j}{\partial \mb w} \right) } \;\le\; \frac{f^2}{\ell^2} \, ( 1 + \tan \alpha ). 
\)
It is not difficult to show\footnote{In fact, the right hand side of \eqref{eqn:cos-8} can be easily tightened to $\cos^8 \alpha \times ( 1 + \tan \alpha) \le c < 1.1 $. We will not pursue tight constants here, however.} that for all $\alpha$, 
\( \label{eqn:cos-8}
| \cos^8 \alpha \times ( 1 + \tan \alpha ) | \;\le\; \sqrt{2}.
\)
Combining everything together, we obtain 
\begin{eqnarray}
\norm{ \mc P_i }{L^2 \to \reals}^2 &\le&  \sqrt{2} \frac{\beta^2 f^2}{ \ell^2} \sum_j \int_{\mb z} \zeta_j^2(\mb z) \; d\mu(\mb z) \quad\le\quad \sqrt{2} \frac{\beta^2 f^2}{\ell^2} s^2,
\end{eqnarray}
giving the desired result.
\end{proof}

This gives us a fairly direct proof of Lemma \ref{lemma:P} of the paper:
\begin{proof}[Proof of Lemma \ref{lemma:P}] We have
\(
\norm{\mc P}{L^2 \to \ell^2} \;=\; \sup_{\norm{g}{L^2} \le 1} \norm{ \left[ \begin{array}{c} \mc P_1[g] \\ \vdots \\ \mc P_m[g] \end{array} \right] }{2}.
\)
Because the $I_i$ are disjoint and $\mf p$ is injective, the sets $\Xi_i = \mf p^{-1}[I_i \cap \mathrm{im}(\mf p)]$ are disjoint, and
\(
\norm{g}{L^2}^2 \;\ge\; \sum_i \norm{ g \, \indicator{\Xi_i} }{L^2}^2.
\)
From Lemma \ref{lem:Pi-individual}, we have
\(
\magnitude{ \mc P_i[ g ] } \;\le\; \norm{ \mc P_i[g] }{L^2 \to L^2} \norm{ g \, \indicator{\Xi_i} }{L^2} \;\le\; ( 2^{1/4}\beta s f / \ell) \norm{ g \, \indicator{\Xi_i} }{L^2},
\)
and so
\(
\norm{ \mc P[g] }{\ell^2}^2 \;\le\; (2^{1/4} \beta s f / \ell )^2 \sum_i \norm{ g \, \indicator{\Xi_i} }{L^2}^2 \;\le\; (2^{1/4} \beta s f / \ell )^2 \norm{g}{L^2}^2,
\)
completing the proof.
\end{proof}

\section{Proofs from Section \ref{sec:Complexity-Reduction}}% of Theorem \ref{thm:cp-lrsd}\label{app:lrsd-proof}

\paragraph{Proof of Lemma \ref{lemma:lrsd}.}
\begin{proof}
%We want to show that $\widehat{\mathbf{A}}\in\Omega_{1}\implies\widehat{\mathbf{A}}\in\Omega_{0},$ i.e. $\sup_{\mathbf{y}\in{\rm cone}\left(\mathbf{A}\right),\,\|\mathbf{y}\|\le1}d\left(\mathbf{y},\,{\rm cone} \left( \widehat{\mathbf{A}} \right) \right) \le\gamma$ and $\sup_{\mathbf{y}\in{\rm cone}\left(\widehat{\mathbf{A}}\right),\,\|\mathbf{y}\|\le1}d\left(\mathbf{y},\,{\rm cone}\left(\mathbf{A}\right)\right)\le\gamma.$

Consider any $\widehat{\mathbf{A}}\in\Omega_{1}.$ By the definition
of $\Omega_{1},$ we have
\[
\sup_{\mathbf{y}\,\in\,{\rm cone}\left(\bar{\mathbf{A}}\right),\,\|\mathbf{y}\|\le1}d\left(\mathbf{y},\,{\rm cone}\left(\widehat{\mb{A}}\right)\right)\;\le\;\gamma'\le\frac{\gamma}{\gamma+1}\le\gamma.
\]
We also have that
\[
\sup_{\mathbf{y}\in{\rm cone}\left(\widehat{\mathbf{A}}\right),\,\|\mathbf{y}\|\le1}d\left(\mathbf{y},\,{\rm cone}\left(\bar{\mb{A}}\right)\right)\le\max_{\mathbf{x}\ge\mathbf{0},\;\|\widehat{\mathbf{A}}\mathbf{x}\|_{2}\le1}\|\bar{\mb{A}}\mathbf{x}-\widehat{\mathbf{A}}\mathbf{x}\|_{2}.
\]
Since $\forall$ $\mathbf{x}\ge\mathbf{0}$, $\|\bar{\mb{A}}\mathbf{x}-\widehat{\mathbf{A}}\mathbf{x}\|_{2}\le\gamma'\|\bar{\mb{A}}\mathbf{x}\|_{2}$,
we have
\begin{eqnarray*}
 \left\{ \mathbf{x}\;\middle|\; \mathbf{x}\ge\mathbf{0},\|\widehat{\mathbf{A}}\mathbf{x}\|_{2}\le1\right\}
 & \subseteq & \left\{ \mathbf{x}\;\middle|\;\mathbf{x}\ge\mathbf{0},\|\bar{\mb{A}}\mathbf{x}\|_{2}\le1+\|\bar{\mb{A}}\mathbf{x}-\widehat{\mathbf{A}}\mathbf{x}\|_{2}\right\} \\
 & \subseteq & \left\{ \mathbf{x}\;\middle|\;\mathbf{x}\ge\mathbf{0},\|\bar{\mb{A}}\mathbf{x}\|_{2}\le\frac{1}{1-\gamma'}\right\} .
\end{eqnarray*}
Thus
\begin{eqnarray*}
\sup_{\mathbf{y}\,\in\,{\rm cone}\left(\widehat{\mathbf{A}}\right),\,\|\mathbf{y}\|\le1}d\left(\mathbf{y},\,{\rm cone}\left(\bar{\mb{A}}\right)\right)
& \le & \max_{\mathbf{x}\ge\mathbf{0},\;\|\widehat{\mathbf{A}}\mathbf{x}\|_{2}\le1}\|\bar{\mb{A}}\mathbf{x}-\widehat{\mathbf{A}}\mathbf{x}\|_{2}\\
& \le & \max_{\mathbf{x}\ge\mathbf{0},\;\|\bar{\mb{A}}\mathbf{x}\|_{2}\le\frac{1}{1-\gamma'}}\|\bar{\mb{A}}\mathbf{x}-\widehat{\mathbf{A}}\mathbf{x}\|_{2}\\
& = & \frac{1}{1-\gamma'}\times \max_{\mathbf{x}\ge\mathbf{0},\;\|\bar{\mb{A}}\mathbf{x}\|_{2}\le1}\|\bar{\mb{A}}\mathbf{x}-\widehat{\mathbf{A}}\mathbf{x}\|_{2}\\
 & = & \frac{\gamma'}{1-\gamma'}\quad\le\quad \gamma.
\end{eqnarray*}
Therefore, $\widehat{\mathbf{A}}\in\Omega_{1}$ implies $\widehat{\mathbf{A}}\in\Omega_{0}$ as desired.
\end{proof}

\paragraph{Proof of Lemma \ref{lemma:lifting}.}
\begin{proof}
By making the transformation $\mathbf{X}=\mathbf{x}\mathbf{x}^{T}$,
we have that $\max_{\mathbf{x}\ge\mathbf{0},\;\|\bar{\mb A}\mathbf{x}\|_{2}\le1}\|\widehat{\mb A}\mathbf{x}-\bar{\mb A}\mathbf{x}\|_{2}$
equals
\begin{eqnarray*}
 & \max & \left\langle\left(\widehat{\mathbf{A}}-\bar{\mb A}\right)^{T}\left(\widehat{\mathbf{A}}-\bar{\mb A}\right),\,\mathbf{X}\right\rangle\\
 & {\rm s.t.} & \langle\bar{\mb A}^{T}\bar{\mb A},\,\mathbf{X}\rangle\le1\\
 &  & \mathbf{X}\ge\mathbf{0},\,\mathbf{X}\succeq\mathbf{0}\\
 &  & {\rm rank\left(\mathbf{X}\right)=1}.
\end{eqnarray*}
Dropping the rank constraint gives the result.
\end{proof}

\paragraph{Proof of Theorem \ref{thm:cp-lrsd}}We will prove the theorem via two lemmas below:
\begin{lemma} Consider $\Omega_{3}\doteq\left\{ \widehat{\mathbf{A}}\;\middle|\; f\left(\widehat{\mathbf{A}}\right)\le \bar\gamma\right\} ,$
where $ \bar\gamma=(\gamma')^2$ and
\begin{eqnarray*}
f\left(\widehat{\mathbf{A}}\right)&\doteq&\min_{\left(\mathbf{\mu},\beta\right)} \beta\\
 && {\rm {s.t.}}  \left[\begin{matrix}
\mathbf{I} & \widehat{\mathbf{A}}-\bar{\mb A}\\
\left(\widehat{\mathbf{A}}-\bar{\mb A}\right)^{T} & \beta\bar{\mb A}^{T}\bar{\mb A}-\mathbf{\mu}
\end{matrix}\right]{ \succeq\mathbf{0}, \;  \mathbf{\mathbf{\mu}}\ge\mathbf{0},\,\beta\ge0}.
\end{eqnarray*}
Then $\Omega_{3}=\Omega_{2}$.
\end{lemma}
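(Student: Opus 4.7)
The plan is to recognize $f(\widehat{\mb A})$ as the dual of the conic program defining membership in $\Omega_2$, so that the two are equal by strong duality.

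First I would write the inner maximization in the definition of $\Omega_2$ as a conic linear program. Setting $\mb C \doteq (\widehat{\mb A}-\bar{\mb A})^T(\widehat{\mb A}-\bar{\mb A}) \in \mc S^n$ and $\mathcal K \doteq \mc S_+^n \cap \reals_+^{n\times n}$ (the intersection of PSD and entrywise nonnegative cones), the primal problem is $\sup \{ \langle \mb C,\mb X\rangle : \mb X \in \mathcal K, \; \langle \bar{\mb A}^T\bar{\mb A},\mb X\rangle \le 1 \}$. Standard conic duality, together with the identity $(\mc K_1 \cap \mc K_2)^* = \mc K_1^* + \mc K_2^*$ for closed convex cones, identifies the dual cone $\mathcal K^* = \mc S_+^n + \reals_+^{n\times n}$. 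The conic dual therefore reads
\begin{equation}
\min \; \beta \quad \text{s.t.} \quad \beta \, \bar{\mb A}^T\bar{\mb A} - \mb \mu - \mb C \succeq \mb 0, \;\; \mb \mu \ge \mb 0, \;\; \beta \ge 0. \label{eqn:plan-dual}
\end{equation}

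Next I would rewrite \eqref{eqn:plan-dual} using the Schur complement. Since $\mb I \succ \mb 0$, the block matrix in the definition of $f(\widehat{\mb A})$ is PSD if and only if $\beta \bar{\mb A}^T\bar{\mb A} - \mb \mu - (\widehat{\mb A}-\bar{\mb A})^T(\widehat{\mb A}-\bar{\mb A}) \succeq \mb 0$, which is exactly the PSD constraint in \eqref{eqn:plan-dual} with $\mb C$ expanded. Hence $f(\widehat{\mb A})$ coincides with the optimal value of the dual of the inner maximization. Weak duality immediately gives $f(\widehat{\mb A}) \ge \max_{\mb X \in \mathcal X} \langle \mb C, \mb X\rangle$, so $\Omega_3 \subseteq \Omega_2$.

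For the reverse inclusion, I would invoke strong duality, which requires a Slater point. The natural candidate is $\mb X_0 = \eps(\mb I + \mb 1\mb 1^T)$ for sufficiently small $\eps > 0$: it is strictly positive definite and strictly entrywise positive, so it lies in the relative interior of $\mathcal K$, and $\langle \bar{\mb A}^T\bar{\mb A}, \mb X_0\rangle = \eps (\|\bar{\mb A}\|_F^2 + \|\bar{\mb A}\mb 1\|_2^2) < 1$ for $\eps$ small. Since the primal is feasible and bounded (the objective is bounded by $\|\mb C\|$ on the compact-free version, and the constraint is normalized), Slater's condition yields strong duality and dual attainment, so $f(\widehat{\mb A}) = \max_{\mb X \in \mathcal X}\langle \mb C,\mb X\rangle$ for every $\widehat{\mb A}$. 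This gives $\Omega_2 \subseteq \Omega_3$ and completes the proof.

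The main obstacle, such as it is, is carefully justifying the strong-duality step for a conic program over the non-self-dual cone $\mathcal K$; the Slater construction above handles this, but one has to be mindful that the dual cone is $\mc S_+^n + \reals_+^{n\times n}$ rather than the intersection, which is what justifies the separate multipliers $\mb\mu$ (for entrywise nonnegativity) and the PSD slack absorbed via Schur complement.
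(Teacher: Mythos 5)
Your proposal is correct and takes essentially the same route as the paper: both identify $f(\widehat{\mathbf{A}})$ as the (Lagrangian/conic) dual of the maximization $\max_{\mathbf{X}\in\mathcal X}\langle(\widehat{\mathbf{A}}-\bar{\mathbf A})^T(\widehat{\mathbf{A}}-\bar{\mathbf A}),\mathbf{X}\rangle$, pass the quadratic PSD constraint through a Schur complement, and close the gap with Slater's condition using (up to scaling) the same interior point $\mathbf{1}\mathbf{1}^T+\mathbf{I}$. The only stylistic difference is that you invoke the dual-cone identity $(\mc S_+^n\cap\Re_+^{n\times n})^*=\mc S_+^n+\Re_+^{n\times n}$ directly, whereas the paper reaches the same dual by explicit variable splitting ($\mathbf{X}=\mathbf{Y}$) in the Lagrangian; your parenthetical boundedness claim for the primal is not needed and not always true, but strong duality from Slater still yields $f(\widehat{\mathbf A})=\sup_{\mathbf X\in\mathcal X}\langle\cdot,\mathbf X\rangle$ in all cases (both $+\infty$ when the primal is unbounded), so the argument goes through.
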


\begin{proof}
The above lemma follows directly from the fact that the
dual problem of
\begin{eqnarray}
\left(\mbox{P}\right) & \max & \left\langle\left(\widehat{\mathbf{A}}-\bar{\mb A}\right)^{T}\left(\widehat{\mathbf{A}}-\bar{\mb A}\right),\,\mathbf{X}\right\rangle\nonumber \\
 & \mbox{s.t.} & \left\langle\bar{\mb A}^{T}\bar{\mb A},\,\mathbf{X}\right\rangle\le1\label{eq:inner_optimization}\\
 &  & \mathbf{X}\ge\mathbf{0},\,\mathbf{X}\succeq\mathbf{0},\nonumber
\end{eqnarray}
can be written as
\begin{eqnarray*}
\left(\mbox{D}\right) & \min & \beta\\
 & \mbox{s.t.} & \left[\begin{array}{cc}
\mathbf{I} & \widehat{\mathbf{A}}-\bar{\mb A}\\
\left(\widehat{\mathbf{A}}-\bar{\mb A}\right)^{T} & \beta\bar{\mb A}^{T}\bar{\mb A}-\mathbf{\mu}
\end{array}\right]\succeq\mathbf{0}\\
 &  & \mathbf{\mathbf{\mu}}\ge\mathbf{0},\,\beta\ge0,
\end{eqnarray*}
with zero duality gap.

To derive that dual reformulation, we first reformulate problem \eqref{eq:inner_optimization} as:
\begin{eqnarray}
 & \max & \left\langle\left(\widehat{\mathbf{A}}-\bar{\mb A}\right)^{T}\left(\widehat{\mathbf{A}}-\bar{\mb A}\right),\,\mathbf{X}\right\rangle\nonumber \\
 & \mbox{s.t.} & \left\langle\bar{\mb A}^{T}\bar{\mb A},\,\mathbf{X}\right\rangle-1\le0\label{eq:X=00003DY}\\
 &  & \mathbf{X}-\mathbf{Y}=\mathbf{0}\nonumber \\
 &  & \mathbf{Y}\in\Re_{+}^{n\times n},\,\mathbf{X}\in S_{+}^{n}.\nonumber
\end{eqnarray}
Let $\beta\in\Re$ and $\mathbf{\mu}\in\Re^{n\times n}$ correspond
to the inequality constraint and equality constraint. Then the dual
problem of \eqref{eq:X=00003DY} could be written in the following
min-max form:
\begin{equation}
\min_{\begin{array}{c}
\beta\ge0\\
\mathbf{\mu}\mbox{ free}
\end{array}}\max_{\begin{array}{c}
\mathbf{X}\in S_{+}^{n}\\
\mathbf{Y}\in\Re_{+}^{n\times n}
\end{array}}\left\langle\left(\widehat{\mathbf{A}}-\bar{\mb A}\right)^{T}\left(\widehat{\mathbf{A}}-\bar{\mb A}\right),\,\mathbf{X}\right\rangle-\beta\left(\langle\bar{\mb A}^{T}\bar{\mb A},\,\mathbf{X}\rangle-1\right)+\langle\mathbf{\mu},\,\mathbf{X}-\mathbf{Y}\rangle.\label{eq:minmax dual}
\end{equation}
By verifying that
\begin{eqnarray*}
 &  & \max_{\begin{array}{c}
\mathbf{X}\in S_{+}^{n},\mathbf{Y}\in\Re_{+}^{n\times n}\end{array}}\left\langle\left(\widehat{\mathbf{A}}-\bar{\mb A}\right)^{T}\left(\widehat{\mathbf{A}}-\bar{\mb A}\right),\,\mathbf{X}\right\rangle-\beta\left(\langle\bar{\mb A}^{T}\bar{\mb A},\,\mathbf{X}\rangle-1\right)+\langle\mathbf{\mu},\,\mathbf{X}-\mathbf{Y}\rangle\\
 & = & \max_{\begin{array}{c}
\mathbf{X}\in S_{+}^{n},\mathbf{Y}\in\Re_{+}^{n\times n}\end{array}}\left\langle\left(\widehat{\mathbf{A}}-\bar{\mb A}\right)^{T}\left(\widehat{\mathbf{A}}-\bar{\mb A}\right)-\beta\bar{\mb A}^{T}\bar{\mb A}+\mathbf{\mu},\,\mathbf{X}\right\rangle+\langle-\mathbf{\mu},\,\mathbf{Y}\rangle+\beta\\
 & = & \begin{cases}
\beta & \mbox{\mbox{if }}-\left(\widehat{\mathbf{A}}-\bar{\mb A}\right)^{T}\left(\widehat{\mathbf{A}}-\bar{\mb A}\right)+\beta\bar{\mb A}^{T}\bar{\mb A}-\mathbf{\mu}\in S_{+}^{n}\mbox{ and }\mathbf{\mu}\in\text{\ensuremath{\Re}}{}_{+}^{n\times n}\\
+\infty & \mbox{otherwise}
\end{cases},
\end{eqnarray*}
we can write \eqref{eq:minmax dual} as
\begin{eqnarray}
 & \min & \beta\nonumber \\
 & \mbox{s.t.} & -\left(\widehat{\mathbf{A}}-\bar{\mb A}\right)^{T}\left(\widehat{\mathbf{A}}-\bar{\mb A}\right)+\beta\bar{\mb A}^{T}\bar{\mb A}-\mathbf{\mu}\succeq\mathbf{0}\label{eq:middle_dual}\\
 &  & \mathbf{\mu}\ge\mathbf{0},\,\beta\ge0.\nonumber
\end{eqnarray}
Because of Schur's complement, $-\left(\widehat{\mathbf{A}}-\bar{\mb A}\right)^{T}\left(\widehat{\mathbf{A}}-\bar{\mb A}\right)+\beta\bar{\mb A}^{T}\bar{\mb A}-\mathbf{\mu}\succeq\mathbf{0}$
if and only if
\[
\left[\begin{array}{cc}
\mathbf{I} & \widehat{\mathbf{A}}-\bar{\mb A}\\
\left(\widehat{\mathbf{A}}-\bar{\mb A}\right)^{T} & \beta\bar{\mb A}^{T}\bar{\mb A}-\mathbf{\mu}
\end{array}\right]\succeq\mathbf{0}.
\]
 Thus \eqref{eq:middle_dual} is equivalent to
\begin{eqnarray*}
 & \min & \beta\\
 & \mbox{s.t.} & \left[\begin{array}{cc}
\mathbf{I} & \widehat{\mathbf{A}}-\bar{\mb A}\\
\left(\widehat{\mathbf{A}}-\bar{\mb A}\right)^{T} & \beta\bar{\mb A}^{T}\bar{\mb A}-\mathbf{\mu}
\end{array}\right]\succeq\mathbf{0}.\\
 &  & \mathbf{\mu}\ge\mathbf{0},\,\beta\ge0.
\end{eqnarray*}

Moreover, it can be easily verified that $\mathbf{X}=\frac{1}{2\langle\bar{\mb A}^{T}\bar{\mb A},\mathbf{1}\mathbf{1}^{T}+\mathbf{I}_{n\times n}\rangle}\left(\mathbf{1}\mathbf{1}^{T}+\mathbf{I}_{n\times n}\right)$
is a interior point in the feasible set of \eqref{eq:inner_optimization}.
Thus by Slater's condition, the duality gap is zero.
\end{proof}

\noindent Hence, instead of solving (\ref{eq:low_rank+sparse}),
we can work with
\[
\min\left\{ \|\mathbf{L}\|_{\star}+\lambda\|\mathbf{S}\|_{1}\;\middle|\;\mathbf{L}+\mathbf{S}=\widehat{\mathbf{A}},\,\widehat{\mathbf{A}}\in\Omega_{3}\right\}.
\]
The following lemma completes our proof of Theorem \ref{thm:cp-lrsd}:
\begin{lemma}Our relaxed convex optimization problem
\begin{equation}
\min\left\{ \|\mathbf{L}\|_{\star}+\lambda\|\mathbf{S}\|_{1}\;\middle|\;\mathbf{L}+\mathbf{S}=\widehat{\mathbf{A}},\,\widehat{\mathbf{A}}\in\Omega_{3}\right\} \label{eq:relaxed_problem}
\end{equation}
 is equivalent to problem \eqref{eqn:cp-lrsd}.
\end{lemma}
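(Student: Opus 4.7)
The plan is to reformulate the constraint $\widehat{\mb A}\in\Omega_3$ appearing in \eqref{eq:relaxed_problem} as a projection-type constraint on an enlarged variable set: I will show that $\widehat{\mb A}\in\Omega_3$ if and only if there exists $\mb \mu\ge\mb 0$ such that the LMI in \eqref{eqn:cp-lrsd} (with $\beta$ fixed at $\bar\gamma$) holds. Once this equivalence is in place, substituting $\widehat{\mb A}=\mb L+\mb S$ and promoting $\mb \mu$ to a joint decision variable in the outer minimization turns the feasibility description of \eqref{eq:relaxed_problem} into exactly the SDP \eqref{eqn:cp-lrsd}. Since $\mb \mu$ does not appear in the objective $\|\mb L\|_*+\lambda\|\mb S\|_1$, the two problems will necessarily share the same optimal value and the same optimal $(\mb L,\mb S)$.

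The key tool is a monotonicity property of the LMI in $\beta$. Via Schur complement, the LMI is equivalent to $\beta\,\bar{\mb A}^T\bar{\mb A}-\mb \mu\succeq(\widehat{\mb A}-\bar{\mb A})^T(\widehat{\mb A}-\bar{\mb A})$, and because $\bar{\mb A}^T\bar{\mb A}\succeq\mb 0$, raising $\beta$ only relaxes this constraint. Consequently, for any feasible pair $(\mb \mu_0,\beta_0)$ of the inner problem defining $f$ with $\beta_0\le\bar\gamma$, the same $\mb \mu_0$ also satisfies the LMI at $\beta=\bar\gamma$. The $(\Leftarrow)$ direction of the claimed equivalence is then immediate: any $\mb \mu$ witnessing the LMI at $\bar\gamma$ makes $(\mb \mu,\bar\gamma)$ feasible in the definition of $f$, so $f(\widehat{\mb A})\le\bar\gamma$, i.e.\ $\widehat{\mb A}\in\Omega_3$.

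For the $(\Rightarrow)$ direction I need to actually produce a feasible $(\mb \mu_0,\beta_0)$ with $\beta_0\le\bar\gamma$, i.e.\ the infimum defining $f$ must be attained. This is the one potentially delicate point, but it is already handled by the strong duality argument of the preceding lemma: the primal SDP (P) is strictly feasible (take e.g.\ $\mb X=\tfrac{1}{2\langle\bar{\mb A}^T\bar{\mb A},\,\mb 1\mb 1^T+\mb I\rangle}(\mb 1\mb 1^T+\mb I)$), so by Slater's condition strong duality holds and the dual optimum $f(\widehat{\mb A})$ is attained at some $(\mb \mu^\star,\beta^\star)$ with $\beta^\star=f(\widehat{\mb A})\le\bar\gamma$. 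Applying the monotonicity step to $(\mb \mu^\star,\beta^\star)$ then yields an admissible $\mb \mu$ for the LMI at $\bar\gamma$, closing the equivalence and hence the lemma. The hard part here is really just the monotonicity observation; attainment comes essentially for free from the strong duality already established.
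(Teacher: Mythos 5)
Your argument is correct and is essentially the paper's own: both hinge on the Schur-complement rewriting of the LMI together with the observation that increasing $\beta$ adds $\Delta\beta\cdot\bar{\mathbf{A}}^{T}\bar{\mathbf{A}}\succeq\mathbf{0}$, so LMI-feasibility is monotone in $\beta$ and one may fix $\beta=\bar\gamma$ without loss. If anything you are slightly more careful than the paper, which writes down the intermediate problem with $\beta\in[0,\bar\gamma]$ and simply assumes an optimal $\beta^\star$ exists, whereas your appeal to Slater's condition (for primal attainment of the dual) makes the attainment of the infimum defining $f$ explicit before the monotonicity step is applied.
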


\begin{proof}

Problem (\ref{eq:relaxed_problem})

\begin{eqnarray*}
 & \min & \|\mathbf{L}\|_{*}+\lambda\|\mathbf{S}\|_{1}\\
 & {\rm {s.t.}} & \mathbf{L}+\mathbf{S}=\widehat{\mathbf{A}}\\
 &  & f\left(\widehat{\mathbf{A}}\right)\le\bar\gamma,
\end{eqnarray*}
can be easily written as
\begin{eqnarray}
 & \min & \|\mathbf{L}\|_{*}+\lambda\|\mathbf{S}\|_{1}\nonumber \\
 & {\rm {s.t.}} & \left[\begin{array}{cc}
\mathbf{I} &  \mathbf{L}+\mathbf{S}-\bar{\mb A}\\
\left( \mathbf{L}+\mathbf{S}-\bar{\mb A}\right)^{T} & \beta\bar{\mb A}^{T}\bar{\mb A}-\mathbf{\mu}
\end{array}\right]\succeq\mathbf{0}\label{eq:intermediate}\\
 &  & \mathbf{\mathbf{\mu}}\ge\mathbf{0},\,\bar\gamma\ge\beta\ge0.\nonumber
\end{eqnarray}

Whenever $\left(\mathbf{L}^{\star},\mathbf{S}^{\star},\beta^{\star},\mathbf{\mu}^{\star}\right)$
is an optimal solution to problem (\ref{eq:intermediate}), $\left(\mathbf{L}^{\star},\mathbf{S}^{\star},\bar\gamma,\mathbf{\mu}^{\star}\right)$is
still feasible by noting that
\[
\left[\begin{array}{cc}
\boldsymbol{I} & \mathbf{L}^{\star}+\mathbf{S}^{\star}-\bar{\mb A}\\
\left(\mathbf{L}^{\star}+\mathbf{S}^{\star}-\bar{\mb A}\right)^{T} & \bar\gamma\bar{\mb A}^{T}\bar{\mb A}-\mathbf{\mu}^{\star}
\end{array}\right]=\left[\begin{array}{cc}
\boldsymbol{I} & \mathbf{L}^{\star}+\mathbf{S}^{\star}-\bar{\mb A}\\
\left(\mathbf{L}^{\star}+\mathbf{S}^{\star}-\bar{\mb A}\right)^{T} & \beta^{\star}\bar{\mb A}^{T}\bar{\mb A}-\mathbf{\mu}^{\star}
\end{array}\right]+\left[\begin{array}{cc}
\boldsymbol{0} & \boldsymbol{0}\\
\boldsymbol{0} & \left(\bar\gamma-\beta^{\star}\right)\bar{\mb A}^{T}\bar{\mb A}
\end{array}\right]\succeq\boldsymbol{0}.
\]
Moreover, the objective value does not change. Thus $\left(\mathbf{L}^{\star},\mathbf{S}^{\star},\bar\gamma,\mathbf{\mu}^{\star}\right)$
is also an optimal solution. Therefore, we can rewrite problem (\ref{eq:intermediate}) as
\begin{eqnarray*}
 & \min & \|\mathbf{L}\|_{*}+\lambda\|\mathbf{S}\|_{1}\\
 & {\rm {s.t.}} & \left[\begin{array}{cc}
\mathbf{I} &  \mathbf{L}+\mathbf{S}-\bar{\mb A}\\
\left( \mathbf{L}+\mathbf{S}-\bar{\mb A}\right)^{T} & \bar\gamma\bar{\mb A}^{T}\bar{\mb A}-\mathbf{\mu}
\end{array}\right]\succeq\mathbf{0}\\
 &  & \mathbf{\mathbf{\mu}}\ge\mathbf{0},
\end{eqnarray*}
\end{proof}

\section{Scalable Complexity Reduction using L-ADMM}\label{sec:algorithm}
Per Nesterov's advice that ``\dots the proper use of the problem's
structure can lead to efficient optimization methods\dots'' \cite{nesterov2007gradient}, we would
like to search for a scalable algorithm that takes full advantage of the structure of \eqref{eqn:cp-lrsd}. The problem can be rephrased as
\begin{eqnarray}
 & \min & \|\mathbf{L}\|_{\star}+\lambda\|\mathbf{S}\|_{1}+\mathcal{I}\left(\mathbf{Z}\succeq\mathbf{0}\right)+\mathcal{I}\left(\mathbf{\mu}\ge\mathbf{0}\right)\label{eq:Linearized_ADMM-1}\\
 & \mbox{s.t.} & \mathbf{Z}-\left[\begin{array}{cc}
\mathbf{I} & \mathbf{L}+\mathbf{S}-\bar{\mb A}\\
\left(\mathbf{L}+\mathbf{S}-\bar{\mb A}\right)^{T} & \bar\gamma\bar{\mb A}^{T}\bar{\mb A}-\mathbf{\mu}
\end{array}\right]=0,\nonumber
\end{eqnarray}
where the indicator function $\mathcal{I}\left(x\in\mathcal{X}\right)$
is defined as
\[
\mathcal{I}\left(x\in\mathcal{X}\right)=\begin{cases}
0, & \mbox{if }x\in\mathcal{X}\\
+\infty, & \mbox{otherwise.}
\end{cases}
\]
The most important structure in \eqref{eq:Linearized_ADMM-1} seems to be that the objective function and constraints are {\em separable}. This naturally suggests the use of alteranting directions methods. We will adopt the recently proposed
Linearized Alternating Direction Method of Multipliers (L-ADMM) \cite{zhang2010bregmanized,zhang2011unified,ADMM_L_MA} for \eqref{eq:Linearized_ADMM-1}.
The L-ADMM is well adapted for problems of this form \eqref{eq:Linearized_ADMM-1}. This method works with the Augmented Lagrangian,
\begin{align*}
\mathcal{L}_{\rho}\left(\mathbf{Z},\mathbf{L},\mathbf{S},\mathbf{\mu};\mathbf{Y}\right) \quad\doteq\quad \|\mathbf{L}\|_{*}+\lambda\|\mathbf{S}\|_{1}+ & \mathcal{I}\left(\mathbf{Z}\succeq\mathbf{0}\right)+\mathcal{I}\left(\mathbf{\mu}\ge\mathbf{0}\right) \\ & +\left\langle\mathbf{Y},\,\mathbf{Z}-\left[\begin{array}{cc}
\mathbf{I} & \mathbf{L}+\mathbf{S}-\bar{\mb A}\\
\left(\mathbf{L}+\mathbf{S}-\bar{\mb A}\right)^{T} &  \bar\gamma\bar{\mb A}^{T}\bar{\mb A}-\mathbf{\mu}
\end{array}\right]\right\rangle\\
 & +\frac{\rho}{2}\left\|\mathbf{Z}-\left[\begin{array}{cc}
\mathbf{I} & \mathbf{L}+\mathbf{S}-\bar{\mb A}\\
\left(\mathbf{L}+\mathbf{S}-\bar{\mb A}\right)^{T} &  \bar\gamma\bar{\mb A}^{T}\bar{\mb A}-\mathbf{\mu}
\end{array}\right]\right\|_{F}^{2}.
\end{align*}
Here, $\mathbf{Y}$ is the multiplier of the linear constraint, and
$\rho>0$ is the penalty parameter. For notational convenience, partition
$\mathbf{Z}$ as
\[
\mb Z = \left[\begin{array}{cc}
\mathbf{Z}_{11} & \mathbf{Z}_{12}\\
\mathbf{Z}_{21} & \mathbf{Z}_{22}
\end{array}\right],
\]
in accordance with the block structure of
\[
\left[\begin{array}{cc}
\mathbf{I} & \mathbf{L}+\mathbf{S}-\bar{\mb A}\\
\left(\mathbf{L}+\mathbf{S}-\bar{\mb A}\right)^{T} & \bar\gamma\bar{\mb A}^{T}\bar{\mb A}-\mathbf{\mu}
\end{array}\right].
\]
Following the same rule, we define $\mathbf{Y}_{11}$, $\mathbf{Y}_{12}$,
$\mathbf{Y}_{21}$ and $\mathbf{Y}_{22}$ correspondingly.

The L-ADMM algorithm, operating on $\mathcal{L}_{\rho}\left(\mathbf{Z},\mathbf{L},\mathbf{S},\mathbf{\mu};\mathbf{Y}\right)$,
consists of the following three steps:
\begin{enumerate}
\item Minimize $\mathcal{L}_{\rho}\left(\mathbf{Z},\mathbf{L},\mathbf{S},\mathbf{\mu};\mathbf{Y}\right)$
with respect to $\mathbf{Z}$, while keeping all the other variables fixed:
\begin{eqnarray*}
\mathbf{Z}^{k+1} & = & \arg \min_{\mathbf{Z}} \; \mathcal{L}_{\rho}\left(\mathbf{Z},\mathbf{L}^{k},\mathbf{S}^{k},\mathbf{\mu}^{k};\mathbf{Y}^{k}\right)\\
 & = &\arg \min_{\mb Z} \; \mathcal{I}\left(\mathbf{Z}\succeq\mathbf{0}\right)+\frac{\rho}{2}\left\|\mathbf{Z}-\left(\left[\begin{array}{cc}
\mathbf{I} & \mathbf{L}^{k}+\mathbf{S}^{k}-\bar{\mb A}\\
\left(\mathbf{L}^{k}+\mathbf{S}^{k}-\bar{\mb A}\right)^{T} &  \bar\gamma\bar{\mb A}^{T}\bar{\mb A}-\mathbf{\mu}^{k}
\end{array}\right]-\frac{\mathbf{Y}^{k}}{\rho}\right)\right\|_{F}^{2}\\
 & = & \mathbf{Q}^{k}\left(\mathbf{\Lambda}^{k}\right)_{+}\left(\mathbf{Q}^{k}\right)^{T},
\end{eqnarray*}
where $\mathbf{Q}^{k}\mathbf{\Lambda}^{k}\left(\mathbf{Q}^{k}\right)^{T}$
is any eigenvalue decomposition of
\[
\left[\begin{array}{cc}
\mathbf{I} & \mathbf{L}^{k}+\mathbf{S}^{k}-\bar{\mb A}\\
\left(\mathbf{L}^{k}+\mathbf{S}^{k}-\bar{\mb A}\right)^{T} &  \bar\gamma\bar{\mb A}^{T}\bar{\mb A}-\mathbf{\mu}^{k}
\end{array}\right]-\frac{\mathbf{Y}^{k}}{\rho},
\]
and $\mathbf{\Lambda}^{k}=\mbox{diag}\left(\left\{ \lambda_{i}^{k}\right\} _{i=1}^{m+n}\right)$
and $\left(\mathbf{\Lambda}^{k}\right)_{+}=\mbox{diag}\left(\left\{ \max\left(\lambda_{i}^{k},0\right)\right\} _{i=1}^{m+n}\right)$.

\item Fix variables $\mathbf{Z}$ and $\mathbf{Y}$ and update $\mathbf{L}$, $\mathbf{S}$ and $\mathbf{\mu}$.
Instead of minimizing $\mathcal{L}_{\rho}(\mathbf{Z}^{k+1},\mathbf{L},\mathbf{S},\mathbf{\mu};\mathbf{Y}^{k})$
directly, we construct a surrogate function $\widehat{\mathcal{L}_{\rho}}(\mathbf{Z}^{k+1},\mathbf{L},\mathbf{S},\mathbf{\mu};\mathbf{Y}^{k})$
by linearizing $\mathcal{L}_{\rho}(\mathbf{Z}^{k+1},\mathbf{L},\mathbf{S},\mathbf{\mu};\mathbf{Y}^{k})$,
and then minimize $\widehat{\mathcal{L}_{\rho}}(\mathbf{Z}^{k+1},\mathbf{L},\mathbf{S},\mathbf{\mu};\mathbf{Y}^{k})$
in $\left(\mathbf{L},\mathbf{S},\mathbf{\mu}\right)$-direction.
This gives
\[
\left(\begin{array}{c}
\mathbf{L}^{k+1}\\
\mathbf{S}^{k+1}\\
\mathbf{\mu}^{k+1}
\end{array}\right)\;=\;\mbox{argmin}_{\left(\mathbf{L},\mathbf{S},\mathbf{\mu}\right)}\,\widehat{\mathcal{L}_{\rho}}\left(\mathbf{Z}^{k+1},\mathbf{L},\mathbf{S},\mathbf{\mu};\mathbf{Y}^{k}\right) \;=\; \left(\begin{array}{c}
\mbox{argmin}_{\mathbf{L}}\left(\|\mathbf{L}\|_{*}+\frac{\rho}{2\tau}\|\mathbf{L}-\mathbf{F}^{k}\|_{F}^{2}\right)\\
\mbox{argmin}_{\mathbf{S}}\left(\lambda\|\mathbf{S}\|_{1}+\frac{\rho}{2\tau}\|\mathbf{S}-\mathbf{G}^{k}\|_{F}^{2}\right)\\
\mbox{argmin}_{\mathbf{\mu}}\left(\mathcal{I}\left(\mathbf{\mu}\ge\mathbf{0}\right)+\frac{\rho}{2\tau}\|\mathbf{\mu}-\mathbf{K}^{k}\|_{F}^{2}\right)
\end{array}\right)
\]
where
\begin{align*}
\mathbf{F}^{k} & :=\mathbf{L}^{k}+2\tau\left(\mathbf{Z}_{12}^{k+1}-\mathbf{L}^{k}-\mathbf{S}^{k}+\bar{\mb A}+\mathbf{Y}_{12}^{k}/\rho\right)\\
\mathbf{G}^{k} & :=\mathbf{S}^{k}+2\tau\left(\mathbf{Z}_{12}^{k+1}-\mathbf{L}^{k}-\mathbf{S}^{k}+\bar{\mb A}+\mathbf{Y}_{12}^{k}/\rho\right)\\
\mathbf{K}^{k} & :=\mathbf{\mu}^{k}-\tau\left(\mathbf{Z}_{22}^{k+1}- \bar\gamma\bar{\mb A}^{T}\bar{\mb A}+\mathbf{\mu}+\mathbf{Y}_{22}^{k}/\rho\right)
\end{align*}
and $\tau\ge0$ is a given step size to be discussed later. The advantage of this linearization is that the sub-minimizations over $\mb L$, $\mb S$, and $\mb \mu$ have efficient, closed-form solutions.
Let $\mathcal{S}_{\theta}:\reals\to\reals$ denote the shrinkage
operator
\[
\mathcal{S}_{\theta}\left(x\right)=\mbox{sgn}\left(x\right)\max\left(|x|-\theta,0\right)
\]
and extend it to matrices by applying it componentwise. It is easy
to show that $\mathbf{S}^{k+1}=\mathcal{S}_{\tau\lambda/\rho}(\mathbf{G}^{k})$.
Similarly, for any matrix $\mathbf{X}$, denote $\mathcal{D}_{\theta}$
as the singular value thresholding operator $\mathcal{D}_{\theta}\left(\mathbf{X}\right)=\mathbf{U}\mathcal{S}_{\theta}\left(\mathbf{\Sigma}\right)\mathbf{V}^{\star}$
where $\mathbf{X}=\mathbf{U}\mathbf{\Sigma}\mathbf{V}^{\star}$
is any singular value decomposition. It is not difficult to show that
$\mathbf{L}^{k+1}=\mathcal{D}_{\tau/\rho}(\mathbf{F}^{k})$.
For $\mathbf{\mu}^{k+1}$ , we simply have $\mathbf{\mu}^{k+1}=(\mathbf{K}^{k})_{+}=[\max(\mathbf{K}_{ij}^{k},0)]_{ij}$.

\item Update the dual multiplier
\[
\mathbf{Y}^{k+1}=\mathbf{Y}^{k}+\rho\left(\mathbf{Z}^{k+1}-\left[\begin{array}{cc}
\mathbf{I} & \mathbf{L}^{k+1}+\mathbf{S}^{k+1}-\bar{\mb A}\\
\left(\mathbf{L}^{k+1}+\mathbf{S}^{k+1}-\bar{\mb A}\right)^{T} &  \bar\gamma\bar{\mb A}^{T}\bar{\mb A}-\mathbf{\mu}^{k+1}
\end{array}\right]\right)
\]
\end{enumerate}

Putting these results together,  we obtain an L-ADMM iterates as follows:

\begin{algorithm}[h]
\textbf{Step 1. }Generate $\mathbf{Z}^{k+1}$:
\[
\mathbf{Z}^{k+1}=\mathbf{Q}^{k}\left(\mathbf{\Lambda}^{k}\right)_{+}\left(\mathbf{Q}^{k}\right)^{T}.
\]

\textbf{Step 2. }Generate $\mathbf{L}^{k+1}$, $\mathbf{S}^{k+1}$
and $\mathbf{\mu}^{k+1}$:
\[
\begin{cases}
\mathbf{L}^{k+1}=\mathcal{D}_{\tau/\rho}\left(\mathbf{F}^{k}\right)\\
\mathbf{S}^{k+1}=\mathcal{S}_{\tau\lambda/\rho}\left(\mathbf{G}^{k}\right)\\
\mathbf{\mu}^{k+1}=(\mathbf{K}^{k})_{+}
\end{cases}
\]

\textbf{Step 3. }Update the multiplier $\mathbf{Y}^{k+1}$:
\[
\mathbf{Y}^{k+1}=\mathbf{Y}^{k}+\rho\left(\mathbf{Z}^{k+1}-\left[\begin{array}{cc}
\mathbf{I} & \mathbf{L}^{k+1}+\mathbf{S}^{k+1}-\bar{\mb A}\\
\left(\mathbf{L}^{k+1}+\mathbf{S}^{k+1}-\bar{\mb A}\right)^{T} &  \bar\gamma\bar{\mb A}^{T}\bar{\mb A}-\mathbf{\mu}^{k+1}
\end{array}\right]\right).
\]

\caption{\label{alg:Linearized-ADMM}L-ADMM for \eqref{eq:Linearized_ADMM-1} }
\end{algorithm}

It can be shown that, with a proper choice of $\tau$, our L-ADMM algorithm converges globally with rate $O\left(1/k\right)$. The proper $\tau$ is dictated by the following lemma, which bounds the norm of the operator in the linear constraint in \eqref{eq:Linearized_ADMM-1}:

\begin{lemma} \label{lem:G-op} Let $\mc G : \reals^{m\times n}\times \reals^{m\times n}\times\reals^{n\times n}\to\reals^{\left(m+n\right)\times\left(m+n\right)}$
such that
\[
\mc G\left(\mathbf{L},\mathbf{S},\mathbf{\mu}\right):=\left[\begin{array}{cc}
\mathbf{0} & \mathbf{L}+\mathbf{S}\\
\left(\mathbf{L}+\mathbf{S}\right)^T & -\mathbf{\mu}
\end{array}\right].
\]
Then we have operator norm $\|\mc G\|=2$.
\end{lemma}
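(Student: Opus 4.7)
The plan is straightforward: the operator norm is taken with respect to the Frobenius norms on the product space $\reals^{m\times n}\times\reals^{m\times n}\times\reals^{n\times n}$ (source) and $\reals^{(m+n)\times(m+n)}$ (target), and the claim reduces to a simple inequality plus an exhibited maximizer.

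First, I would compute the target norm directly from the block structure. Since the off-diagonal blocks are $\mb L+\mb S$ and $(\mb L+\mb S)^T$ and the lower-right block is $-\mb \mu$, and these blocks occupy disjoint coordinates,
\[
\|\mc G(\mb L,\mb S,\mb \mu)\|_F^2 \;=\; 2\,\|\mb L+\mb S\|_F^2 \,+\, \|\mb \mu\|_F^2.
\]
Combining the parallelogram bound $\|\mb L+\mb S\|_F^2 \le 2(\|\mb L\|_F^2+\|\mb S\|_F^2)$ with the above gives
\[
\|\mc G(\mb L,\mb S,\mb \mu)\|_F^2 \;\le\; 4\bigl(\|\mb L\|_F^2+\|\mb S\|_F^2\bigr) + \|\mb \mu\|_F^2 \;\le\; 4\bigl(\|\mb L\|_F^2+\|\mb S\|_F^2+\|\mb \mu\|_F^2\bigr),
\]
hence $\|\mc G\|\le 2$.

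For the matching lower bound, I would simply exhibit an extremal input. Taking $\mb \mu = \mb 0$ and $\mb L = \mb S$ equal to any nonzero matrix in $\reals^{m\times n}$ yields $\|\mc G(\mb L,\mb S,\mb 0)\|_F^2 = 2\|2\mb L\|_F^2 = 8\|\mb L\|_F^2$ while the input norm squared is $2\|\mb L\|_F^2$, giving a ratio of $2$. Thus $\|\mc G\|=2$. There is no real obstacle here: the only thing to be careful about is matching the right block accounting in the first step (remembering that the off-diagonal blocks appear twice in the Frobenius norm), and making clear which inner product on the product space we are using to define the operator norm, since L-ADMM step-size selection requires exactly this quantity.
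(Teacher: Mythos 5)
Your proof is correct and takes essentially the same route as the paper: both rest on the identity $\|\mc G(\mb L,\mb S,\mb\mu)\|_F^2 = 2\|\mb L+\mb S\|_F^2 + \|\mb\mu\|_F^2$, the parallelogram bound $\|\mb L+\mb S\|_F^2 \le 2(\|\mb L\|_F^2+\|\mb S\|_F^2)$ for the upper bound, and the extremal choice $\mb L=\mb S$, $\mb\mu=\mb 0$ for the matching lower bound. The only cosmetic difference is that the paper names the two block maps $\mc G_1,\mc G_2$ and computes their norms separately before combining; your version inlines that computation.
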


\begin{proof} Set
$\mc G_{1}\left(\mathbf{L+S}\right)\doteq\left[\begin{array}{cc}
\mathbf{0} & \mathbf{L+S}\\
\left(\mathbf{L+S}\right)^T & \mathbf{0}
\end{array}\right]$ and $\mc G_{2}\left(\mathbf{\mu}\right)\doteq\left[\begin{array}{cc}
\mathbf{0} & \mathbf{0}\\
\mathbf{0} & -\mathbf{\mu}
\end{array}\right]$. It can be easily verified that operator norms $\|\mc G_{1}\|=2$
and $\|\mc G_{2}\|=1$. Then we have
\begin{eqnarray*}
 & \|\mc G\|^{2} & =\sup_{\|\mathbf{\mu}\|_{F}^{2}+\|\left(\mathbf{L},\mathbf{S}\right)\|_{F}^{2}=1}\left\|\left[\begin{array}{cc}
\mathbf{0} & \mathbf{L+S}\\
\left(\mathbf{L+S}\right)^T & -\mathbf{\mu}
\end{array}\right]\right\|_{F}^{2}\\
 &  & =\sup_{\|\mathbf{\mu}\|_{F}^{2}+\|\left(\mathbf{L},\mathbf{S}\right)\|_{F}^{2}=1}\left(\left\|\left[\begin{array}{cc}
\mathbf{0} & \mathbf{L+S}\\
\left(\mathbf{L+S}\right)^T & \mathbf{0}
\end{array}\right]\right\|_{F}^{2}+\left\|\left[\begin{array}{cc}
\mathbf{0} & \mathbf{0}\\
\mathbf{0} & -\mathbf{\mu}
\end{array}\right] \right\|_{F}^{2}\right)\\
 &  & =\sup_{\|\mathbf{\mu}\|_{F}^{2}+\|\left(\mathbf{L},\mathbf{S}\right)\|_{F}^{2}=1}\left(\|\mc G_{1}\|^{2}\|\left(\mathbf{L},\mathbf{S}\right)\|_{F}^{2}+\|\mc G_{2}\|^{2}\|\mathbf{\mu}\|_{F}^{2}\right)\\
 &  & =\max\left(\|\mc G_{1}\|^{2},\|\mc G_{2}\|^{2}\right)\\
 &  & =4,
\end{eqnarray*}
\end{proof}
Combining it with convergence results from Appendix A of \cite{ADMM_L_MA}, and results on convergence rate from \cite{he20121} (Theorem 4.1), we obtain the following convergence guarantee for our algorithm:

\begin{theorem}(Convergence Results) Suppose $0<\tau<0.25$. Then
the sequence $\left\{ \left(\mathbf{Z}^{k},\mathbf{L}^{k},\mathbf{S}^{k},\mathbf{\mu}^{k}\right)\right\} $
produced by Alg-\ref{alg:Linearized-ADMM} from any starting point
converges to an optimal solution with rate $O\left(1/k\right)$.\end{theorem}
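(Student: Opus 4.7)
The plan is to invoke the general convergence theory for L-ADMM developed in \cite{zhang2010bregmanized, ADMM_L_MA, he20121}, and verify that the reformulation \eqref{eq:Linearized_ADMM-1} together with the chosen step size $\tau \in (0, 0.25)$ satisfies the hypotheses of that theory. All three subproblems in Alg-\ref{alg:Linearized-ADMM} are exactly solvable (via eigenvalue thresholding, singular value thresholding, elementwise soft-thresholding, and projection onto $\reals_+$), so the only real content of the theorem is that the sequence of iterates converges, and does so at rate $O(1/k)$ in an ergodic sense.

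The first step is to recast \eqref{eq:Linearized_ADMM-1} in the canonical two-block form required by the L-ADMM framework: one block consists of $\mathbf{Z}$ with indicator $\mc I(\mathbf{Z} \succeq 0)$, and the other block consists of $(\mathbf{L}, \mathbf{S}, \mathbf{\mu})$ with objective $\|\mathbf{L}\|_* + \lambda \|\mathbf{S}\|_1 + \mc I(\mathbf{\mu} \ge 0)$, coupled by the linear operator
\[
(\mathbf{L},\mathbf{S},\mathbf{\mu}) \;\mapsto\; \mc G(\mathbf{L},\mathbf{S},\mathbf{\mu}) \;=\; \left[\begin{array}{cc} \mathbf{0} & \mathbf{L}+\mathbf{S} \\ (\mathbf{L}+\mathbf{S})^T & -\mathbf{\mu} \end{array}\right].
\]
Each of the four functions appearing in the objective is proper, convex, and lower semicontinuous, so the overall problem is a convex program. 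The linearization in Step 2 is applied precisely to the quadratic penalty term $\tfrac{\rho}{2} \|\cdot\|_F^2$, which is exactly the setting handled in \cite{ADMM_L_MA}.

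The key step is to identify the precise step-size condition required for convergence. The L-ADMM theory (see Appendix A of \cite{ADMM_L_MA}) asks that $\tau \cdot \|\mc G\|^2 < 1$, so that the linearized proximal update is a contraction in the appropriate metric. By Lemma \ref{lem:G-op}, $\|\mc G\| = 2$, which immediately yields the condition $\tau < 1/4 = 0.25$ stated in the theorem. I would also check that a saddle point exists: the Slater-type feasibility already verified in the proof of Theorem \ref{thm:cp-lrsd} (exhibiting an interior feasible $\mathbf{X}$ for the lifted SDP) shows that strong duality holds for \eqref{eq:Linearized_ADMM-1}, and hence a primal-dual optimal pair exists.

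With these pieces in place, global convergence of $\{(\mathbf{Z}^k, \mathbf{L}^k, \mathbf{S}^k, \mathbf{\mu}^k)\}$ to an optimal solution follows from the main convergence theorem of \cite{ADMM_L_MA}, and the $O(1/k)$ ergodic rate follows from Theorem 4.1 of \cite{he20121}. The main obstacle is not analytical but bookkeeping: carefully matching our variable partition and linearization to the notation of those references, and confirming that all subdifferentials are nonempty on the relative interior of the feasible set so that the first-order optimality conditions used in their proofs apply verbatim.
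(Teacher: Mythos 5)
Your proof matches the paper's argument: both reduce the theorem to Lemma~\ref{lem:G-op} (computing $\|\mc G\| = 2$, hence $\tau \|\mc G\|^2 < 1$ iff $\tau < 0.25$) plus the external convergence and $O(1/k)$ rate results from Appendix~A of~\cite{ADMM_L_MA} and Theorem~4.1 of~\cite{he20121}. You additionally spell out the two-block decomposition and the saddle-point existence via the Slater check from Theorem~\ref{thm:cp-lrsd}, which the paper leaves implicit, but the route is the same.
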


\end{document}